\newcommand{\wg}[1]{{\color{black}#1}}
\definecolor{olivegreen}{rgb}{0.203, 0.450, 0.047}
\icmltitlerunning{Active Slices for Sliced Stein Discrepancy }
\newtheorem{definition}{Definition}[section]
\newtheorem{theorem}{Theorem}
\newtheorem{prop}{Proposition}
\newtheorem{corollary}{Corollary}[theorem]
\newtheorem{lemma}{Lemma}
\begin{document}

\twocolumn[
\icmltitle{Active Slices for Sliced Stein Discrepancy}




\begin{icmlauthorlist}
\icmlauthor{Wenbo Gong}{EngCAM}
\icmlauthor{Kaibo Zhang}{EngCAM}
\icmlauthor{Yingzhen Li}{IC}
\icmlauthor{Jos\'e Miguel Hern\'andez-Lobato}{EngCAM}
\end{icmlauthorlist}

\icmlaffiliation{EngCAM}{Department of Engineering, University of Cambridge, Cambridge, United Kingdom}
\icmlaffiliation{IC}{Department of Computing, Imperial College London, London, United Kingdom}

\icmlcorrespondingauthor{Wenbo Gong}{wg242@cam.ac.uk}
\icmlcorrespondingauthor{Jos\'e Miguel Hern\'andez-Lobato}{jmh233@cam.ac.uk}

\icmlkeywords{Machine Learning, ICML}

\vskip 0.3in
]
\printAffiliationsAndNotice{}

\begin{abstract}

Sliced Stein discrepancy (SSD) and its kernelized variants have demonstrated promising successes in goodness-of-fit tests and model learning in high dimensions. Despite their theoretical elegance, their empirical performance depends crucially on the search of optimal slicing directions to discriminate between two distributions. Unfortunately, previous gradient-based optimisation approaches for this task return sub-optimal results: they are computationally expensive, sensitive to initialization, and they lack theoretical guarantees for convergence. We address these issues in two steps. First, we provide theoretical results stating that the requirement of using optimal slicing directions in the kernelized version of SSD can be relaxed, validating the resulting discrepancy with \emph{finite} random slicing directions. Second, given that good slicing directions are crucial for practical performance, we propose a fast algorithm for finding such slicing directions based on ideas of active sub-space construction and spectral decomposition. Experiments on goodness-of-fit tests and model learning show that our approach achieves both improved performance and faster convergence. Especially, we demonstrate a 14-80x speed-up in goodness-of-fit tests when comparing with gradient-based alternatives.

\end{abstract}

\section{Introduction}
\label{sec: Introduction}


Discrepancy measures between two distributions are critical tools in modern statistical machine learning. Among them,  \textit{Stein discrepancy} (SD) and its kernelized version, kernelized Stein discrepancy (KSD), have been extensively used for \textit{goodness-of-fit} (GOF) testing \citep{liu2016kernelized,chwialkowski2016kernel,huggins2018random,jitkrittum2017linear,gorham2017measuring} and model learning \citep{liu2016stein,pu2017vae,hu2018stein,grathwohl2020cutting}. Despite their recent success, applications of Stein discrepancies to high-dimensional distribution testing and learning remains an unsolved challenge. 

These ``curse of dimensionality'' issues have been recently addressed by the newly proposed Sliced Stein discrepancy (SSD) and its kernelized variants SKSD \cite{gong2021sliced}, which have demonstrated promising results in both high dimensional GOF tests and model learning. They work by first projecting the score function and the test inputs across two slice directions $\bm{r}$ and $\bm{g}_r$ and then comparing the two distributions using the resulting one dimensional slices. The performance of SSD and SKSD crucially depends on choosing  slicing directions that are highly discriminative. Indeed, \citet{gong2021sliced} showed that such discrepancy can still be valid 
despite the information loss caused by the projections,
if \emph{optimal slices} -- directions along which the two distributions differ the most -- are used.
Unfortunately, gradient-based optimization for searching such optimal slices often suffers from slow convergence and sub-optimal solutions. In practice, many gradient updates may be required to obtain a reasonable set of slice directions \citep{gong2021sliced}. 

We aim to tackle the above practical challenges by proposing an efficient algorithm to find \emph{good} slice directions with statistical guarantees. Our contributions are as follows:

\vspace{-1em}
\begin{itemize}
\setlength\itemsep{0em}
    
    \item We propose a computationally efficient variant of SKSD using a \emph{finite number of random slices}. This relaxes the restrictive constraint of having to use  \emph{optimal} slices, with the consequence that convergence during optimisation to a global optimum is no longer required.
    
    \item Given that \emph{good} slices are still preferred in practice, we propose surrogate optimization tasks to find such directions. These are called \emph{active slices} and have analytic solutions that can be computed very efficiently. 

    \item Experiments on GOF test benchmarks (including testing on restricted Boltzmann machines)
    show that our algorithm outperforms alternative gradient-based approaches 
    while achieving at least a 14x speed-up.
    
    \item In the task of learning high dimensional \textit{independent component analysis} (ICA) models
    \citep{comon1994independent}, our algorithm converges much faster and to significantly better solutions than other baselines.
\end{itemize}
\paragraph{Road map:} First, we give a brief background for \textit{SD}, \textit{SKSD} and its relevant variants (Section \ref{sec: Background}).
\wg{Next, we show that the optimality of slices are not necessary. Instead, finite random slices are enough to ensure the validity of SKSD (\ref{subsec: validity of active slices}).} \wg{Despite that relaxing the optimality constraint gives us huge freedom to select slice directions, choosing an appropriate objective for finding slices is still crucial. Unfortunately, analysing SKSD in RKHS is challenging.}
\wg{We thus propose to analyse SSD as a surrogate objective by showing SKSD can be well approximated by SSD (Section \ref{subsec: Relationship between SSD and SKSD})}. Lastly, by analyzing SSD, we propose algorithms to find active slices for SKSD (Sections \ref{sec: Active slice direction g}, \ref{sec: active slice direction r}, \ref{sec: In practice}), and demonstrate the efficacy of our proposal in the experiments (Section \ref{sec: Experiments}). Assumptions and proofs of  theoretical results as well as the experimental settings can be found in the appendix. 


\section{Background}
\label{sec: Background}

For a distributions $p$ on $\mathcal{X} \subset \mathbb{R}^D$ with differentiable density, we define its score function as $\pmb{s}_p(\pmb{x}) = \nabla_{\pmb{x}} \log p(\pmb{x})$. We also define the Stein operator $\mathcal{A}_p$ for distribution $p$ as 
\begin{equation}
    \mathcal{A}_p \bm{f}(\pmb{x}) = \pmb{s}_p(\pmb{x})^T \bm{f}(\pmb{x}) + \nabla_{\pmb{x}}^T \bm{f}(\pmb{x})  \,,
    \label{eq: Stein operator}
\end{equation}
where $\bm{f}: \mathcal{X} \rightarrow \mathbb{R}^D$ is a test function. 
%
Then the \textit{Stein discrepancy} (SD) \citep{gorham2015sd} between two distributions $p, q$ with differentiable densities on $\mathcal{X}$ is 
\begin{equation}
    \begin{split}
        D_{SD}(q,p) &= \sup_{\bm{f} \in \mathcal{F}_q} \mathbb{E}_q[\mathcal{A}_p \bm{f}(\pmb{x})]\,,
    \end{split}
    \label{eq: SD}
\end{equation}
where $\mathcal{F}_q$ is the Stein's class of $q$ that contains test functions satisfying $\mathbb{E}_q[\mathcal{A}_q \bm{f}(\pmb{x})] = 0$ (also see Definition \ref{def: Stein class} in appendix \textcolor{red}{\ref{appendix: definition and assumption}}).
The supremum can be obtained by choosing $\bm{f^*} \propto \pmb{s}_p(\pmb{x}) - \pmb{s}_q(\pmb{x})$ if $\mathcal{F}_q$ is rich \citep{hu2018stein}.

\citet{chwialkowski2016kernel, liu2016kernelized} further restricts the test function space $\mathcal{F}_q$ to be a unit ball in an RKHS induced by a $c_0-$universal kernel $k:\mathcal{X}\times\mathcal{X}\rightarrow \mathbb{R}$. This results in the 
\textit{kernelized Stein discrepancy} (KSD), \wg{which can be computed analytically}:
\begin{equation}
    \begin{split}
        D^2(q,p)
        &= \left( \sup_{\bm{f} \in \mathcal{H}_k, ||f||_{\mathcal{H}_k} \leq 1} \mathbb{E}_q[\mathcal{A}_p \bm{f}(\pmb{x})] \right)^2 \\
        &= ||\mathbb{E}_{q} [\bm{s}_p(\bm{x})k(\bm{x},\cdot)+\nabla_{\bm{x}}k(\bm{x},\cdot)]||^2_{\mathcal{H}_k}\,,
    \end{split}
\end{equation}
where $\mathcal{H}_k$ is the $k$ induced RKHS with norm $||\cdot||_{\mathcal{H}_k}$.

\subsection{Sliced kernelized Stein discrepancy}
Despite the theoretical elegance of \textit{KSD}, it often suffers from the curse-of-dimensionality in practice. To address this issue, \citet{gong2021sliced} proposed a divergence family called \textit{sliced Stein discrepancy} (SSD) and its kernelized variants, \wg{under mild assumptions on the regularity of probability densities (\textbf{Assumptions 1-4} in appendix \ref{appendix: definition and assumption}) and the richness of the kernel (\textbf{Assumption 5} in appendix \ref{appendix: definition and assumption})}. The key idea is to compare the distributions on their one dimensional slices by projecting the score $\bm{s}_p$ and test input $\bm{x}$ with two directions $\bm{r}$ and its corresponding $\bm{g}_r$, respectively. Readers are referred to appendix \ref{appendix: Detailed Background} for details. \wg{Despite that one cannot access all the information possessed by $\bm{s}_p$ and $\bm{x}$ due to the projections, the validity of the discrepancy can be ensured by using an orthogonal basis for $\bm{r}$ along with the corresponding most discriminative $\bm{g}_r$ directions.} 
The resulting valid discrepancy is called \textit{maxSSD-g}, which uses a set of orthogonal basis $\bm{r}\in O_r$ and their corresponding optimal $\bm{g}_r$ directions:
\begin{equation}
\begin{split}
    S_{\text{max}_{g_r}}(q,p)&=\sum_{\bm{r}\in O_r}{\sup_{\substack{h_{rg_r}\in\mathcal{F}_q\\\bm{g}_r\in\mathbb{S}^{D-1}}}{\mathbb{E}_q[{s}^r_p(\bm{x})h_{rg_r}(\bm{x}^T\bm{g}_r)+}}\\
    &{{\bm{r}^T\bm{g}_r\nabla_{\bm{x}^T\bm{g}_r}h_{rg_r}(\bm{x}^T\bm{g}_r)]}}\,,
    \label{eq: maxSSD-g}
\end{split}
\end{equation}
where $h_{rg_r}:\mathcal{K}\subseteq\mathbb{R}\rightarrow\mathbb{R}$ is the test function,  $\mathbb{S}^{D-1}$ is the $D$-dimensional unit sphere and $s_p^r(\bm{x})=\bm{s}_p(\bm{x})^T\bm{r}$ is the projected score function. Under certain scenarios \citep{gong2021sliced}, i.e. GOF test, one can further improve the performance of \textit{maxSSD-g} by replacing 
$\sum_{\bm{r}\in O_r}$ with the optimal $\sup_{\bm{r}\in\mathbb{S}^{D-1}}$ in Eq.\ref{eq: maxSSD-g}, resulting in another variant called \textit{maxSSD-rg} ($S_{\text{max}_{rg_r}}$). This increment in performance is due to the higher discriminative power provided by the optimal $\bm{r}$.  

However, the optimal test functions $h^*_{rg_r}$ in \emph{maxSSD-g} (or \emph{-rg}) are intractable in practice. \citet{gong2021sliced} further proposed kernelized variants to address this issue by letting $\mathcal{F}_q$ to be in a unit ball of an RKHS induced by a $c_0-$universal kernel $k_{rg_r}$. With
\begin{equation}
\begin{split}
    \xi_{p,r,g_r}(\bm{x},\cdot)&=s_p^r(\bm{x})k_{rg_r}(\bm{x}^T\bm{g}_r,\cdot)+\\
    &\quad\bm{r}^T\bm{g}_r\nabla_{\bm{x}^T\bm{g}_r}k_{rg_r}(\bm{x}^T\bm{g}_r,\cdot)\,,
\end{split}
    \label{eq: SKSD test function}
\end{equation}
the \textit{maxSKSD-g} (the kernelized version of maxSSD-g) is 
\begin{equation}
    SK_{\text{max}_{g_r}}(q,p)=\sum_{\bm{r}\in O_r}{\sup_{\bm{g}_r\in\mathbb{S}^{D-1}}{||\mathbb{E}_q[\xi_{p,r,g_r}(\bm{x})]||^2_{\mathcal{H}_{rg_r}}}}\,,
    \label{eq: maxSKSD-g}
\end{equation}
where $\mathcal{H}_{rg_r}$ is the RKHS induced by $k_{rg_r}$ with the associated norm $||\cdot||_{\mathcal{H}_{rg_r}}$. Similarly, a kernelized version of maxSSD-rg, denoted by \textit{maxSKSD-rg} ($SK_{\text{max}_{rg_r}}$), is obtained by replacing 
$\sum_{\bm{r}\in O_r}$ with $\sup_{\bm{r}\in\mathbb{S}^{D-1}}$ in Eq.\ref{eq: maxSKSD-g}.

Despite that \emph{maxSKSD-g} (or \emph{-rg}) addresses the tractability of test functions, the practical challenge of computing them is the computation of the optimal slice directions $\bm{r}$ and $\bm{g}_r$. Gradient-based optimization \citep{gong2021sliced} for such computation suffers from slow convergence; even worse, it is sensitive to initialization and returns sub-optimal solutions only. In such case, it is unclear whether the resulting discrepancy is still valid, making the correctness of GOF test unverified. Therefore, the first important question to ask is: are the optimality of slices a necessary condition for the validity of \emph{maxSKSD-g} (or \emph{-rg})? Remarkably, we show that the answer is \textbf{No} with mild assumptions on the kernel (\textbf{Assumption 5-6} in appendix \ref{appendix: definition and assumption}). 

\wg{As the sliced Stein discrepancy defined previously involves a $\sup$ operator, making them difficult to analyze, we need to define notations for their ``sub-optimal" versions. For example, \emph{maxSSD-g} (Eq.\ref{eq: maxSSD-g}) involves a $\sup$ operator over slices $\bm{g}_r$. We thus define \emph{SSD-g} ($S_{g_r}$) as Eq.\ref{eq: maxSSD-g} with a given $\bm{g}_r$ instead of the $\sup$:
\begin{equation}
\begin{split}
        S_{g_r} &= \sum_{\bm{r}\in O_r}\sup_{h_{rg_r}\in\mathcal{F}_q}\mathbb{E}_{q}[s_p^r(\bm{x})h_{rg_r}(\bm{x}^T\bm{g}_r)+\\
        &\bm{r}^T\bm{g}_r\nabla_{\bm{x}^T\bm{g}_r}h_{rg_r}(\bm{x}^T\bm{g}_r)]
\end{split}
\label{eq: SSD-g}
\end{equation}
Following similar logic, we define the ``sub-optimal" version for each of the discrepancy mentioned in this section as table \ref{tab: notations} and appendix \ref{appendix: Notations}.
\begin{table*}[!h]
\centering
\caption{Notations for ``sub-optimal" versions of SSD \& SKSD.  }
\label{tab: notations}
\begin{tabular}{l|llll}
\toprule
Optimal form & \emph{maxSSD-g} ($S_{\text{max}_{g_r}}$)                             & \emph{maxSSD-rg} ($S_{\text{max}_{rg_r}}$)                                          & \emph{maxSKSD-g} ($SK_{\text{max}_{g_r}}$) & \emph{maxSKSD-rg} ($SK_{\text{max}_{rg_r}}$) \\
\midrule
Modifications            & Change $\sup_{\bm{g}_r}$  & Change $\sup_{\bm{r},\bm{g_r}}$ to given & Same as \emph{maxSSD-g}.             & Same as \emph{maxSSD-rg}             \\
&to given $\bm{g}_r$ in Eq.\ref{eq: maxSSD-g}& $\bm{r}$, $\bm{g}_r$ in Eq.\ref{eq: maxSSD-rg} (App. \ref{appendix: Detailed Background})&in Eq.\ref{eq: maxSKSD-g}&in Eq.\ref{eq: maxSKSD-rg} (App. \ref{appendix: Detailed Background}) \\
\midrule
``sub-optimum"  & \emph{SSD-g} ($S_{g_r}$)                                            & \emph{SSD-rg} ($S_{rg_r}$)                                                          & \emph{SKSD-g} ($SK_{g_r}$)                 & \emph{SKSD-rg} ($SK_{rg_r}$)                \\ \bottomrule
\end{tabular}
\vspace{-1em}
\end{table*}}
\begin{figure}[t]
    \centering
    \includegraphics[scale=0.9]{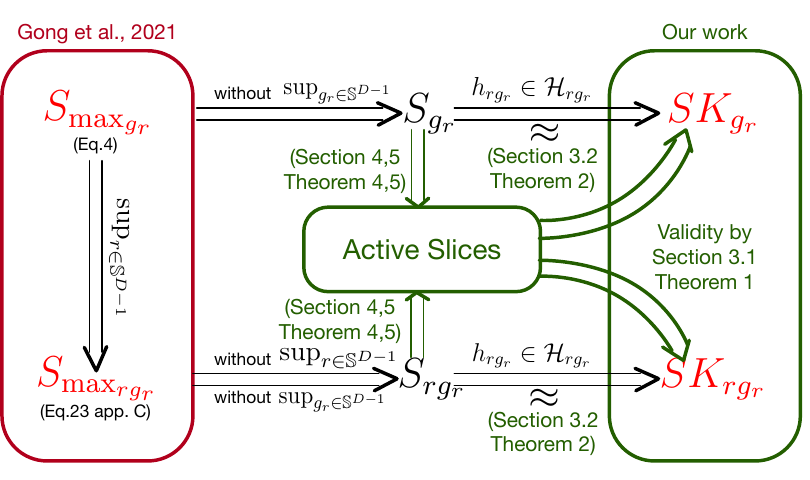}
    \caption{The relationship between different SSD discrepancies, where \textcolor{olivegreen}{green texts indicate our contributions}, \textcolor{red}{red symbols indicate valid discrepancies} and $\mathcal{H}_{rg_r}$ is the RKHS induced by kernel $k_{rg_r}$. The \textcolor{red}{leftmost part} are the discrepancies proposed by \citet{gong2021sliced}, whereas the \textcolor{olivegreen}{rightmost part + central ``Active Slices"} are our contributions. The arrows $\Rightarrow$ indicate   the connections between \citet{gong2021sliced} and our work. }
    \label{fig: divergence relationship}
\end{figure}

\section{Relaxing constraints for the SKSD family}
\label{sec: relaxing constraints for kernelized SSD family}


\subsection{Is optimality necessary for validity?}
\label{subsec: validity of active slices}
As mentioned before, the discrepancy validity of {max SKSD} requires the optimality of slice directions, which restricts its application in practice. 
In the following, we show that these restrictions can be much relaxed with mild assumptions on the kernel. All proofs can be found in Appendix \ref{appendix: validity of maxSKSD with active slices}.

The key idea is to use kernels such that the corresponding term $SK_{rg_r}$ is \emph{real analytic} w.r.t.~both $\bm{r}$ and $\bm{g}_r$, \wg{which is detailed by \textbf{Assumption 6} (Appendix \ref{appendix: definition and assumption}).} A nice property of any real analytic function is that, unless it is a constant function, otherwise the set of its roots has zero Lebesgue measure. This means the possible valid slices are almost everywhere in $\mathbb{R}^D$, giving us huge freedom to choose slices without worrying about violating validity.
\begin{theorem}[Conditions for valid slices]
Assuming assumptions {\hyperref[assumption 1]{1-4} (density regularity), \hyperref[assumption 5]{5} (richness of kernel) and \hyperref[assumption 6]{6} (real analytic kernel) in Appendix \ref{appendix: definition and assumption}}, let {$\bm{g}_r\sim \eta_g$} for each $\bm{r}\sim \eta_r$, where $\eta_g$, $\eta_r$ are distributions on $\mathbb{R}^D$ with a density, then {$SK_{{rg_r}}(q,p)=0$} iff.~$p=q$ almost surely.
\label{thm: validity of maxSKSD}
\end{theorem}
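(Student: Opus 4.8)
The plan is to reduce the statement about $SK_{rg_r}(q,p) = 0$ to a statement about a single slice $(\bm{r}, \bm{g}_r)$ behaving like a real analytic function, and then invoke the measure-zero-roots property of real analytic functions twice — once for $\bm{g}_r$ and once for $\bm{r}$. First I would recall from \citet{gong2021sliced} (their validity result for \emph{maxSKSD-rg}) that $p = q$ iff $SK_{\text{max}_{rg_r}}(q,p) = 0$, and that the latter vanishes iff for \emph{every} $\bm{r} \in \mathbb{S}^{D-1}$ and \emph{every} $\bm{g}_r \in \mathbb{S}^{D-1}$ we have $\|\mathbb{E}_q[\xi_{p,r,g_r}(\bm{x},\cdot)]\|^2_{\mathcal{H}_{rg_r}} = 0$; the nontrivial direction (from $SK_{\text{max}_{rg_r}} = 0$ to $p=q$) uses Assumptions 1--4 and the richness Assumption 5. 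So the forward direction of the theorem ($p = q \Rightarrow SK_{rg_r} = 0$) is immediate, since each summand/term in $SK_{rg_r}$ is a special case of a term appearing in the $\sup$.

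The substantive direction is: if $SK_{rg_r}(q,p) = 0$ for $\bm{g}_r \sim \eta_g$, $\bm{r} \sim \eta_r$, then $p = q$ almost surely. I would argue by contraposition: suppose $p \neq q$. Then by the \citet{gong2021sliced} characterization there exist $\bm{r}^* , \bm{g}^*$ with $\|\mathbb{E}_q[\xi_{p,r^*,g^*}(\bm{x},\cdot)]\|^2_{\mathcal{H}_{r^*g^*}} > 0$; in particular the map $(\bm{r},\bm{g}_r) \mapsto SK_{rg_r}(q,p)$ (which by the definition of the sub-optimal quantity is, say, an average or sum of $\|\mathbb{E}_q[\xi_{p,r,g_r}]\|^2_{\mathcal{H}_{rg_r}}$ over the drawn directions) is not identically zero. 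Now I invoke Assumption 6: the kernels are chosen so that $(\bm{r}, \bm{g}_r) \mapsto \|\mathbb{E}_q[\xi_{p,r,g_r}(\bm{x},\cdot)]\|^2_{\mathcal{H}_{rg_r}}$ extends to a real analytic function of $(\bm{r}, \bm{g}_r)$ on an open neighborhood of $\mathbb{S}^{D-1} \times \mathbb{S}^{D-1}$ in $\mathbb{R}^D \times \mathbb{R}^D$ (this is where one must check that differentiating under the expectation and expanding the RKHS inner product preserves analyticity — e.g. for a Gaussian or inverse-multiquadric kernel). A real analytic function that is not identically zero has a zero set of Lebesgue measure zero. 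Hence the set of "bad" pairs $(\bm{r}, \bm{g}_r)$ for which the term vanishes is Lebesgue-null; since $\eta_r$ and $\eta_g$ are absolutely continuous (have densities), the probability of drawing such a pair is $0$. Therefore, with probability one, $SK_{rg_r}(q,p) > 0$, contradicting the hypothesis. This yields $p = q$ almost surely.

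To be careful I would spell out the fibered structure: for a fixed $\bm{r}$ at which the function $g \mapsto \|\mathbb{E}_q[\xi_{p,r,g}]\|^2$ is not identically zero, its root set in $\bm{g}$ is null by real analyticity in $\bm{g}$; and the set of $\bm{r}$ for which this fiber \emph{is} identically zero is itself either all of $\mathbb{S}^{D-1}$ (impossible, since $p\neq q$ gives a good $(\bm r^*,\bm g^*)$) or null, again by real analyticity in $\bm r$ — so one applies the measure-zero-roots lemma in a fibered/Fubini fashion across the two sphere factors. The main obstacle is not the measure-theoretic bookkeeping but verifying Assumption 6 in a usable form: one must confirm that, for the admissible kernel class, the composition of (i) the projection $\bm{x} \mapsto \bm{x}^T\bm{g}_r$, (ii) the score projection $\bm{s}_p(\bm{x})^T\bm{r}$, (iii) the derivative $\nabla_{\bm{x}^T\bm{g}_r}k_{rg_r}$ with respect to a point that itself depends analytically on $\bm{g}_r$, and (iv) the RKHS-norm quadratic form (a double expectation against $k_{rg_r}$), all combine into something still jointly real analytic in $(\bm{r},\bm{g}_r)$, with enough integrability to justify exchanging expectation and the analytic continuation. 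I expect the paper to discharge this via a lemma showing these operations preserve real analyticity and a dominated-convergence / local-boundedness argument for the interchange; I would structure the proof so that Theorem~\ref{thm: validity of maxSKSD} is a short corollary of that lemma plus the classical "real analytic $\Rightarrow$ null root set" fact plus the \citet{gong2021sliced} base characterization.
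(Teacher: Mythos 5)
Your proposal follows essentially the same route as the paper: establish real analyticity of $SK_{rg_r}$ in the slice directions (the paper's Proposition~\ref{prop: maxSKSD-g real analytic}, proved via composition of analytic maps, the translation-invariant kernel of Assumption~6, and Fubini to justify a power-series expansion under the expectation), then invoke the zero-set theorem for real analytic functions (Lemma~\ref{lemma: zero set theorem}) in exactly the fibered way you sketch, first over $\bm{g}_r$ and then over $\bm{r}$. One small technical note: the paper only establishes \emph{separate} real analyticity in $\bm{r}$ and in $\bm{g}_r$ and therefore uses the nested/Fubini argument you describe as the fallback, rather than the joint analyticity you first posit (which would be stronger and is not what Proposition~\ref{prop: maxSKSD-g real analytic} proves); also, the paper seeds the ``not identically zero'' fact from the validity of the \emph{integrated/orthogonal} SKSD of \citet{gong2021sliced} rather than from $SK_{\text{max}_{rg_r}}$, but both suffice.
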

The above theorem tells us that a \emph{finite} number of \emph{random} slices is enough to make $SK_{rg_r}$ valid without the need of using optimal slices (c.f.~$SK_{\text{max}_{rg_r}}$).
In practice, we often consider $\bm{r},\bm{g}_r\in\mathbb{S}^{D-1}$ instead of $\mathbb{R}^D$. Fortunately, one can easily transform arbitrary slices to $\mathbb{S}^{D-1}$ without violating the validity. For any $\bm{r},\bm{g}_r$, we (i) add Gaussian noises to them, and (2) re-normalize the noisy $\bm{r},\bm{g}_r$ to unit vectors. We refer to corollary \ref{coro: validity of maxSKSD normalized} in appendix \ref{appendix subsec: validity w.r.t g_r} for details.

\subsection{Relationship between SSD and SKSD}
\label{subsec: Relationship between SSD and SKSD}
Theorem \ref{thm: validity of maxSKSD} allows us to use random slices. However, it is still beneficial to find good ones in practice. Unfortunately, $SK_{rg_r}$ is not a suitable objective for finding good slice directions. This is because, unlike the test function in a general function space ($h_{rg_r}\in\mathcal{F}_q$), the optimal kernel test function ($\mathbb{E}_q[\xi_{p,r,g_r}(\bm{x},\cdot)]$) can not be easily {analyzed for finding good slices due to its restriction in RKHS}. 

Instead, we propose to use $S_{g}$ (or $S_{rg_r}$) as the optimization objective. To justify $S_{rg_r}$ as a good replacement for $SK_{g_r}$,
%
%
we show that $SK_{rg_r}$ approximates $S_{rg_r}$ arbitrarily well if the corresponding RKHS of the chosen kernel is dense in continuous function space. Similar results for $SK_g \approx S_g$ can be derived accordingly as the only difference between $S_{g_r}$ and $S_{rg_r}$ is the summation over orthogonal basis $O_r$. \wg{However, $S_{rg_r}$ still involves a $\sup$ operator over test functions $h_{rg_r}$, which hinders further analysis. To deal with this, we give an important proposition that are needed in almost every theoretical claims we made. This proposition characterises the optimal test functions for $S_{rg_r}$ (or $S_{g_r}$).}
\wg{\begin{prop}[Optimal test function given $\boldsymbol{r}, \boldsymbol{g}_r$]
Assume assumptions \hyperref[assumption 1]{1-4} (density regularity) and given directions $\boldsymbol{r}, \boldsymbol{g}_r$. Assume an arbitrary orthogonal matrix $\bm{G}_r=$ $\left[\boldsymbol{a}_{1}, \ldots, \boldsymbol{a}_{D}\right]^{T}$ where $\boldsymbol{a}_{i} \in \mathbb{S}^{D \times 1}$ and $\boldsymbol{a}_{d}=\boldsymbol{g}_r .$ Denote $\boldsymbol{x} \sim q$ and $\boldsymbol{y}=\boldsymbol{G}_r \boldsymbol{x}$ which is also a random variable with the induced distribution $q_{G_r}$. Then, the optimal test function for $S_{rg_r}$ is 
\begin{equation}
h_{rg_r}^{*}\left(\boldsymbol{x}^{T} \boldsymbol{g}_r\right) \propto \mathbb{E}_{q_ {G_r}\left(\boldsymbol{y}_{-d} \mid y_{d}\right)}\left[\left(s_{p}^{r}\left(\boldsymbol{G}_r^{-1} \boldsymbol{y}\right)-s_{q}^{r}\left(\boldsymbol{G}_r^{-1} \boldsymbol{y}\right)\right)\right]    
\label{eq: optimal test function maxSSD-g}
\end{equation}
where $y_{d}=\boldsymbol{x}^{T} \boldsymbol{g}_r$ and $\boldsymbol{y}_{-d}$ contains other $\boldsymbol{y}$ elements.
\label{prop: optimality of maxSSD-g}
\end{prop}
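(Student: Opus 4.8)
The plan is to collapse the infinite-dimensional variational problem defining $S_{rg_r}$ (Eq.\ref{eq: SSD-g}, with a single fixed $\bm{r}$) into a one-dimensional Hilbert-space optimisation that is solved by Cauchy--Schwarz / Riesz representation. Two ingredients drive the reduction: (i) a Stein identity for $q$ that simultaneously eliminates the derivative term $\bm{r}^T\bm{g}_r\nabla_{\bm{x}^T\bm{g}_r}h_{rg_r}$ and replaces $s_p^r$ by the difference $s_p^r - s_q^r$; and (ii) the tower property of conditional expectation, which reduces the dependence on $\bm{x}$ to the scalar projection $\bm{x}^T\bm{g}_r$.

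First I would set up the Stein identity. For a one-dimensional candidate $h$, consider the vector field $\bm{F}(\bm{x}) = \bm{r}\,h(\bm{x}^T\bm{g}_r)$. A direct computation gives $\nabla_{\bm{x}}^T\bm{F}(\bm{x}) = \sum_i r_i\, h'(\bm{x}^T\bm{g}_r)(\bm{g}_r)_i = (\bm{r}^T\bm{g}_r)\,h'(\bm{x}^T\bm{g}_r)$, so $\mathcal{A}_q\bm{F}(\bm{x}) = s_q^r(\bm{x})h(\bm{x}^T\bm{g}_r) + (\bm{r}^T\bm{g}_r)h'(\bm{x}^T\bm{g}_r)$, which is exactly the integrand of $S_{rg_r}$ with $s_p^r$ replaced by $s_q^r$. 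Since an admissible $h$ is precisely one for which $\bm{F}\in\mathcal{F}_q$, the density-regularity Assumptions 1--4 make the boundary terms in $\int\nabla_{\bm{x}}^T(q\bm{F})\,d\bm{x}$ vanish, hence $\mathbb{E}_q[\mathcal{A}_q\bm{F}]=0$. Subtracting this from the $S_{rg_r}$ objective cancels the derivative term and yields
$S_{rg_r} = \sup_{h}\,\mathbb{E}_q\big[(s_p^r(\bm{x})-s_q^r(\bm{x}))\,h(\bm{x}^T\bm{g}_r)\big]$,
where $h$ ranges over the admissible class (which, as in the base Stein discrepancy where $\bm{f}^*\propto\bm{s}_p-\bm{s}_q$, carries an implicit unit-norm normalisation so that this linear functional is bounded).

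Next I would invoke the tower property: since $h$ depends on $\bm{x}$ only through $t:=\bm{x}^T\bm{g}_r$, writing $\Delta(t):=\mathbb{E}_q[\,s_p^r(\bm{x})-s_q^r(\bm{x})\mid \bm{x}^T\bm{g}_r=t\,]$ and letting $q_t$ be the law of $t$ under $q$, we get $\mathbb{E}_q[(s_p^r-s_q^r)h(\bm{x}^T\bm{g}_r)]=\mathbb{E}_{t\sim q_t}[\Delta(t)h(t)]$. This is a bounded linear functional of $h$ on $L^2(q_t)$, so by Cauchy--Schwarz (Riesz representation) it is maximised over the unit ball at $h^*(t)\propto\Delta(t)$; richness of $\mathcal{F}_q$ guarantees $h^*$ is admissible and attains the supremum. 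Finally, to match the stated form I would change variables to $\bm{y}=\bm{G}_r\bm{x}$: because $\bm{x}^T\bm{g}_r=\bm{a}_d^T\bm{x}=y_d$ and $\bm{x}=\bm{G}_r^{-1}\bm{y}$, conditioning on $\bm{x}^T\bm{g}_r=t$ is conditioning on $y_d=t$, so $\Delta(\bm{x}^T\bm{g}_r)=\mathbb{E}_{q_{G_r}(\bm{y}_{-d}\mid y_d)}\big[s_p^r(\bm{G}_r^{-1}\bm{y})-s_q^r(\bm{G}_r^{-1}\bm{y})\big]$, which is Eq.\ref{eq: optimal test function maxSSD-g}.

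I expect the main obstacle to be the regularity bookkeeping rather than the algebra: one must verify that $\bm{F}(\bm{x})=\bm{r}h(\bm{x}^T\bm{g}_r)$ actually lies in $\mathcal{F}_q$ for the relevant $h$ (so the Stein identity leaves no boundary contribution) and, more delicately, that the candidate maximiser $\bm{r}\,\Delta(\bm{x}^T\bm{g}_r)$ is itself square-integrable and admissible — this is exactly where Assumptions 1--4 get used. A minor additional remark (not an obstacle) is that the conclusion is independent of which orthogonal completion $\bm{G}_r$ of $\bm{g}_r$ one chooses, since every quantity above depends on $\bm{G}_r$ only through the coordinate $y_d=\bm{x}^T\bm{g}_r$.
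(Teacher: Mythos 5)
Your proposal is correct and follows essentially the same route as the paper: apply the Stein identity with the vector field $\bm{r}\,h(\bm{x}^T\bm{g}_r)$ to replace $s_p^r$ by $s_p^r - s_q^r$ and cancel the derivative term, reduce the expectation to a one-dimensional functional via conditioning on the projection $\bm{x}^T\bm{g}_r$, and identify the maximiser by Cauchy--Schwarz. The only cosmetic difference is that you invoke the tower property in the original coordinates before changing variables to $\bm{y}=\bm{G}_r\bm{x}$, whereas the paper changes variables first and then peels off the conditional expectation; the content is the same.
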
}
\wg{Intuitively, assume $\bm{G}_r$ is a rotation matrix. Then $h_{rg_r}^*$ is the conditional expected score difference between two rotated $p$ and $q$. This form is very similar to the optimal test function for SD, which is just the score difference between the original $p$, $q$.}
\wg{Knowing the optimal form of $h_{rg_r}^*$, we can show $SK_{rg_r}$ can be well approximated by $S_{rg_r}$.}
\begin{theorem}[$SK_{rg_r} \approx S_{rg_r}$]
Assume assumptions \hyperref[assumption 1]{1-4} (density regularity) and \hyperref[assumption 1]{5} (richness of kernel). Given $\bm{r}$ and $\bm{g}_r$, $\forall \epsilon>0$ there exists a constant $C$ such that 
\[
0\leq S_{{rg_r}}-SK_{{rg_r}}<C\epsilon\,.
\]
\label{thm: maxSKSD approximates maxSSD}
\end{theorem}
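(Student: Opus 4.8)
The plan is to realise both $S_{rg_r}$ and $SK_{rg_r}$ as (squared) optima of one and the same linear functional
\[
J(h)\;:=\;\mathbb{E}_q\!\left[s_p^r(\bm{x})\,h(\bm{x}^T\bm{g}_r)+\bm{r}^T\bm{g}_r\,\nabla_{\bm{x}^T\bm{g}_r}h(\bm{x}^T\bm{g}_r)\right],
\]
the only difference being the class over which $h$ ranges: the Stein class $\mathcal{F}_q$ for $S_{rg_r}$, and the unit ball of the RKHS $\mathcal{H}_{rg_r}$ for $SK_{rg_r}$ (where, by the reproducing identity, $SK_{rg_r}=\big(\sup_{\|h\|_{\mathcal{H}_{rg_r}}\le 1}J(h)\big)^2$). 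The left inequality $0\le S_{rg_r}-SK_{rg_r}$ is the easy half: under the richness assumption (Assumption~5) any element of $\mathcal{H}_{rg_r}$ is a legitimate Stein-class test function, so the optimiser $h_{SK}$ of the RKHS problem is feasible for the SSD problem, whence $S_{rg_r}\ge J(h_{SK})^2=SK_{rg_r}$.

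For the right inequality I would first invoke Proposition~\ref{prop: optimality of maxSSD-g}: the maximiser of $J$ over $\mathcal{F}_q$ is the (normalised) conditional expected score difference $h^*_{rg_r}$, which under the density-regularity assumptions (Assumptions~1--4) is a genuine, sufficiently regular function --- continuous, square-integrable against the marginal $q_{g_r}$ of $\bm{x}^T\bm{g}_r$, and with a controlled first derivative. Since Assumption~5 makes $\mathcal{H}_{rg_r}$ dense in the relevant function space, for any $\epsilon>0$ I can choose $\tilde h\in\mathcal{H}_{rg_r}$ approximating $h^*_{rg_r}$ within $\epsilon$ in the appropriate (including first-derivative) norm. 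Then I would estimate, up to the squaring,
\[
S_{rg_r}-SK_{rg_r}\;\le\;J(h^*_{rg_r})-J\!\big(\tilde h/\|\tilde h\|\big)\;=\;\big(J(h^*_{rg_r})-J(\tilde h)\big)+\big(J(\tilde h)-J(\tilde h/\|\tilde h\|)\big),
\]
bounding the first bracket by Cauchy--Schwarz after splitting into the score term and the derivative term (producing a constant $C$ depending on $\|s_p^r\|_{L^2(q)}$ and $|\bm{r}^T\bm{g}_r|\le 1$, but not on $\epsilon$), and the second bracket by noting $\|\tilde h\|$ is within $O(\epsilon)$ of the normalisation of $h^*_{rg_r}$. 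Collecting the estimates yields $S_{rg_r}-SK_{rg_r}<C\epsilon$.

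The main obstacle is the derivative term in $J$: approximating $h^*_{rg_r}$ merely in $L^2(q_{g_r})$ is not enough, because $J$ also sees $\nabla_{\bm{x}^T\bm{g}_r}h$; one needs the RKHS to approximate $h^*_{rg_r}$ \emph{together with its first derivative} and to keep that joint error $O(\epsilon)$, which is precisely the strength Assumption~5 must supply, and which must then be tracked cleanly through the Cauchy--Schwarz / integration-by-parts steps. A secondary bookkeeping nuisance is reconciling the RKHS-ball normalisation with the (implicit) normalisation of the Stein class; I would absorb this by working throughout with the scale-invariant ratio $J(h)^2/\|h\|^2$, where $S_{rg_r}$ and $SK_{rg_r}$ differ only in which norm appears in the denominator, at the cost of an extra bounded factor in $C$.
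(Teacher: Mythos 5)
Your overall strategy matches the paper's at a high level (cast both as optima of the same linear functional, handle the easy direction by inclusion of function classes, handle the hard direction by density in the RKHS together with optimality of $h^*_{rg_r}$ from Proposition~\ref{prop: optimality of maxSSD-g}), but you have identified a false obstacle whose resolution is in fact the crux of the paper's argument, and without it your proof as written does not close.

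The ``main obstacle'' you name --- that $J$ ``also sees $\nabla_{\bm{x}^T\bm{g}_r}h$'', so that $L^2$-approximation of $h^*_{rg_r}$ by an RKHS element is insufficient and one needs joint approximation of the function and its derivative --- is not actually present, and you would not be able to discharge it from Assumption~5, which only gives $c_0$-universality (hence $L^p$-density for Borel probability measures), not density in any Sobolev-type norm. The key move the paper makes, and which your proposal misses, is to apply the Stein identity
\[
\mathbb{E}_q\!\left[s_q^r(\bm{x})\,h(\bm{x}^T\bm{g}_r)\right]=\mathbb{E}_q\!\left[\bm{r}^T\bm{g}_r\,\nabla_{\bm{x}^T\bm{g}_r}h(\bm{x}^T\bm{g}_r)\right]
\]
\emph{first}, which eliminates the derivative term entirely and rewrites the functional as
\[
J(h)=\mathbb{E}_q\!\left[\big(s_p^r(\bm{x})-s_q^r(\bm{x})\big)\,h(\bm{x}^T\bm{g}_r)\right],
\]
a plain weighted $L^2$ pairing with no gradients. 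After this reduction, the $L^2$-density supplied by $c_0$-universality is exactly what is needed: pick $\widetilde{h}\in\mathcal{H}_{rg_r}$ with $\|h^*_{rg_r}-\widetilde{h}\|_{L^2(q_{G_r})}<\epsilon$, note that $SK_{rg_r}\ge \widetilde{SK}_{rg_r}$ (optimality over the RKHS), and then $S_{rg_r}-SK_{rg_r}\le S_{rg_r}-\widetilde{SK}_{rg_r}=\mathbb{E}_q[(s_p^r-s_q^r)(h^*_{rg_r}-\widetilde{h})]\le C_r\,\|h^*_{rg_r}-\widetilde{h}\|_{L^2}<C_r\epsilon$ by Cauchy--Schwarz, where $C_r=\sqrt{\mathbb{E}_q[(s_p^r-s_q^r)^2]}$ is finite by Assumption~2. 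So the derivative bookkeeping you flag as the hard part does not arise.

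A secondary point: for the easy inequality $0\le S_{rg_r}-SK_{rg_r}$ you argue by feasibility of the RKHS optimiser inside the Stein class. That is fine in spirit, but you then write $S_{rg_r}\ge J(h_{SK})^2=SK_{rg_r}$, which mixes a linear quantity ($S_{rg_r}$ as $\sup_h J(h)$, with a normalisation absorbed into the coefficient of $h^*$) with a squared one, and the scale-invariant-ratio workaround you sketch at the end leaves a loose ``bounded factor in $C$'' that is not obviously independent of $\epsilon$. The paper sidesteps this by proving $SK_{rg_r}\le M\,S_{rg_r}$ directly via Cauchy--Schwarz on the bilinear form $\mathbb{E}_{y_d,y'_d}[h^*(y_d)k_{rg_r}(y_d,y'_d)h^*(y'_d)]$ together with boundedness of the kernel (Assumption~5), and then fixing the coefficient of $h^*$ so that $M=1$; this makes the normalisation explicit rather than leaving it implicit in a ratio. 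The right inequality argument above then also needs this same coefficient convention, which is why the paper fixes it up front.
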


As $S_{rg_r}$ approximates $SK_{rg_r}$ arbitrary well, the hope is that good slices for $S_{rg_r}$ also correspond to good slices for $SK_{rg_r}$ in practice. Therefore in the next section we focus on analyzing $S_{rg_r}$ instead to propose an efficient algorithm for finding good slices. 

\section{Active slice direction $\bm{g}$}
\label{sec: Active slice direction g}

Finding good slices involves alternating maximization of $\bm{r}$ and $\bm{g}_r$. To simplify the analysis, we focus on good directions $\bm{g}_r$ given fixed $\bm{r}$, e.g. the orthogonal basis $\bm{r}\in O_r$ for now. Finding good $\bm{g}_r$ is achieved in two steps: (i) Rewriting the problem of the maximizing $S_{g_r}$ w.r.t $\bm{g}_r$ into an equivalent minimization problem, called \emph{controlled approximation}; (ii) Establish an upper-bound of the controlled approximation objective such that its minimizer is analytic. \wg{This derivation is based on an important inequality: Poincar\'e inequality, which upper bounds the variances of a function by its gradient magnitude. Therefore, we need \textbf{Assumptions 7-8} (Appendix \ref{appendix: definition and assumption}) to make sure this inequality is valid. }We name the resulting $\bm{g}_{r}$ that minimizes the upper bound as \emph{active slices}. All proofs can be found in appendix \ref{Appendix: theory related to g}.


\subsection{Controlled Approximation}
 \label{subsec: Controlled approximation}

To start with, we need an upper bound for $S_{g_r}$ so that we can transform the maximization of $S_{g_r}$ into the minimization of their gap. Hence, we propose a generalization of SD (Eq.\ref{eq: SD}) called \textit{projected Stein discrepancy} (PSD):
\begin{equation}
    \text{PSD}(q,p;O_r)=\sum_{\bm{r}\in O_r}\sup_{f_r\in\mathcal{F}_q}\mathbb{E}_q[s^r_p(\bm{x})f_r(\bm{x})+\bm{r}^T\nabla_{\bm{x}}f_r(\bm{x})]
    \label{eq: PSD}
\end{equation}
where $f_{r}:\mathcal{X}\subseteq\mathbb{R}^{D}\rightarrow \mathbb{R}$. SD is a special case of PSD by setting $O_r$ as identity matrix $\bm{I}$.
In proposition \ref{prop: optimality of PSD} of appendix \ref{appendix subsec: optimal test function for PSD}, we show that {if $\mathcal{F}_q$ contains all bounded continuous functions}, then the optimal test function in PSD is
\begin{equation}
    f_{r}^{*}(\boldsymbol{x}) \propto\left(s_{p}^{r}(\boldsymbol{x})-s_{q}^{r}(\boldsymbol{x})\right)\,.
    \label{eq: optimal test function PSD}
\end{equation}
It can also be shown that  PSD is equivalent to the \textit{Fisher divergence}, which has been extensively used in training energy based models \cite{song2020sliced,song2019generative} and fitting kernel exponential families \cite{sriperumbudur2017density,sutherland2018efficient,wenliang2019learning}. 

We now prove that PSD upper-bounds $S_{g_r}$, with the gap as the expected square error between their optimal test functions $f_r^*$ and $h^*_{rg_r}$ (Proposition \ref{prop: optimality of maxSSD-g}). Since PSD is constant w.r.t.~$\bm{g}_r$, maximization of $S_{g_r}$ is equivalent to {a minimization task, called \emph{controlled approximation}}.

\begin{theorem}[Controlled Approximation]
Assume assumptions \hyperref[assumption 1]{1-4} (density regularity), and the coefficient for the optimal test functions to be $1$ w.l.o.g., %
then $\text{PSD}\geq S_{g_r}$ and
\begin{equation}
    \text{PSD}-S_{g_r}=\sum_{\bm{r}\in O_r}\mathbb{E}_q[(f_{r}^*(\bm{x})-h_{rg_r}^*(\bm{x}^T\bm{g}_r))^2],
    \label{eq: controlled approximation}
\end{equation}\label{thm: controlled approximation}with $f_r^*$ and $h^*_{rg_r}$ are optimal test functions for PSD and $S_{g_r}$ defined in Eq.\ref{eq: optimal test function PSD} and Eq.\ref{eq: optimal test function maxSSD-g} respectively.
\end{theorem}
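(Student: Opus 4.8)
The plan is to compute both $\mathrm{PSD}$ and $S_{g_r}$ by substituting in their respective optimal test functions, which are already known in closed form: $f_r^*(\bm{x}) \propto s_p^r(\bm{x}) - s_q^r(\bm{x})$ from Eq.~\eqref{eq: optimal test function PSD}, and $h_{rg_r}^*(\bm{x}^T\bm{g}_r)$ as the conditional expectation in Eq.~\eqref{eq: optimal test function maxSSD-g} from Proposition~\ref{prop: optimality of maxSSD-g}. First I would observe that, term-by-term over $\bm{r}\in O_r$, each summand is of the same structural form as the (projected) Stein discrepancy with a one-dimensional test function, so it suffices to establish the identity for a single $\bm{r}$ and then sum. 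With the coefficient normalized to $1$, evaluating the PSD summand at $f_r^*$ gives $\mathbb{E}_q[s_p^r f_r^* + \bm{r}^T\nabla_{\bm{x}} f_r^*]$; a Stein-type integration by parts (using the density regularity Assumptions~1--4 so boundary terms vanish, exactly as in the identity $\mathbb{E}_q[\mathcal{A}_q \bm{f}]=0$) converts $\mathbb{E}_q[\bm{r}^T\nabla_{\bm{x}} f_r^*]$ into $-\mathbb{E}_q[s_q^r f_r^*]$, so the PSD summand equals $\mathbb{E}_q[(s_p^r - s_q^r) f_r^*] = \mathbb{E}_q[(f_r^*)^2]$.

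Next I would do the analogous computation for the $S_{g_r}$ summand. Changing variables to $\bm{y}=\bm{G}_r\bm{x}$ with $q_{G_r}$ the induced law and writing $y_d = \bm{x}^T\bm{g}_r$, the term $\mathbb{E}_q[s_p^r h_{rg_r}^* + \bm{r}^T\bm{g}_r \nabla_{y_d} h_{rg_r}^*]$ can be handled by the same integrate-by-parts argument but now only in the single coordinate $y_d$: the derivative term becomes $-\mathbb{E}_{q_{G_r}}[(\partial_{y_d}\log q_{G_r}(y_d)) h_{rg_r}^*]$ where $q_{G_r}(y_d)$ is the $y_d$-marginal. Using the tower property to pull the inner conditional expectation out — precisely the structure that makes $h_{rg_r}^*$ optimal in Proposition~\ref{prop: optimality of maxSSD-g} — one identifies this with $\mathbb{E}_q[(f_r^* - \text{(projection of } f_r^* \text{ onto functions of } y_d)) \cdot (\ldots)]$, and the cross terms collapse so that the $S_{g_r}$ summand equals $\mathbb{E}_q[(h_{rg_r}^*)^2]$, i.e.\ the squared norm of the conditional-expectation projection of $f_r^*$. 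Then $\mathrm{PSD} - S_{g_r} = \sum_{\bm{r}\in O_r}\big(\mathbb{E}_q[(f_r^*)^2] - \mathbb{E}_q[(h_{rg_r}^*)^2]\big)$, and since $h_{rg_r}^* = \mathbb{E}_{q_{G_r}(\bm{y}_{-d}\mid y_d)}[f_r^*]$ is an $L^2(q)$-orthogonal projection of $f_r^*$ onto the subspace of $\sigma(y_d)$-measurable functions, the Pythagorean identity gives $\mathbb{E}_q[(f_r^*)^2] - \mathbb{E}_q[(h_{rg_r}^*)^2] = \mathbb{E}_q[(f_r^* - h_{rg_r}^*)^2]$, which is exactly Eq.~\eqref{eq: controlled approximation}; non-negativity of each term yields $\mathrm{PSD}\geq S_{g_r}$.

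The main obstacle I anticipate is justifying the orthogonal-projection (conditional-expectation) step rigorously: one must verify that $h_{rg_r}^*$ as written really is the $L^2(q)$-projection of $f_r^*$ onto functions of $\bm{x}^T\bm{g}_r$, that this projection lies in the admissible test-function class $\mathcal{F}_q$ (so that plugging it into the $\sup$ is legitimate and attains the supremum), and that the required integrability and boundary-decay conditions for the one-coordinate integration by parts hold — all of which lean on Assumptions~1--4 and on Proposition~\ref{prop: optimality of maxSSD-g}. The change-of-variables bookkeeping with $\bm{G}_r$ and the marginal/conditional factorization of $q_{G_r}$ is routine but needs care to keep the Jacobian and the score transformation $s_q^r(\bm{G}_r^{-1}\bm{y})$ consistent. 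Once the projection interpretation is nailed down, the Pythagorean identity and the final summation are immediate.
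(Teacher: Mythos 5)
Your proposal is correct and follows essentially the same route as the paper's proof. Both arguments hinge on (i) the Stein identity to reduce the PSD summand to $\mathbb{E}_q[(f_r^*)^2]$ and the $S_{g_r}$ summand to $\mathbb{E}_q[(s_p^r-s_q^r)h_{rg_r}^*]$, and (ii) the observation that $h_{rg_r}^*$ is the conditional mean of $f_r^*$ under $q_{G_r}(\bm{y}_{-d}\mid y_d)$, so that the difference telescopes into $\mathbb{E}_q[(f_r^*-h_{rg_r}^*)^2]$. The only cosmetic difference is that you invoke the abstract $L^2$ Pythagorean identity for the conditional-expectation projection, whereas the paper writes out that same cancellation explicitly via the change of variables $\bm{y}=\bm{G}_r\bm{x}$ and the tower property; the paper's Proposition~\ref{prop: optimality of maxSSD-g} already supplies the projection interpretation you flag as the main thing needing justification, so your anticipated obstacle is in fact a solved prerequisite.
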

Intuitively, minimizing the above gap can be regarded as a function approximation problem, where we want to approximate a multivariate function $f_r^*:\mathbb{R}^{D}\rightarrow \mathbb{R}$ by a univariate function $h^*_{rg_r}:\mathbb{R}\rightarrow\mathbb{R}$ with optimal parameters $\bm{g}_r$. 

\subsection{Upper-bounding the error}
\label{subsec: subspace poincare}
Solving the controlled approximation task directly may be difficult in practice. Instead, we propose an upper-bound of the approximation error, such that this upper-bound's minimizer $\bm{g}_r$ is analytic. The inspiration comes from the \textit{active subspace method} for dimensionality reduction \citep{constantine2014active,zahm2020gradient}, therefore we name the corresponding minimizers as \emph{active slices}.

\begin{theorem}[Error upper-bound and active slices $\bm{g}_r$]
Assume assumptions \hyperref[assumption 2]{2}, \hyperref[assumption 4]{4} (density regularity) and \hyperref[assumption 4]{7-8} (Poincar\'e inequality conditions), we can upper bound the inner part of the controlled approximation error (Eq.\ref{eq: controlled approximation}) by
\begin{equation}
\begin{split}
    \mathbb{E}_{q}\left[\left(f_{r}^{*}(\boldsymbol{x})-h_{rg_r}^{*}\left(\boldsymbol{x}^{T} \boldsymbol{g}_r\right)\right)^{2}\right] \leq C_{\text {sup}} \operatorname{tr}\left(\boldsymbol{G}_{r\backslash d} \boldsymbol{H}_r \boldsymbol{G}_{r\backslash d}^{T}\right),
\end{split}
\label{eq: subspace Poincare upper bound}
\end{equation}
\begin{equation}
    \begin{split}
        \boldsymbol{H}_r=\int q(\boldsymbol{x}) \nabla_{x} f_{r}^{*}(\boldsymbol{x}) \nabla_{\boldsymbol{x}} f_{r}^{*}(\boldsymbol{x})^{T} d \boldsymbol{x}.
    \end{split}
    \label{eq: sentivity matrix}
\end{equation}
Here $C_{sup}$ is the Poincar\'e constant defined in assumption \hyperref[assumption 8]{8} and $\bm{G}_{r\backslash d}\in\mathbb{R}^{(D-1)\times D}$ is an arbitrary orthogonal matrix $\bm{G}_r$ excluding the $d^{th}$ row $\bm{g}_{r}$. The orthogonal matrix has the form $\bm{G}_r=[\bm{a}_1,\ldots,\bm{a}_D]^T$ where $\bm{a}_i\in\mathbb{S}^{D-1}$ and $\bm{a}_d=\bm{g}_r$.

The above upper-bound is minimized when the row space of $\bm{G}_{r\backslash d}$ is the span of the first $D-1$ eigenvectors of $\bm{H}_r$ (arranging eigenvalues in ascending order). One possible choice for active slice $\bm{g}_r$ is $\bm{v}_D$,
where $(\lambda_i,\bm{v}_i)$ is the eigenpair of $\bm{H}_r$ and $\lambda_1\leq\lambda_2\leq\ldots\leq\lambda_D$.
\label{thm: Upper bound controlled approximation}
\end{theorem}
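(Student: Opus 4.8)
The plan is to establish the error bound first and then identify the minimizer by a trace-optimization (Ky Fan type) argument. Starting from the controlled-approximation identity, fix $\bm r$ and write $\bm y = \bm G_r \bm x$ with $y_d = \bm x^T \bm g_r$, so that, by Proposition~\ref{prop: optimality of maxSSD-g}, $h_{rg_r}^*(y_d) = \mathbb E_{q_{G_r}(\bm y_{-d}\mid y_d)}[f_r^*(\bm G_r^{-1}\bm y)]$ is precisely the conditional expectation of $f_r^*$ given $y_d$. Hence $\mathbb E_q[(f_r^*(\bm x)-h_{rg_r}^*(\bm x^T\bm g_r))^2]$ is the expected conditional variance $\mathbb E_{q_{G_r}(y_d)}[\operatorname{Var}_{q_{G_r}(\bm y_{-d}\mid y_d)}(f_r^*\circ \bm G_r^{-1})]$. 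This is the quantity to which the Poincar\'e inequality (Assumptions~7--8) applies, conditionally on $y_d$: for each fixed $y_d$,
\begin{equation}
\operatorname{Var}_{q_{G_r}(\bm y_{-d}\mid y_d)}(f_r^*\circ \bm G_r^{-1}) \le C_{\text{sup}}\, \mathbb E_{q_{G_r}(\bm y_{-d}\mid y_d)}\!\left[\|\nabla_{\bm y_{-d}}(f_r^*\circ \bm G_r^{-1})\|^2\right].
\end{equation}
Taking expectation over $y_d$ and undoing the change of variables (using $\nabla_{\bm y_{-d}}(f_r^*\circ\bm G_r^{-1}) = \bm G_{r\backslash d}\nabla_{\bm x} f_r^*$, since the rows of $\bm G_{r\backslash d}$ are the coordinate directions complementary to $\bm g_r$ and $\bm G_r$ is orthogonal) turns the right-hand side into $C_{\text{sup}}\,\mathbb E_q[\|\bm G_{r\backslash d}\nabla_{\bm x}f_r^*\|^2] = C_{\text{sup}}\operatorname{tr}(\bm G_{r\backslash d}\bm H_r\bm G_{r\backslash d}^T)$ with $\bm H_r$ as in Eq.~\ref{eq: sentivity matrix}. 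That gives Eq.~\ref{eq: subspace Poincare upper bound}.

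For the second half, the task is to minimize $\operatorname{tr}(\bm G_{r\backslash d}\bm H_r\bm G_{r\backslash d}^T)$ over $\bm G_{r\backslash d}$ ranging over all $(D-1)\times D$ matrices with orthonormal rows. This is a standard Ky Fan / Courant--Fischer extremal characterization: for a symmetric positive semidefinite $\bm H_r$ with eigenvalues $\lambda_1\le\cdots\le\lambda_D$, the minimum of $\operatorname{tr}(\bm U\bm H_r\bm U^T)$ over row-orthonormal $\bm U\in\mathbb R^{k\times D}$ equals $\sum_{i=1}^{k}\lambda_i$, attained when the row space of $\bm U$ is the span of the bottom $k$ eigenvectors $\bm v_1,\dots,\bm v_k$. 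Applying this with $k = D-1$ shows the bound is minimized exactly when $\operatorname{rowspace}(\bm G_{r\backslash d}) = \operatorname{span}\{\bm v_1,\dots,\bm v_{D-1}\}$, i.e. the orthogonal complement of that span is $\operatorname{span}\{\bm v_D\}$; since $\bm g_r = \bm a_d$ is the unit vector orthogonal to all rows of $\bm G_{r\backslash d}$, a valid choice is $\bm g_r = \bm v_D$. (Any unit vector in $\operatorname{span}\{\bm v_D\}$ works, and if $\lambda_{D-1}=\lambda_D$ there is additional freedom; stating $\bm v_D$ as ``one possible choice'' sidesteps this.)

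I would present the Ky Fan step either by citing the extremal theorem directly or by a short self-contained argument: write $\bm G_{r\backslash d} = \bm W^T \bm V^T$ where $\bm V=[\bm v_1,\dots,\bm v_D]$ diagonalizes $\bm H_r$ and $\bm W\in\mathbb R^{D\times(D-1)}$ has orthonormal columns, so $\operatorname{tr}(\bm G_{r\backslash d}\bm H_r\bm G_{r\backslash d}^T) = \sum_{i=1}^D \lambda_i \|\bm w_i\|^2$ where $\bm w_i$ is the $i$-th row of $\bm W$; the constraints $0\le\|\bm w_i\|^2\le 1$ and $\sum_i\|\bm w_i\|^2 = D-1$ make this a linear program over the weights whose minimum puts mass $1$ on the $D-1$ smallest $\lambda_i$.

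The main obstacle, and the step needing the most care, is the conditional Poincar\'e argument: one must check that Assumptions~7--8 indeed license applying the Poincar\'e inequality to the conditional law $q_{G_r}(\bm y_{-d}\mid y_d)$ uniformly in $y_d$ (so that $C_{\text{sup}}$ is a single constant, pulled out of the $y_d$-expectation), and that $f_r^*\circ\bm G_r^{-1}$ has enough regularity as a function of $\bm y_{-d}$ for the inequality to hold — this is presumably exactly what Assumption~8 is engineered to guarantee, so the real work is invoking it correctly rather than proving it. The trace-minimization half is routine linear algebra by comparison.
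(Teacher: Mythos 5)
Your proposal is correct and follows essentially the same route as the paper: identify $h_{rg_r}^*$ as the conditional mean of $f_r^*$ under $q_{G_r}(\bm y_{-d}\mid y_d)$, apply the Poincar\'e inequality conditionally on $y_d$, pull out the uniform constant $C_{\text{sup}}$, undo the change of variables using $\nabla_{\bm y_{-d}}(f_r^*\circ\bm G_r^{-1})=\bm G_{r\backslash d}\nabla_{\bm x}f_r^*$, and finally minimize the trace by an eigenvector characterization.

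Two small points of divergence are worth noting. First, for the trace minimization the paper simply cites Theorem~2.1 of Sameh \& Tong (setting $B=\bm I$), whereas you give a self-contained Ky Fan / Courant--Fischer argument (equivalently, the LP-over-weights calculation via $\operatorname{tr}(\bm G_{r\backslash d}\bm H_r\bm G_{r\backslash d}^T)=\sum_i\lambda_i\|\bm w_i\|^2$). Your version is cleaner and avoids an external citation; either is fine.

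Second, the step you correctly flag as ``needing the most care'' is handled in the paper by a separate lemma (preservation of log-concavity) that you do not prove. You attribute the needed regularity mostly to Assumption~8, but that assumption only gives the uniform bound $C_{\text{sup}}$ on the Poincar\'e constants \emph{assuming each conditional already satisfies a Poincar\'e inequality}. What actually licenses the conditional Poincar\'e inequality is Assumption~7 (log-concavity of $q$) together with the nontrivial fact that log-concavity is preserved under orthogonal linear maps and under conditioning on a coordinate: $q_{G_r}(\bm y_{-d}\mid y_d)$ is again log-concave for every $y_d$. The paper proves this explicitly by checking that the Hessian of $-\log q_{G_r}(\bm y_{-d}\mid y_d)$ with respect to $\bm y_{-d}$ is $\bm G_{r\backslash d}V''(\bm G_r^{-1}\bm y)\bm G_{r\backslash d}^T\succeq 0$. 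If you were writing this up in full, you would need to supply that lemma; as sketched, you have correctly located the gap but slightly misassigned which assumption closes it.

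Otherwise the proposal is sound, and the inequality chain, the change of variables, and the eigenvector characterization of the minimizer all match the paper's argument.
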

Intuitively, the active slices $\bm{g}_r=\bm{v}_D$ are the directions where the test function $f_r^*$ varies the most. Indeed,  we have $\bm{v}_D^T\bm{H}_r\bm{v}_D=\mathbb{E}_q[||\nabla_{\bm{x}}f_r^*(\bm{x})^T\bm{v}_D||^2]=\lambda_D$, where the eigenvalue $\lambda_D$ measures the averaged gradient variation in the direction defined by $\bm{v}_D$.

\section{Active slice direction $\bm{r}$}
\label{sec: active slice direction r}

The dependence of active slice $\bm{g}_r$ on $\bm{r}$ motivate us to consider the possible choices of $\bm{r}$.
Although finite random slices $\bm{r}$ are sufficient for obtaining a valid discrepancy, in practice using sub-optimal $\bm{r}$ can result in weak discriminative power and poor active slices $\bm{g}_r$. We address this issue by proposing an efficient algorithm to search for good $\bm{r}$.
Again all the proofs can be found in appendix \ref{Appendix: theory related to r}.

\subsection{PSD Maximization for searching $\bm{r}$}
Directly optimizing $S_{rg_r}$ w.r.t.~$\bm{r}$ is particularly difficult due to the alternated updates of $\bm{r}$ and $\bm{g}_r$. To simplify the analysis, we start from the task of finding a single direction $\bm{r}$. Our key idea to sidestep such alternation is based on the intuition that $S_{rg_r}$ with active slices $\bm{g}_r$ should well approximate $\text{PSD}_{r}$ (PSD with given $\bm{r}$) from theorem \ref{thm: Upper bound controlled approximation}. The independence of $\text{PSD}_r$ to $\bm{g}_r$ allows us to avoid the alternated update and the accurate approximation validates the direct usage of the resulting active slices in $S_{rg_r}$. 
Indeed, we will prove that maximizing $\text{PSD}_r$ is equivalent to maximizing a lower-bound for $S_{rg_r}$. 

Assume we have two slices $\bm{r}_1$ and $\bm{r}_2$, with given $\bm{g}_{r_1}$, $\bm{g}_{r_2}$. Then finding good $\bm{r}_1$ is equivalent to maximizing the difference $S_{{r_1,g_{r_1}}}-S_{{r_2,g_{r_2}}}$. The following proposition establishes a lower-bound for this difference.
\begin{prop}[Lower-bound for the $S_{rg_r}$ gap]
Assume the conditions in theorem \ref{thm: Upper bound controlled approximation} are satisfied, then for any slices $\bm{r}_1$, $\bm{r}_2$ and $\bm{g}_{r_1}$, $\bm{g}_{r_2}$, we have 
\begin{equation}
    \begin{split}
        S_{{r_1,g_{r_1}}}-S_{{r_2,g_{r_2}}}\geq \text{PSD}_{r_1}-\text{PSD}_{r_2}-C_{\text{sup}}\Omega,
    \end{split}
    \label{eq: lower bound active slice r}
\end{equation}
where $C_{\text{sup}}$ is the Poincar\'e constant defined in assumption \hyperref[assumption 8]{8} and $\Omega=\sum_{i=1}^D{\omega_i}$ where $\{\omega_i\}_{i}^D$ is the eigenvalue of $\mathbb{E}_q[\nabla_{\bm{x}}\bm{f}^*(\bm{x})\nabla_{\bm{x}}\bm{f}^*(\bm{x})^T]$, $\bm{f}^*(\bm{x})=\bm{s}_p(\bm{x})-\bm{s}_q(\bm{x})$.
\label{prop: lower bound active slice r}
\end{prop}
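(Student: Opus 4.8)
The plan is to bound each of the two terms $S_{r_i,g_{r_i}}$ individually and then combine. For $\bm{r}_1$, I would use the fact that active slices $\bm{g}_{r_1}$ make $S_{r_1,g_{r_1}}$ close to $\text{PSD}_{r_1}$ from above; precisely, Theorem~\ref{thm: controlled approximation} (controlled approximation) gives
\[
\text{PSD}_{r_1}-S_{r_1,g_{r_1}}=\mathbb{E}_q[(f_{r_1}^*(\bm{x})-h_{r_1g_{r_1}}^*(\bm{x}^T\bm{g}_{r_1}))^2]\geq 0,
\]
and Theorem~\ref{thm: Upper bound controlled approximation} upper-bounds this by $C_{\text{sup}}\operatorname{tr}(\bm{G}_{r_1\backslash d}\bm{H}_{r_1}\bm{G}_{r_1\backslash d}^T)$. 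For $\bm{r}_2$, I only need the cheap direction of the inequality, namely $\text{PSD}_{r_2}-S_{r_2,g_{r_2}}\geq 0$, i.e.\ $S_{r_2,g_{r_2}}\leq \text{PSD}_{r_2}$. Subtracting, $S_{r_1,g_{r_1}}-S_{r_2,g_{r_2}}\geq \text{PSD}_{r_1}-\text{PSD}_{r_2}-C_{\text{sup}}\operatorname{tr}(\bm{G}_{r_1\backslash d}\bm{H}_{r_1}\bm{G}_{r_1\backslash d}^T)$.

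It remains to replace the $\bm{r}_1$-dependent trace term by the $\bm{r}$-free quantity $\Omega=\sum_{i=1}^D\omega_i=\operatorname{tr}(\mathbb{E}_q[\nabla_{\bm{x}}\bm{f}^*(\bm{x})\nabla_{\bm{x}}\bm{f}^*(\bm{x})^T])$, where $\bm{f}^*=\bm{s}_p-\bm{s}_q$. The key observation is that $f_{r_1}^*(\bm{x})\propto s_p^{r_1}(\bm{x})-s_q^{r_1}(\bm{x})=\bm{r}_1^T\bm{f}^*(\bm{x})$ (Eq.~\ref{eq: optimal test function PSD}), so $\nabla_{\bm{x}}f_{r_1}^*(\bm{x})=\nabla_{\bm{x}}\bm{f}^*(\bm{x})\,\bm{r}_1$ and hence $\bm{H}_{r_1}=\mathbb{E}_q[\nabla_{\bm{x}}\bm{f}^*(\bm{x})\bm{r}_1\bm{r}_1^T\nabla_{\bm{x}}\bm{f}^*(\bm{x})^T]$, a matrix whose "weight" is controlled by $\|\bm{r}_1\|=1$. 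Since $\bm{G}_{r_1\backslash d}$ has orthonormal rows, $\operatorname{tr}(\bm{G}_{r_1\backslash d}\bm{H}_{r_1}\bm{G}_{r_1\backslash d}^T)\leq\operatorname{tr}(\bm{H}_{r_1})$ (dropping one nonnegative eigenvalue), and $\operatorname{tr}(\bm{H}_{r_1})=\mathbb{E}_q[\|\nabla_{\bm{x}}\bm{f}^*(\bm{x})^T\bm{r}_1\|^2]=\bm{r}_1^T\mathbb{E}_q[\nabla_{\bm{x}}\bm{f}^*\nabla_{\bm{x}}\bm{f}^{*T}]\bm{r}_1\cdot(\text{const})$, which by the Rayleigh quotient is at most the largest eigenvalue of $\mathbb{E}_q[\nabla_{\bm{x}}\bm{f}^*\nabla_{\bm{x}}\bm{f}^{*T}]$, hence at most $\Omega$. (If instead one reads $\bm{H}_{r_1}$ as already incorporating the full Jacobian without the $\bm{r}_1$ contraction — i.e.\ using $\nabla_x\bm{f}^*$ rather than $\nabla_x f_{r_1}^*$ — then $\operatorname{tr}(\bm{G}_{r_1\backslash d}\bm{H}\bm{G}_{r_1\backslash d}^T)\leq\operatorname{tr}(\bm{H})=\Omega$ directly.) Either way this yields the claimed bound with $\Omega=\sum_i\omega_i$.

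The main obstacle I anticipate is the bookkeeping of the proportionality constants: Theorem~\ref{thm: controlled approximation} fixes the optimal-test-function coefficient to $1$ "w.l.o.g.", but the PSD and $S_{rg_r}$ objectives and the matrix $\bm{H}_r$ must all be normalised consistently for the trace-to-$\Omega$ comparison to be exact rather than merely up to a constant — and the statement hides this inside "$C_{\text{sup}}\Omega$", so I need to check that no stray $\bm{r}_1$-dependence leaks into the constant. A secondary point is making sure the eigenvalue/Rayleigh step is applied to the right matrix: the cleanest route is to never contract with $\bm{r}_1$ at the Hessian level, keep the full $D\times D$ sensitivity matrix $\bm{H}=\mathbb{E}_q[\nabla_{\bm{x}}\bm{f}^*\nabla_{\bm{x}}\bm{f}^{*T}]$, and observe $\bm{G}_{r_1\backslash d}\bm{H}\bm{G}_{r_1\backslash d}^T\preceq \bm{H}$ in the Loewner order after augmenting back to a full orthogonal $\bm{G}_{r_1}$, giving $\operatorname{tr}\leq\operatorname{tr}(\bm{H})=\Omega$ immediately. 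Everything else — the two applications of Theorems~\ref{thm: controlled approximation} and~\ref{thm: Upper bound controlled approximation}, and the subtraction — is routine.
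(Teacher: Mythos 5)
Your proposal is essentially correct and follows the same skeleton as the paper's proof: decompose
$S_{r_1,g_{r_1}}-S_{r_2,g_{r_2}} \geq (S_{r_1,g_{r_1}}-\text{PSD}_{r_1}) + (\text{PSD}_{r_1}-\text{PSD}_{r_2})$
using $S_{r_2,g_{r_2}}\leq\text{PSD}_{r_2}$ from Theorem~\ref{thm: controlled approximation}, control the first bracket by $-C_{\text{sup}}\operatorname{tr}(\bm{G}_{r_1\backslash d}\bm{H}_{r_1}\bm{G}_{r_1\backslash d}^T)$ via Theorem~\ref{thm: Upper bound controlled approximation}, then drop the nonnegative diagonal term to pass to $\operatorname{tr}(\bm{H}_{r_1})$, and finally bound $\operatorname{tr}(\bm{H}_{r_1})\leq\Omega$. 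The only place you diverge from the paper is the last step: you observe $\operatorname{tr}(\bm{H}_{r_1})=\bm{r}_1^T M\bm{r}_1$ with $M=\mathbb{E}_q[\nabla_{\bm{x}}\bm{f}^*\nabla_{\bm{x}}\bm{f}^{*T}]$ (a Rayleigh quotient, using $\|\bm{r}_1\|=1$), which is $\leq\lambda_{\max}(M)\leq\operatorname{tr}(M)=\Omega$; the paper instead completes $\bm{r}_1$ to an orthogonal basis $O_{r_1}$, sums $\sum_{\bm{r}\in O_{r_1}}\operatorname{tr}(\bm{H}_r)=\operatorname{tr}(M)=\Omega$ via $\sum_{\bm{r}}\bm{r}\bm{r}^T=\bm{I}$, and drops the nonnegative companion terms. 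The two are the same elementary inequality $\bm{r}_1^TM\bm{r}_1\leq\operatorname{tr}(M)$ argued from slightly different angles; yours is marginally more direct, the paper's makes the ``$\Omega$ is a budget shared across an orthogonal basis'' interpretation explicit. Two small cautions: the parenthetical suggestion that $\bm{H}_{r_1}$ might ``already incorporate the full Jacobian without the $\bm{r}_1$ contraction'' contradicts the paper's definition (Eq.~\ref{eq: sentivity matrix} explicitly uses $\nabla_{\bm{x}} f_r^*$, not $\nabla_{\bm{x}}\bm{f}^*$); and the ``cleanest route'' in your last paragraph would first require establishing $\bm{H}_{r_1}\preceq\bm{H}$ in Loewner order (true, since $\bm{r}_1\bm{r}_1^T\preceq\bm{I}$, but you should state it) because Theorem~\ref{thm: Upper bound controlled approximation} hands you $\bm{H}_{r_1}$, not $\bm{H}$. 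Your concern about the proportionality constant is real but is already resolved by the w.l.o.g.\ coefficient-$1$ normalization carried over from Theorem~\ref{thm: controlled approximation}.
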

Proposition \ref{prop: lower bound active slice r} justifies the maximization of $\text{PSD}_{r_1}$ w.r.t.~$\bm{r}_1$ as a valid surrogate. But more importantly, this alternative objective admits an analytic maximizer of $\bm{r}$, which is then used as the active slice direction:
\begin{theorem}[Active slice $\bm{r}$]
Assuming assumptions \hyperref[assumption 1]{1-4} (density regularity), then the maximum of the $\text{PSD}_{r}$ is achieved at $\boldsymbol{r}^{*}=\boldsymbol{v}_{\text{max}}$:
$$
\max_{\boldsymbol{r} \in \mathbb{S}^{D-1}} \mathbb{E}_{q}\left[s_{p}^{r}(\boldsymbol{x}) f^*_{r}(\boldsymbol{x})+\boldsymbol{r}^{T} \nabla_{\boldsymbol{x}} f^*_{r}(\boldsymbol{x})\right]=\lambda_{\text{max} }.
$$
Here $\left(\lambda_{\text{max} }, \boldsymbol{v}_{\text{max} }\right)$ is the largest eigenpair of the matrix $\boldsymbol{S}=\mathbb{E}_{q}\left[\bm{f}^*(\bm{x})\bm{f}^*(\bm{x})^T\right]
$
\label{thm: active slice r}
\end{theorem}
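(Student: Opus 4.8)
The plan is to evaluate $\text{PSD}_r$ in closed form by substituting the optimal test function, recognize the result as a Rayleigh quotient of $\bm{S}$, and then read off the maximizer from the spectral theorem. Note first that the quantity being maximized on the left-hand side of the statement is exactly $\text{PSD}_r$ (Eq.~\ref{eq: PSD} with a single fixed $\bm{r}$) in which the supremizing test function $f_r^\ast$ has already been plugged in; by Proposition~\ref{prop: optimality of PSD} this optimizer is $f_r^\ast(\bm{x}) \propto s_p^r(\bm{x}) - s_q^r(\bm{x})$, and as in Theorem~\ref{thm: controlled approximation} we take the proportionality constant to be $1$ without loss of generality. Since $s_p^r(\bm{x}) = \bm{s}_p(\bm{x})^T\bm{r}$, we have $f_r^\ast(\bm{x}) = \bm{r}^T\bm{f}^\ast(\bm{x})$ with $\bm{f}^\ast(\bm{x}) = \bm{s}_p(\bm{x}) - \bm{s}_q(\bm{x})$.

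Next I would simplify the divergence-type term using a projected Stein identity under $q$: integration by parts gives $\mathbb{E}_q[s_q^r(\bm{x}) g(\bm{x}) + \bm{r}^T\nabla_{\bm{x}} g(\bm{x})] = 0$ for any $g$ in the Stein class of $q$, where the boundary terms vanish by the density-regularity Assumptions~1--4. Applying this with $g = f_r^\ast$ turns the objective into
\begin{equation*}
\mathbb{E}_q\!\left[ s_p^r(\bm{x}) f_r^\ast(\bm{x}) + \bm{r}^T\nabla_{\bm{x}} f_r^\ast(\bm{x}) \right]
= \mathbb{E}_q\!\left[ \bigl(s_p^r(\bm{x}) - s_q^r(\bm{x})\bigr) f_r^\ast(\bm{x}) \right]
= \mathbb{E}_q\!\left[ \bigl(s_p^r(\bm{x}) - s_q^r(\bm{x})\bigr)^2 \right],
\end{equation*}
which is the projected Fisher divergence (consistent with the already-stated equivalence between PSD and the Fisher divergence). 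Rewriting the integrand as $\bigl(\bm{r}^T\bm{f}^\ast(\bm{x})\bigr)^2$ and pulling $\bm{r}$ outside the expectation yields $\bm{r}^T \mathbb{E}_q[\bm{f}^\ast(\bm{x})\bm{f}^\ast(\bm{x})^T]\,\bm{r} = \bm{r}^T \bm{S}\, \bm{r}$.

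Finally, I would maximize over the sphere: because $\bm{S}$ is symmetric and positive semidefinite, the Courant--Fischer characterization gives $\max_{\bm{r}\in\mathbb{S}^{D-1}} \bm{r}^T\bm{S}\bm{r} = \lambda_{\max}$, attained at the leading eigenvector $\bm{r}^\ast = \bm{v}_{\max}$, which is precisely the claimed statement.

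The main obstacle is the rigor of the two analytic steps rather than the linear algebra: one must justify that $f_r^\ast = s_p^r - s_q^r$ genuinely lies in the admissible test-function class and that the Stein identity applies to it — since this function need not be bounded, this likely requires approximating by bounded continuous functions and passing to the limit via dominated convergence, with the dominating bounds supplied by Assumptions~1--4. One also needs those assumptions to guarantee that the entries of $\bm{S}$ are finite, so that the supremum defining $\text{PSD}_r$ is actually attained and equals $\bm{r}^T\bm{S}\bm{r}$ rather than being infinite.
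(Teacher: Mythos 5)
Your proof matches the paper's argument almost step for step: both substitute the optimal test function $f_r^\ast(\bm{x}) = s_p^r(\bm{x}) - s_q^r(\bm{x})$ from Proposition~\ref{prop: optimality of PSD}, reduce $\text{PSD}_r$ to $\mathbb{E}_q\bigl[\bigl(\bm{r}^T\bm{f}^\ast(\bm{x})\bigr)^2\bigr] = \bm{r}^T\bm{S}\bm{r}$, and conclude that the maximum over the unit sphere is the top eigenpair of $\bm{S}$. The only difference is the final step: you appeal directly to the Courant--Fischer (Rayleigh quotient) characterization, while the paper derives the eigenvalue equation $\bm{S}\bm{r} = \lambda\bm{r}$ from scratch via a Lagrange multiplier stationarity condition on $\|\bm{r}\|^2 = 1$; these are equivalent, and your version is a little more economical since it avoids re-deriving a standard fact. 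Your closing remarks about needing to verify that $f_r^\ast$ lies in the admissible test-function class and that $\bm{S}$ has finite entries are reasonable rigor concerns, but the paper discharges them by fiat through Assumptions~1--4 (square-integrability of the projected score difference and vanishing boundary terms), which is exactly the route you anticipate.
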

\subsection{Constructing the orthogonal basis $O_r$}
\label{subsec: greedy algorithm}
Under certain scenarios, e.g. model learning, we want to train the model to perform well in every directions instead of a particular one. Thus, using a good orthogonal basis is preferred over a single active slice $\bm{r}$.
Here gradient-based optimization is less suited as it breaks the orthogonality constraint.
Also proposition \ref{prop: lower bound active slice r} is less useful here as well, as PSD is invariant to the choice of $O_r$, i.e. $\text{PSD}(q,p;O_{r_1})=\text{PSD}(q,p;O_{r_2})$ and $O_{r_1}\neq O_{r_2}$. 

Inspired by the analysis of single active $\bm{r}$, we propose to use the eigendecomposition of $\bm{S}$ to obtain a good orthogonal basis $O_r$. Theoretically, this operation also corresponds to a greedy algorithm, where in step $i$ it searches for the optimal direction $\bm{r}_i$ that is orthogonal to $\{\bm{r}_{<i} \}$ and maximizes $\text{PSD}_{\bm{r}_i}$ (see Corollary \ref{coro: greedy algorithm eigendecomposition} in appendix \ref{subsec: greedy algorithm}).
Although there is no guarantee for finding the \emph{optimal} $O_r$ due to its myopic behavior, in practice this greedy algorithm at least finds some good directions with high discriminative power (eigenvectors with large eigenvalues). 


\section{Practical algorithm}
\label{sec: In practice}

\begin{algorithm}[t]
   \caption{Active slice algorithm}
   \label{alg: active slice algorithm}
\begin{algorithmic}
   \STATE {\bfseries Input:} Samples $\bm{x}\sim q$, density $p$, kernel $k:\mathcal{X}\times\mathcal{X}\rightarrow \mathbb{R}$, Gaussian noise $\gamma$, pruning factor $m$ (optional)
   \STATE {\bfseries Result:} $\widetilde{O_r}$, $\bm{G}$
   \STATE {Estimate $\bm{s}_p(\bm{x})-\bm{s}_q(\bm{x})$ using KE or GE with kernel $k$ and samples $\bm{x}$.}
   \IF {Pruning}
    \STATE Top $m$ eigenvectors of $\bm{S}$ to form $\widetilde{O_r}$ (Theorem \ref{thm: active slice r})
   \ELSE
   \STATE Getting all eigenvectors of $\bm{S}$ to form $\widetilde{O_r}$
   \ENDIF
    \STATE {Add noise $\gamma$ to $\widetilde{O_r}$, then normalize. (Section \ref{subsec: validity of active slices})}
   \FOR {$\bm{r}\in\widetilde{O_r}$}
   \STATE {$\bm{g}_r$ is the top 1 eigenvector of $\bm{H}_r$ (Theorem \ref{thm: Upper bound controlled approximation})}
   \STATE Add noise $\gamma$ to $\bm{g}_r$ then normalize (Section \ref{subsec: validity of active slices})
   \STATE Concatenate $\bm{g}_r$ to $\bm{G}$
   \ENDFOR
   \STATE Further optimize $\widetilde{O_r}$, $\bm{G}$ with \textit{SKSD-g} ($SK_{g_r}$) using gradient-based optimization (Optional)
   \STATE {{\bfseries Return:} $\widetilde{O_r}$, $\bm{G}$}
\end{algorithmic}
\end{algorithm}

The proposed active slice method is summarized in Algorithm \ref{alg: active slice algorithm}, which requires the intractable score difference $\bm{s}_p(\bm{x})-\bm{s}_q(\bm{x})$. Two types of approximations can be used. The first approach applies \textit{gradient estimators} (GE) to estimate $\bm{s}_q(\bm{x})$ from $\bm{x}$ samples. We use the Stein gradient estimator \cite{li2017gradient} for the GE approach, although other estimators \cite{sriperumbudur2017density,sutherland2018efficient,shi2018spectral,zhou2020nonparametric} can also be employed. 
The second method directly estimates the score difference using a \textit{kernel-smoothed estimator} (KE):
\begin{equation}
\hspace{-1em}
\begin{aligned}
    \bm{s}_p(\bm{y})-\bm{s}_q(\bm{y}) \approx& \mathbb{E}_{\bm{x}\sim q}[(\bm{s}_p(\bm{x})-\bm{s}_q(\bm{x}))k(\bm{x},\bm{y})]\\
=&\mathbb{E}_{\bm{x}\sim q}[\bm{s}_p(\bm{x})k(\bm{x},\bm{y})+\nabla_{\bm{x}}k(\bm{x},\bm{y})],
\end{aligned}
\label{eq: kernel smooth}
\end{equation}
where the second expression comes from integration by part, and it can be computed in practice. Figure \ref{fig: divergence relationship} summarizes the relationships between different SSD discrepancies and highlights our contributions. For GOF test specifically, we also derive the asymptotic distribution and propose an practical GOF algorithm in appendix \ref{appendix: goodness-of-fit test}. 

\subsection{Computational cost}
\wg{The overall complexity includes the cost for (1) finding active slices (algorithm \ref{alg: active slice algorithm}) (2) applying the downstream test. For finding the active slices $\bm{r}$, one important fact is that we only need the $m$ ($m\ll D$) most important $\bm{r}$ (importance characterised by eigenvalues). Luckily, fast eigenvalue-decomposition algorithm, e.g. randomized SVD from \citet{saibaba2021randomized}, requires $O(m)$ matrix-vector product. For $\bm{g}$, from algorithm \ref{alg: active slice algorithm}, we only need to solve $m$ eigenvalue-decomposition, each only cares about the most important eigenvector. Therefore, $O(m\times 1)$ matrix-vector product are needed. So the overall complexity for finding slices is $O(mD^2)$, where $D^2$ comes from matrix-vector product. For gradient-based optimization (GO), the complexity is $O(l(D^2+C_{\text{grad}}))$ ($l$ is optimization step and $C_{\text{grad}}$ is the back-prop cost, $D^2$ coms from evaluating $SK_{g_{r}}$ or $SK_{rg_r}$). Our algorithm in general has lower training cost as $l\gg m$ and $C_{\text{grad}}$ can be expensive. For (2), our method has $O(mD)$ cost compared to $O(D^2)$ for GO. As $m\ll D$, active slices have less complexity compared to pure GO based method proposed in \citet{gong2021sliced}. For memory cost, our method costs $O(mD)$ to store $\bm{r},\bm{g}$ whereas GO uses $O(D^2)$. Overall, our method requires nearly an order of magnitude less complexity in terms of computation and memory consumption. }



\section{Experiments}
\label{sec: Experiments}
\wg{GOF test aims to test the fitness of the model to the target data. The test procedure roughly proceeds as: (1) Define null hypothesis (model matches the data distribution) and alternative hypothesis (model does not match the data distribution); (2) Compute test statistic (e.g. KSD) and threshold (e.g. bootstrap method); (3) Reject null hypothesis (statistic $>$ threshold) or not (statistic $\leq$ threshold).  Refer to appendix \ref{appendix: goodness-of-fit test} for more details. }
\subsection{Benchmark GOF tests}
\begin{figure*}[t]
    \centering
    \includegraphics[scale=0.14]{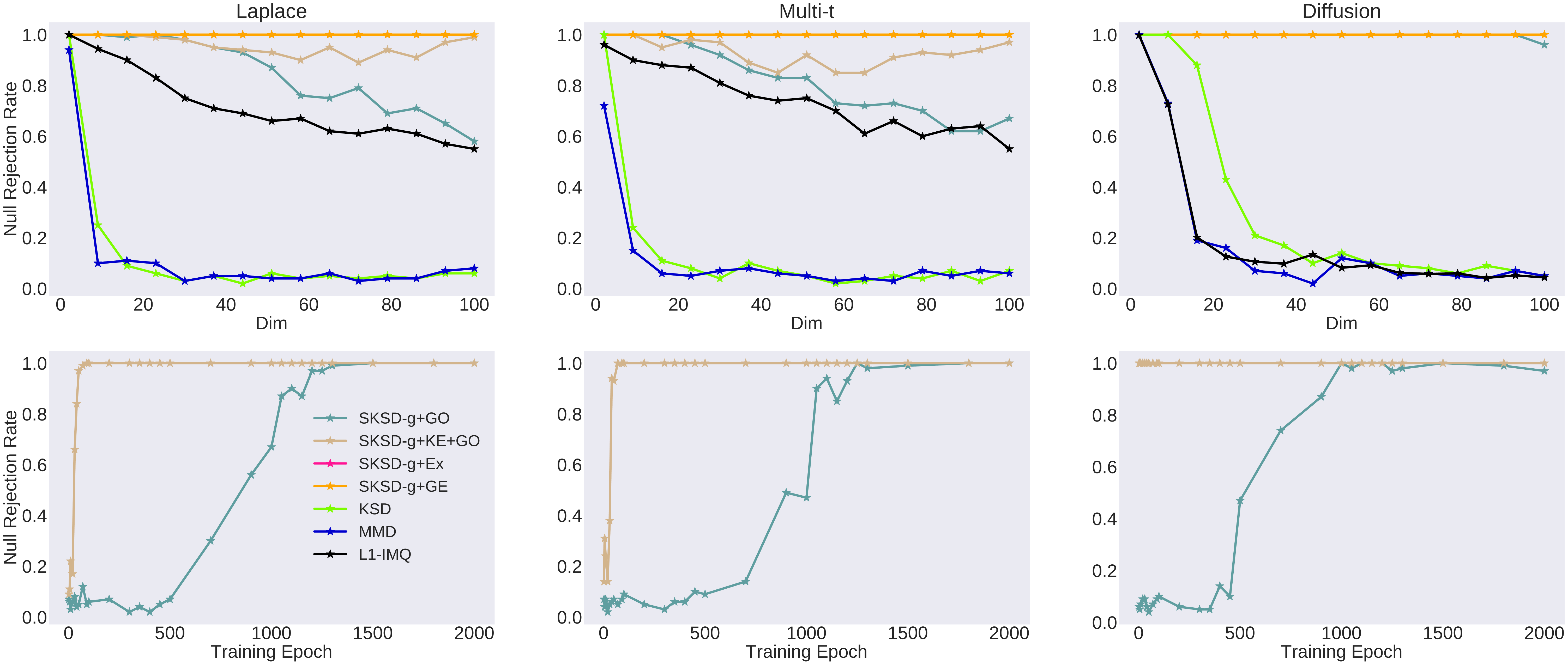}
    \caption{(\textbf{Upper panel}): The null rejection rate w.r.t. different dimensional benchmark problems. \textit{SKSD-g+Ex} and \textit{SKSD-g+GE} coincide at the optimal rejection rate (\textbf{Lower panel}): Null rejection rate with different number of gradient optimization epochs.}
    \label{fig: Benchmark performance}
\end{figure*}
We demonstrate the improved  test power results (in terms of null rejection rates) and significant speed-ups of the proposed active slice algorithm on 3 benchmark tasks, which have been extensively used for measuring GOF test performances \citep{jitkrittum2017linear,huggins2018random,chwialkowski2016kernel,gong2021sliced}. Here the test statistic is based on \textit{SKSD-g} ($SK_{g_r}$) with fixed basis $O_r=\bm{I}$. Two practical approaches are considered for computing the active slice $\bm{g}_r$: (i) gradient estimation with the Stein gradient estimator (\emph{SKSD-g+GE}), and (ii) gradient estimation with the kernel-smoothed estimator (\emph{KE}), plus further gradient-based optimization (\emph{SKSD-g+KE+GO}). For reference, we include a version of the algorithm with exact score difference (\emph{SKSD-g+Ex}) as an ablation for the gradient estimation approaches. 

In comparison, we include the following strong baselines: \emph{KSD} with RBF kernel \citep{liu2016kernelized,chwialkowski2016kernel}, maximum mean discrepancy \citep[MMD,][]{gretton2012kernel} with RBF kernel, random feature Stein discrepancy with L1 IMQ kernel \citep[L1-IMQ,][]{huggins2018random}, and the current state-of-the-art --- \emph{maxSKSD-g} with random initialized $\bm{g}_r$ followed by gradient optimization \citep[SKSD-g+GO,][]{gong2021sliced}. For all \textit{}methods requiring \textit{GO} or active slices, we split the $1000$ test samples from $q$ into $800$ test and $200$ training data, where we run GO or active slice method on the training set. 

The 3 GOF test benchmarks, with details in appendix \ref{appendix subsec: benchmark GOF test}, are: (1) \textbf{Laplace}: $p(\bm{x}) = \mathcal{N}(0, \bm{I})$, $q(\bm{x})=\prod_{d=1}^D{\text{Lap}(x_d|0,1/\sqrt{2})}$; (2) \textbf{Multivariate-t}: $p(\bm{x}) = \mathcal{N}(0, \frac{5}{3}\bm{I})$, $q(\bm{x})$ is a fully factorized multivariate-t with $5$ degrees of freedom, $0$ mean and scale $1$; (3) \textbf{Diffusion}: $p(\bm{x}) = \mathcal{N}(0, \bm{I})$,  $q(\bm{x})=\mathcal{N}(\bm{0},\bm{\Sigma}_1)$ where in $q(\bm{x})$ the variance of $1^{\text{st}}$-dim is $0.3$ and the rest is $\bm{I}$.

The upper panels in Figure \ref{fig: Benchmark performance} show the test power results as the dimensions $D$ increase. As expected, \textit{KSD} and \textit{MMD} with RBF kernel suffer from the curse-of-dimensionality. \textit{L1-IMQ} performs relatively well in \textbf{Laplace} and \textbf{multivariate-t} but still fails in \textbf{diffusion}. For SKSD based approaches, \textit{SKSD-g+GO} with $1000$ {training epochs} still exhibits a decreasing test power in \textbf{Laplace} and \textbf{multivariate-t}. On the other hand, \textit{SKSD-g+KE+GO} with 50 training epochs has nearly optimal performance. \textit{SKSD-g+Ex} and \textit{SKSD-g+GE} achieve the true optimal rejection rate without any \textit{GO}. Specifically, Table \ref{tab: benchmark GOF} shows that the active slice method achieves significant computational savings with \textbf{14x}-\textbf{80x} speed-up over \textit{SKSD-g+GO}. 

For approaches that require gradient optimization, the lower panels in Figure \ref{fig: Benchmark performance} show the test power as the number of {training epochs} increases. \emph{SKSD-g+GO} with random slice initialization requires a huge number of gradient updates to obtain reasonable test power, and $1000$ {epochs} achieves the best balance between run-time and performance. On the other hand, \emph{SKSD-g+KE+GO} with active slice achieves significant speed-ups with near-optimal test power using around \textbf{50} {epochs} on \textbf{Laplace} and \textbf{Multivariate-t}. Remarkably, on \textbf{Diffusion} test, $\bm{g}_r$ initialized by the active slices achieves near-optimal results already, so that the later gradient refinements are not required.



\begin{table*}[t]
\centering
\caption{Test power for 100 dimensional benchmarks and time consumption. {The run-time for SKSD-g+KE+GO include both the active slice computation and the later gradient-based refinement steps.} NRR stands for null rejection rate.}
\begin{tabular}{l|lll|lll|lll}
\hline
             & \multicolumn{3}{c|}{Laplace}                  & \multicolumn{3}{c|}{Multi-t}                 & \multicolumn{3}{c}{Diffusion}             \\ \hline
Method       & NRR           & sec/trial     & Speed-up      & NRR           & sec/trial     & Speed-up     & NRR        & sec/trial     & Speed-up      \\ \hline
SKSD-g+Ex    & \textbf{1}    & \textbf{0.38} & \textbf{103x} & \textbf{1 }            & \textbf{0.49} & \textbf{90x} & \textbf{1} & \textbf{0.34} & \textbf{102x} \\ \hline
SKSD-g+GO    & 0.58          & 39.39         & 1x            & 0.67          & 44.24         & 1x           & 0.96       & 34.73         & 1x            \\
SKSD-g+KE+GO & \textbf{0.99} & 2.72          & 14x           & \textbf{0.97} & 2.38          & 19x          & \textbf{1} & \textbf{0.43} & \textbf{81x}  \\
SKSD-g+GE    & \textbf{1}    & \textbf{0.66} & \textbf{60x}  & \textbf{1}    & \textbf{0.67} & \textbf{66x} & \textbf{1} & \textbf{0.78} & \textbf{44x}  \\ \hline
\end{tabular}
\label{tab: benchmark GOF}

\end{table*}

\subsection{RBM GOF test}
\begin{figure*}[t]
    \centering
    \includegraphics[scale=0.14]{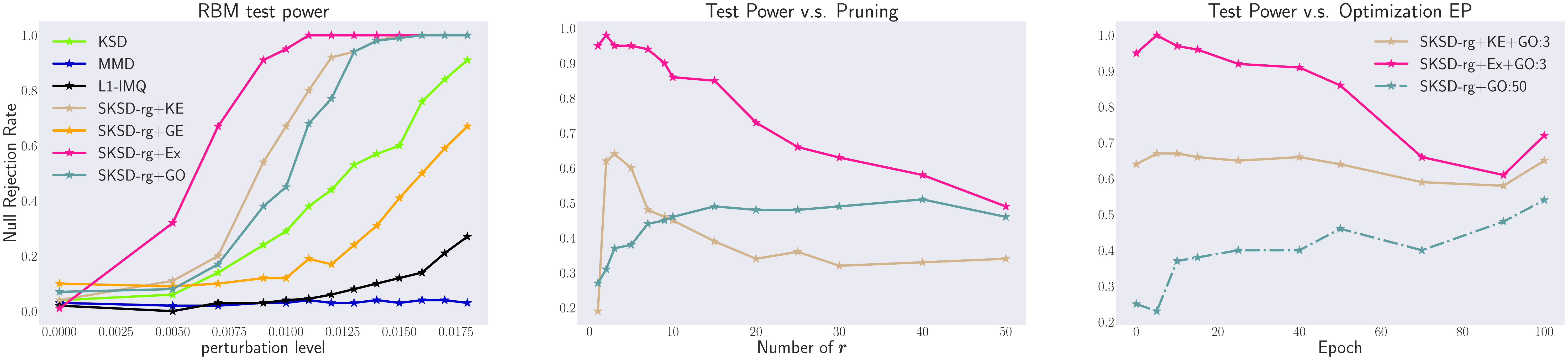}
    \caption{(\textbf{Left}): The GOF test power of each method with different level of noise perturbations (\textbf{Mid}): The effect of different pruning level towards the test power   (\textbf{Right}): The effect of gradient based optimization epoch to the test power. 3 and 50 indicates the pruning level. \textit{KE+GO} or \textit{Ex+GO} means active slices with further gradient refinement steps. }
    \label{fig: RBM GOF}
\end{figure*}
Following \citet{gong2021sliced}, we conduct a more complex GOF test using restrict Boltzman machines (RBMs, \citep{hinton2006reducing,welling2004exponential}). Here the $p$ distribution is an RBM: $p(\bm{x})=\frac{1}{Z} \exp \left(\mathbf{x}^{\top} \mathbf{B} \mathbf{h}+\mathbf{b}^{\top} \mathbf{x}+\mathbf{c}^{\top} \mathbf{x}-\frac{1}{2}\|\mathbf{x}\|^{2}\right)$, where $\bm{x}\in\mathbb{R}^D$ and $\bm{h}\in\{\pm 1\}^{d_{h}}$ denotes the hidden variables. The $q$ distribution is also an RBM with the same $\mathbf{b}, \mathbf{c}$ parameters as $p$ but a different $\bm{B}$ matrix perturbed by different levels of Gaussian noise. We use $D=50$ and $d_h=40$, and block Gibbs sampler with $2000$ burn-in steps. The test statistics for all the approaches are computed on a test set containing 1000 samples from $q$.

The test statistic is constructed using \textit{SKSD-rg} ($SK_{rg_r}$) with $\bm{r}$, $\bm{g}_r$ obtained either by gradient-based optimization (\textit{SKSD-rg+GO}) or the active slice algorithms (\textit{+KE}, \textit{+GE} and \textit{+Ex}) without the gradient refinements. Specifically, \textit{SKSD-rg+GO} runs 50 {training epochs} with $\bm{r}$ and $\bm{g}_r$ initialized to $\bm{I}$. For the active slice methods, we also prune away most slices and only keep the top-$3$ most important $\bm{r}$ slices.    

The left panel of Figure \ref{fig: RBM GOF} shows that \textit{SKSD-rg+KE} achieves the best null rejection rates among all baselines, except for \textit{SKSD-rg+Ex} whose performance is expected to upper-bound all other active slice methods. This shows the potential of our approach with an accurate score difference estimator. Although \textit{SKSD-rg+GO} performs reasonably well, its run-time is \textbf{53x} longer than \textit{SKSD-rg+KE} as shown in Table \ref{tab: RBM GOF}.
Interestingly, \textit{SKSD-rg+GE} performs worse than KSD due to the significant under-estimation of the magnitude of $\bm{s}_q(\bm{x})$. Therefore, we omit this approach in the following ablation studies.
\paragraph{Ablation studies}
The first ablation study, with results shown in the middle panel in Figure \ref{fig: RBM GOF}, considers pruning the active slices at different pruning levels, where the {horizontal axis indicates the number of $\bm{r}$ slices used to construct the test statistic}. We observe that the null rejection rates of active slice methods peak with {pruning level 3}, indicating their ability to select the most important directions. Their performances decrease when more $\bm{r}$ are considered since, in practice, those less important directions introduce extra noise to the test statistic. On the other hand, \textit{SKSD-rg+GO} shows no pruning abilities due to its sensitivity to slice initialization. Remarkably, the final performance of \textit{SKSD-rg+GO} without pruning is still worse than \textit{SKSD-rg+KE} with pruning, showing the importance of finding 'good' instead of many 'average-quality' directions. Another advantage of pruning is to reduce the computational and memory costs from $O(MD)$ to $O(mD)$, where $m$ and $M$ are the number of pruned $\bm{r}$ and slice initializations, respectively ($m\ll M$).  

The second ablation study investigates the quality of the obtained slices either by gradient-based optimization or by the active slice approaches. Results are shown in the right panel of Figure \ref{fig: RBM GOF}, where the horizontal axis indicates the number of {training epochs}, and the numbers annotated in the legend ($3$ and $50$) indicate the pruning. We observe that the null rejection rate of \textit{SKSD-rg+KE+GO} starts to improve only after $100$ epochs, meaning that short run of \textit{GO} refinements are redundant due to the good quality of active slices. {The performance decrease of \textit{SKSD-rg+Ex+GO} is due to the over-fitting of GO to the training set}. The null rejection rate of \textit{SKSD-rg+GO} gradually increases with larger training epochs as expected. However, even after 100 epochs, the test power is still lower than active slices without any GO. 

In appendix \ref{appendix subsec: RBM experiment}, another ablation study also shows the advantages of good $\bm{r}$ compared to using random slices.

\subsection{Model learning: ICA}
\label{subsec: Training ICA}

\begin{table*}[t]
\centering
\caption{The test NLL of different dimensional ICA model}
\begin{tabular}{l|llllll}
\hline
Dimensions & SKSD-g+KE+GO      & SKSD-g+Ex+GO      & SKSD-g+GO      & SKSD-rg+GO      & LSD            & KSD             \\ \hline
10         & {7.93$\pm$0.31}  & {7.95$\pm$0.31}  & 8.06$\pm$0.33  & 10.03$\pm$0.61  & \textbf{7.42$\pm$0.31}  & \textbf{7.82$\pm$0.31}   \\
80         & \textbf{7.88$\pm$0.77}  & 15.17$\pm$0.97 & 19.03$\pm$1.06 & 62.53$\pm$0.92  & \textbf{6.26$\pm$1.49}  & 80.75$\pm$1.22  \\
100        & \textbf{6.93$\pm$1.36}  & 21.50$\pm$1.41 & 22.22$\pm$1.08 & 75.28$\pm$1.63  & 17.55$\pm$1.60 & 110.78$\pm$1.19 \\
150        & \textbf{11.67$\pm$2.46} & 27.37$\pm$3.04 & 21.63$\pm$3.27 & 107.25$\pm$1.93 & 32.15$\pm$3.75 & 180.47$\pm$1.91 \\ \hline
\end{tabular}
\label{tab: ICA NLL}
\end{table*}

We evaluate the performance of the active slice methods in model learning by training an independent component analysis (ICA) model, which has been extensively used to evaluate algorithms for training energy-based models \citep{gutmann2010noise,hyvarinen2005estimation,ceylan2018conditional,grathwohl2020cutting}. ICA follows a simple generative process: it first samples a $D$-dimensional random variable $\bm{z}$ from a non-Gaussian $p_z$ (we use multivariate-t), then transforms $\bm{z}$ to $\bm{x}=\bm{W}\bm{z}$ with a non-singular matrix $\bm{W}\in\mathbb{R}^{D\times D}$. The log-likelihood is {$\log p(\bm{x})=\log p_z(\bm{W}^{-1}\bm{x})+C$} where $C$ can be ignored if trained by minimizing Stein discrepancies. 
We follow \citet{grathwohl2020cutting,gong2021sliced} to sample $20000$ training and $5000$ test datapoints from a randomly initialized ICA model. The baselines considered include \textit{KSD}, \textit{SKSD-g+GO}, \textit{SKSD-rg+GO} and the state-of-the-art \textit{learned Stein discrepancy} (\textit{LSD}) \citep{grathwohl2020cutting}, where the test function is parametrized by a neural network. For active slice approaches, {one optimization epoch include the following two steps}: (i) finding active slices for both orthogonal basis $O_r$ and $\bm{g}_r$ at the beginning of the epoch, and (ii) {refining the $\bm{g}_r$ directions and the $\bm{W}$ parameters in an adversarial manner with $O_r$ fixed}.
\begin{table}
\caption{Test power and time consumption at 0.01 perturbation}
\begin{tabular}{l|lll}
\hline
          & Test Power    & Opt. Time      & Speed-up     \\ \hline
SKSD-rg+Ex & 0.95          & 0.04s          & 254x         \\ \hline
SKSD-rg+KE & \textbf{0.67} & \textbf{0.19s} & \textbf{53x} \\
SKSD-rg+GO & 0.45          & 10.15s         & 1x           \\ \hline
\end{tabular}
\label{tab: RBM GOF}
\end{table}
For \textit{SKSD-g+GO}, we fix basis $O_r=\bm{I}$ and only update $\bm{g}_r$ with \textit{GO}.
We refer to appendix \ref{appendix subsec: model learning ICA} for details on the setup and training procedure. 
\begin{figure}
    \centering
    \includegraphics[scale=0.19]{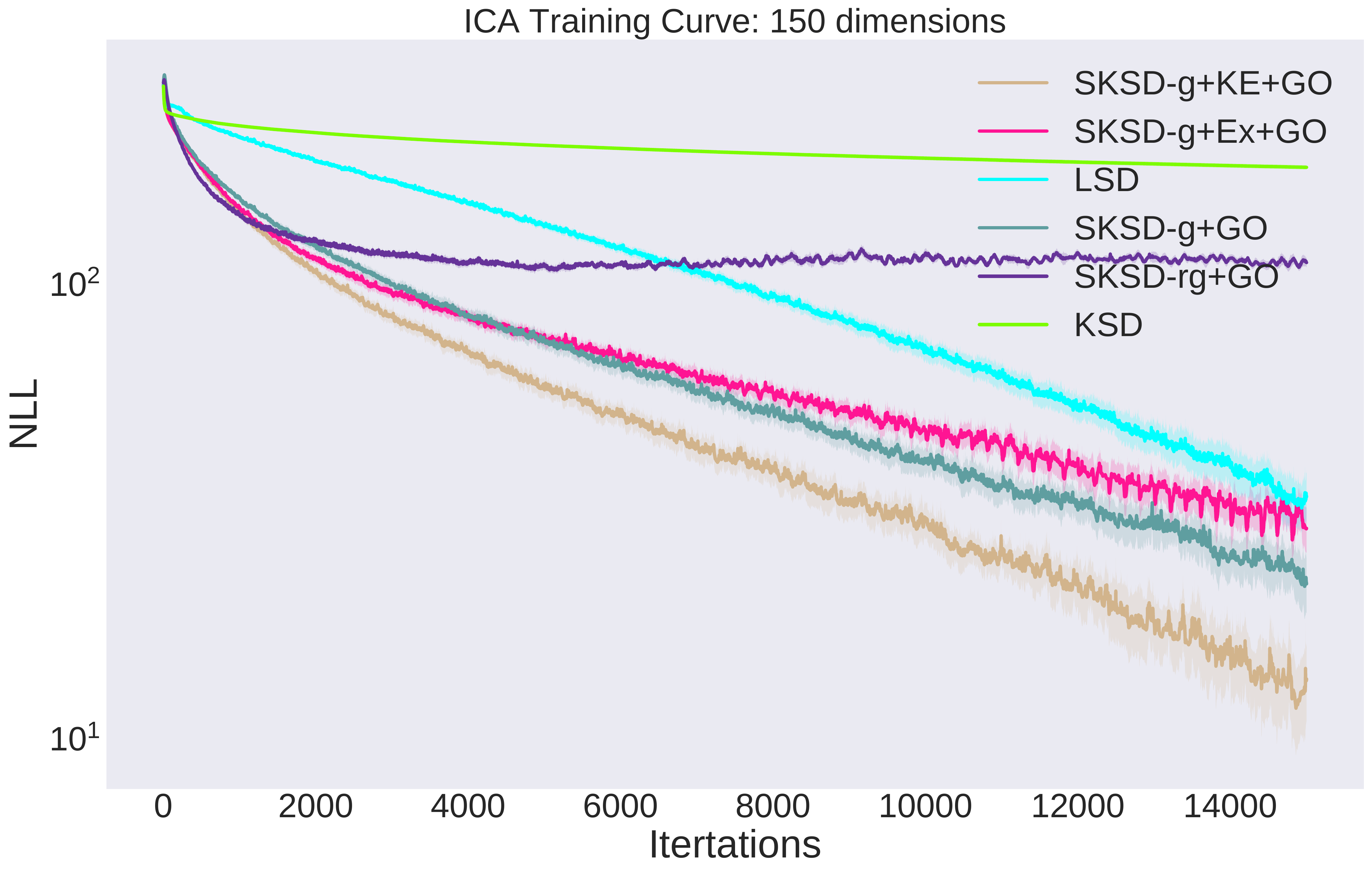}
    \caption{Training Curve of ICA model, where y-axes indicates the test NLL.}
    \label{fig: ICA training curve}
\end{figure}
We see from Figure \ref{fig: ICA training curve} that \textit{SKSD-g+KE+GO} converges significantly faster at $150$ dimensions than all baselines; moreover, it has much better NLL (Table \ref{tab: ICA NLL}). We argue this performance gain is due to the use of the better orthogonal basis $O_r$ found by the greedy algorithm, showing the advantages of better $O_r$ in model learning. On the other hand, the importance of orthogonality in $O_r$ is indicated by the poor performance of \textit{SKSD-rg+GO}, as gradient updates for $\bm{r}$ violate the orthogonality constraint. The goal of learning is to train the model to match the data distribution along every slicing direction, and the orthogonality constraint can help prevent the model from ignoring important slicing directions. 

Interestingly, \textit{SKSD-g+Ex+GO} performs worse than \textit{+KE+GO}. We hypothesize that this is because the \textit{+Ex+GO} approach often focuses on directions {with large discriminative power but with less useful learning signal } (see appendix \ref{appendix subsec: model learning ICA}). $LSD$ performs well in low dimensional problems. However, in high dimensional learning tasks it spends too much time on finding good test functions, which slows down the convergence significantly.

\section{Related Work}
\label{sec: Related work}
\paragraph{Active subspace method (ASM):} ASM is initially proposed as a dimensionality reduction method, which constructs a subspace with low-rank projectors \cite{constantine2014active} according to the subspace Poincar\'e inequality. \citet{zahm2020gradient} showed promising results on the application of ASM to approximating multivariate functions with lower dimensional ones. However, they only considered the subspace Poincar\'e inequality under Gaussian measures, and a generalization to a broader family of famlity is proposed by \citet{parente2020generalized}. Another closely related approach uses logarithmic Sobolev inequality instead to construct the active subspace \citep{zahm2018certified}, which can be interpreted as finding the optimal subspace to minimize a KL-divergence. It has shown successes in Bayesian inverse problems and particle inference \cite{chen2019projected}. \wg{However, as the ASM method is based on the eigen-decomposition of the sensitivity matrix, there is a potential limitation when the sensitivity matrix is estimated by Monte-Carlo method. We prove this limitation in appendix \ref{Appendix: Perturb}.}

\paragraph{Sliced discrepancies:} Existing examples of sliced discrepancies can be roughly divided into two groups. Most of them belong to the first group, and they use the slicing idea to improve computational efficiency. For example, sliced Wasserstein distance projects distributions onto one dimensional slices so that the corresponding distance has an analytic form \citep{kolouri2019generalized,deshpande2019max}. Sliced score matching uses Hutchinson's trick to avoid the expensive computation of the Hessian matrix \citep{song2020sliced}. The second group focuses on the curse-of-dimensionality issue which remains to be addressed. To the best of our knowledge, existing integral probability metrics in this category include \textit{SSD} \citep{gong2021sliced} and \textit{kernelized complete conditional Stein discrepancy} \citep[KCC-SD,][]{singhal2019kernelized}. The former is more general and requires less restrictive assumptions, while the latter requires samples from complete conditional distributions. Recent work has also investigated the statistical properties of sliced discrepancies \citep{nadjahi2020statistical}.

\section{Conclusion}
We have proposed the active slice method as a practical solution for searching good slices for \textit{SKSD}. We first prove that the validity of the kernelized discrepancy only requires finite number of random slices instead of optimal ones, giving us huge freedom to select slice directions. Then by analyzing the approximation quality of \textit{SSD} to \textit{SKSD}, we proposed to find active slices by optimizing surrogate optimization tasks. Experiments on high-dimensional GOF tests and ICA training showed the active slice method performed the best across a number of competitive baselines in terms of both test performance and run-time. Future research directions include better score difference estimation methods, non-linear generalizations of slice projections, and the application of the active slice method to other discrepancies. 
\bibliography{example_paper}

\begin{thebibliography}{8}
\providecommand{\natexlab}[1]{#1}
\providecommand{\url}[1]{\texttt{#1}}
\expandafter\ifx\csname urlstyle\endcsname\relax
  \providecommand{\doi}[1]{doi: #1}\else
  \providecommand{\doi}{doi: \begingroup \urlstyle{rm}\Url}\fi

\bibitem[Author(2021)]{anonymous}
Author, N.~N.
\newblock Suppressed for anonymity, 2021.

\bibitem[Duda et~al.(2000)Duda, Hart, and Stork]{DudaHart2nd}
Duda, R.~O., Hart, P.~E., and Stork, D.~G.
\newblock \emph{Pattern Classification}.
\newblock John Wiley and Sons, 2nd edition, 2000.

\bibitem[Kearns(1989)]{kearns89}
Kearns, M.~J.
\newblock \emph{Computational Complexity of Machine Learning}.
\newblock PhD thesis, Department of Computer Science, Harvard University, 1989.

\bibitem[Langley(2000)]{langley00}
Langley, P.
\newblock Crafting papers on machine learning.
\newblock In Langley, P. (ed.), \emph{Proceedings of the 17th International
  Conference on Machine Learning (ICML 2000)}, pp.\  1207--1216, Stanford, CA,
  2000. Morgan Kaufmann.

\bibitem[Michalski et~al.(1983)Michalski, Carbonell, and
  Mitchell]{MachineLearningI}
Michalski, R.~S., Carbonell, J.~G., and Mitchell, T.~M. (eds.).
\newblock \emph{Machine Learning: An Artificial Intelligence Approach, Vol. I}.
\newblock Tioga, Palo Alto, CA, 1983.

\bibitem[Mitchell(1980)]{mitchell80}
Mitchell, T.~M.
\newblock The need for biases in learning generalizations.
\newblock Technical report, Computer Science Department, Rutgers University,
  New Brunswick, MA, 1980.

\bibitem[Newell \& Rosenbloom(1981)Newell and Rosenbloom]{Newell81}
Newell, A. and Rosenbloom, P.~S.
\newblock Mechanisms of skill acquisition and the law of practice.
\newblock In Anderson, J.~R. (ed.), \emph{Cognitive Skills and Their
  Acquisition}, chapter~1, pp.\  1--51. Lawrence Erlbaum Associates, Inc.,
  Hillsdale, NJ, 1981.

\bibitem[Samuel(1959)]{Samuel59}
Samuel, A.~L.
\newblock Some studies in machine learning using the game of checkers.
\newblock \emph{IBM Journal of Research and Development}, 3\penalty0
  (3):\penalty0 211--229, 1959.

\end{thebibliography}


\begin{thebibliography}{45}
\providecommand{\natexlab}[1]{#1}
\providecommand{\url}[1]{\texttt{#1}}
\expandafter\ifx\csname urlstyle\endcsname\relax
  \providecommand{\doi}[1]{doi: #1}\else
  \providecommand{\doi}{doi: \begingroup \urlstyle{rm}\Url}\fi

\bibitem[Arcones \& Gine(1992)Arcones and Gine]{arcones1992bootstrap}
Arcones, M.~A. and Gine, E.
\newblock On the bootstrap of u and v statistics.
\newblock \emph{The Annals of Statistics}, pp.\  655--674, 1992.

\bibitem[Ceylan \& Gutmann(2018)Ceylan and Gutmann]{ceylan2018conditional}
Ceylan, C. and Gutmann, M.~U.
\newblock Conditional noise-contrastive estimation of unnormalised models.
\newblock In \emph{International Conference on Machine Learning}, pp.\
  726--734. PMLR, 2018.

\bibitem[Chen et~al.(2019)Chen, Wu, Chen, O'Leary-Roseberry, and
  Ghattas]{chen2019projected}
Chen, P., Wu, K., Chen, J., O'Leary-Roseberry, T., and Ghattas, O.
\newblock Projected stein variational newton: A fast and scalable bayesian
  inference method in high dimensions.
\newblock \emph{arXiv preprint arXiv:1901.08659}, 2019.

\bibitem[Chwialkowski et~al.(2016)Chwialkowski, Strathmann, and
  Gretton]{chwialkowski2016kernel}
Chwialkowski, K., Strathmann, H., and Gretton, A.
\newblock A kernel test of goodness of fit.
\newblock JMLR: Workshop and Conference Proceedings, 2016.

\bibitem[Comon(1994)]{comon1994independent}
Comon, P.
\newblock Independent component analysis, a new concept?
\newblock \emph{Signal processing}, 36\penalty0 (3):\penalty0 287--314, 1994.

\bibitem[Constantine et~al.(2014)Constantine, Dow, and
  Wang]{constantine2014active}
Constantine, P.~G., Dow, E., and Wang, Q.
\newblock Active subspace methods in theory and practice: applications to
  kriging surfaces.
\newblock \emph{SIAM Journal on Scientific Computing}, 36\penalty0
  (4):\penalty0 A1500--A1524, 2014.

\bibitem[Deshpande et~al.(2019)Deshpande, Hu, Sun, Pyrros, Siddiqui, Koyejo,
  Zhao, Forsyth, and Schwing]{deshpande2019max}
Deshpande, I., Hu, Y.-T., Sun, R., Pyrros, A., Siddiqui, N., Koyejo, S., Zhao,
  Z., Forsyth, D., and Schwing, A.~G.
\newblock Max-sliced wasserstein distance and its use for gans.
\newblock In \emph{Proceedings of the IEEE/CVF Conference on Computer Vision
  and Pattern Recognition}, pp.\  10648--10656, 2019.

\bibitem[Gong et~al.(2021)Gong, Li, and Hern{\'a}ndez-Lobato]{gong2021sliced}
Gong, W., Li, Y., and Hern{\'a}ndez-Lobato, J.~M.
\newblock Sliced kernelized stein discrepancy.
\newblock In \emph{International Conference on Learning Representations}, 2021.
\newblock URL \url{https://openreview.net/forum?id=t0TaKv0Gx6Z}.

\bibitem[Gorham \& Mackey(2015)Gorham and Mackey]{gorham2015sd}
Gorham, J. and Mackey, L.
\newblock Measuring sample quality with stein’s method.
\newblock In \emph{Advances in Neural Information Processing Systems}, pp.\
  226--234, 2015.

\bibitem[Gorham \& Mackey(2017)Gorham and Mackey]{gorham2017measuring}
Gorham, J. and Mackey, L.
\newblock Measuring sample quality with kernels.
\newblock In \emph{International Conference on Machine Learning}, pp.\
  1292--1301. PMLR, 2017.

\bibitem[Grathwohl et~al.(2020)Grathwohl, Wang, Jacobsen, Duvenaud, and
  Zemel]{grathwohl2020cutting}
Grathwohl, W., Wang, K.-C., Jacobsen, J.-H., Duvenaud, D., and Zemel, R.
\newblock Cutting out the middle-man: Training and evaluating energy-based
  models without sampling.
\newblock \emph{arXiv preprint arXiv:2002.05616}, 2020.

\bibitem[Gretton et~al.(2012)Gretton, Borgwardt, Rasch, Sch{\"o}lkopf, and
  Smola]{gretton2012kernel}
Gretton, A., Borgwardt, K.~M., Rasch, M.~J., Sch{\"o}lkopf, B., and Smola, A.
\newblock A kernel two-sample test.
\newblock \emph{The Journal of Machine Learning Research}, 13\penalty0
  (1):\penalty0 723--773, 2012.

\bibitem[Gutmann \& Hyv{\"a}rinen(2010)Gutmann and
  Hyv{\"a}rinen]{gutmann2010noise}
Gutmann, M. and Hyv{\"a}rinen, A.
\newblock Noise-contrastive estimation: A new estimation principle for
  unnormalized statistical models.
\newblock In \emph{Proceedings of the Thirteenth International Conference on
  Artificial Intelligence and Statistics}, pp.\  297--304. JMLR Workshop and
  Conference Proceedings, 2010.

\bibitem[Hinton \& Salakhutdinov(2006)Hinton and
  Salakhutdinov]{hinton2006reducing}
Hinton, G.~E. and Salakhutdinov, R.~R.
\newblock Reducing the dimensionality of data with neural networks.
\newblock \emph{science}, 313\penalty0 (5786):\penalty0 504--507, 2006.

\bibitem[Hoeffding(1992)]{hoeffding1992class}
Hoeffding, W.
\newblock A class of statistics with asymptotically normal distribution.
\newblock In \emph{Breakthroughs in statistics}, pp.\  308--334. Springer,
  1992.

\bibitem[Hu et~al.(2018)Hu, Chen, Sun, Bai, Ye, and Cheng]{hu2018stein}
Hu, T., Chen, Z., Sun, H., Bai, J., Ye, M., and Cheng, G.
\newblock Stein neural sampler.
\newblock \emph{arXiv preprint arXiv:1810.03545}, 2018.

\bibitem[Huggins \& Mackey(2018)Huggins and Mackey]{huggins2018random}
Huggins, J. and Mackey, L.
\newblock Random feature stein discrepancies.
\newblock In \emph{Advances in Neural Information Processing Systems}, pp.\
  1899--1909, 2018.

\bibitem[Huskova \& Janssen(1993)Huskova and Janssen]{huskova1993consistency}
Huskova, M. and Janssen, P.
\newblock Consistency of the generalized bootstrap for degenerate u-statistics.
\newblock \emph{The Annals of Statistics}, pp.\  1811--1823, 1993.

\bibitem[Hyv{\"a}rinen \& Dayan(2005)Hyv{\"a}rinen and
  Dayan]{hyvarinen2005estimation}
Hyv{\"a}rinen, A. and Dayan, P.
\newblock Estimation of non-normalized statistical models by score matching.
\newblock \emph{Journal of Machine Learning Research}, 6\penalty0 (4), 2005.

\bibitem[Jitkrittum et~al.(2017)Jitkrittum, Xu, Szab{\'o}, Fukumizu, and
  Gretton]{jitkrittum2017linear}
Jitkrittum, W., Xu, W., Szab{\'o}, Z., Fukumizu, K., and Gretton, A.
\newblock A linear-time kernel goodness-of-fit test.
\newblock In \emph{Advances in Neural Information Processing Systems}, pp.\
  262--271, 2017.

\bibitem[Kingma \& Ba(2014)Kingma and Ba]{kingma2014adam}
Kingma, D.~P. and Ba, J.
\newblock Adam: A method for stochastic optimization.
\newblock \emph{arXiv preprint arXiv:1412.6980}, 2014.

\bibitem[Kolouri et~al.(2019)Kolouri, Nadjahi, Simsekli, Badeau, and
  Rohde]{kolouri2019generalized}
Kolouri, S., Nadjahi, K., Simsekli, U., Badeau, R., and Rohde, G.~K.
\newblock Generalized sliced wasserstein distances.
\newblock \emph{arXiv preprint arXiv:1902.00434}, 2019.

\bibitem[Li \& Turner(2017)Li and Turner]{li2017gradient}
Li, Y. and Turner, R.~E.
\newblock Gradient estimators for implicit models.
\newblock \emph{arXiv preprint arXiv:1705.07107}, 2017.

\bibitem[Liu \& Wang(2016)Liu and Wang]{liu2016stein}
Liu, Q. and Wang, D.
\newblock Stein variational gradient descent: A general purpose bayesian
  inference algorithm.
\newblock \emph{arXiv preprint arXiv:1608.04471}, 2016.

\bibitem[Liu et~al.(2016)Liu, Lee, and Jordan]{liu2016kernelized}
Liu, Q., Lee, J., and Jordan, M.
\newblock A kernelized stein discrepancy for goodness-of-fit tests.
\newblock In \emph{International conference on machine learning}, pp.\
  276--284, 2016.

\bibitem[Mityagin(2015)]{mityagin2015zero}
Mityagin, B.
\newblock The zero set of a real analytic function.
\newblock \emph{arXiv preprint arXiv:1512.07276}, 2015.

\bibitem[Nadjahi et~al.(2020)Nadjahi, Durmus, Chizat, Kolouri, Shahrampour, and
  {\c{S}}im{\c{s}}ekli]{nadjahi2020statistical}
Nadjahi, K., Durmus, A., Chizat, L., Kolouri, S., Shahrampour, S., and
  {\c{S}}im{\c{s}}ekli, U.
\newblock Statistical and topological properties of sliced probability
  divergences.
\newblock \emph{arXiv preprint arXiv:2003.05783}, 2020.

\bibitem[Parente et~al.(2020)Parente, Wallin, Wohlmuth,
  et~al.]{parente2020generalized}
Parente, M.~T., Wallin, J., Wohlmuth, B., et~al.
\newblock Generalized bounds for active subspaces.
\newblock \emph{Electronic Journal of Statistics}, 14\penalty0 (1):\penalty0
  917--943, 2020.

\bibitem[Pu et~al.(2017)Pu, Gan, Henao, Li, Han, and Carin]{pu2017vae}
Pu, Y., Gan, Z., Henao, R., Li, C., Han, S., and Carin, L.
\newblock Vae learning via stein variational gradient descent.
\newblock \emph{arXiv preprint arXiv:1704.05155}, 2017.

\bibitem[Saibaba et~al.(2021)Saibaba, Hart, and van
  Bloemen~Waanders]{saibaba2021randomized}
Saibaba, A.~K., Hart, J., and van Bloemen~Waanders, B.
\newblock Randomized algorithms for generalized singular value decomposition
  with application to sensitivity analysis.
\newblock \emph{Numerical Linear Algebra with Applications}, pp.\  e2364, 2021.

\bibitem[Sameh \& Tong(2000)Sameh and Tong]{sameh2000trace}
Sameh, A. and Tong, Z.
\newblock The trace minimization method for the symmetric generalized
  eigenvalue problem.
\newblock \emph{Journal of computational and applied mathematics}, 123\penalty0
  (1-2):\penalty0 155--175, 2000.

\bibitem[Serfling(2009)]{serfling2009approximation}
Serfling, R.~J.
\newblock \emph{Approximation theorems of mathematical statistics}, volume 162.
\newblock John Wiley \& Sons, 2009.

\bibitem[Shi et~al.(2018)Shi, Sun, and Zhu]{shi2018spectral}
Shi, J., Sun, S., and Zhu, J.
\newblock A spectral approach to gradient estimation for implicit
  distributions.
\newblock \emph{arXiv preprint arXiv:1806.02925}, 2018.

\bibitem[Singhal et~al.(2019)Singhal, Han, Lahlou, and
  Ranganath]{singhal2019kernelized}
Singhal, R., Han, X., Lahlou, S., and Ranganath, R.
\newblock Kernelized complete conditional stein discrepancy.
\newblock \emph{arXiv preprint arXiv:1904.04478}, 2019.

\bibitem[Song \& Ermon(2019)Song and Ermon]{song2019generative}
Song, Y. and Ermon, S.
\newblock Generative modeling by estimating gradients of the data distribution.
\newblock In \emph{Advances in Neural Information Processing Systems}, pp.\
  11918--11930, 2019.

\bibitem[Song et~al.(2020)Song, Garg, Shi, and Ermon]{song2020sliced}
Song, Y., Garg, S., Shi, J., and Ermon, S.
\newblock Sliced score matching: A scalable approach to density and score
  estimation.
\newblock In \emph{Uncertainty in Artificial Intelligence}, pp.\  574--584.
  PMLR, 2020.

\bibitem[Sriperumbudur et~al.(2017)Sriperumbudur, Fukumizu, Gretton,
  Hyv{\"a}rinen, and Kumar]{sriperumbudur2017density}
Sriperumbudur, B., Fukumizu, K., Gretton, A., Hyv{\"a}rinen, A., and Kumar, R.
\newblock Density estimation in infinite dimensional exponential families.
\newblock \emph{The Journal of Machine Learning Research}, 18\penalty0
  (1):\penalty0 1830--1888, 2017.

\bibitem[Sriperumbudur et~al.(2011)Sriperumbudur, Fukumizu, and
  Lanckriet]{sriperumbudur2011universality}
Sriperumbudur, B.~K., Fukumizu, K., and Lanckriet, G.~R.
\newblock Universality, characteristic kernels and rkhs embedding of measures.
\newblock \emph{Journal of Machine Learning Research}, 12\penalty0 (7), 2011.

\bibitem[Sutherland et~al.(2018)Sutherland, Strathmann, Arbel, and
  Gretton]{sutherland2018efficient}
Sutherland, D., Strathmann, H., Arbel, M., and Gretton, A.
\newblock Efficient and principled score estimation with nystr{\"o}m kernel
  exponential families.
\newblock In \emph{International Conference on Artificial Intelligence and
  Statistics}, pp.\  652--660. PMLR, 2018.

\bibitem[Welling et~al.()Welling, Rosen-Zvi, and
  Hinton]{welling2004exponential}
Welling, M., Rosen-Zvi, M., and Hinton, G.~E.
\newblock Exponential family harmoniums with an application to information
  retrieval.
\newblock Citeseer.

\bibitem[Wenliang et~al.(2019)Wenliang, Sutherland, Strathmann, and
  Gretton]{wenliang2019learning}
Wenliang, L., Sutherland, D., Strathmann, H., and Gretton, A.
\newblock Learning deep kernels for exponential family densities.
\newblock In \emph{International Conference on Machine Learning}, pp.\
  6737--6746. PMLR, 2019.

\bibitem[Yu et~al.(2015)Yu, Wang, and Samworth]{yu2015useful}
Yu, Y., Wang, T., and Samworth, R.~J.
\newblock A useful variant of the davis--kahan theorem for statisticians.
\newblock \emph{Biometrika}, 102\penalty0 (2):\penalty0 315--323, 2015.

\bibitem[Zahm et~al.(2018)Zahm, Cui, Law, Spantini, and
  Marzouk]{zahm2018certified}
Zahm, O., Cui, T., Law, K., Spantini, A., and Marzouk, Y.
\newblock Certified dimension reduction in nonlinear bayesian inverse problems.
\newblock \emph{arXiv preprint arXiv:1807.03712}, 2018.

\bibitem[Zahm et~al.(2020)Zahm, Constantine, Prieur, and
  Marzouk]{zahm2020gradient}
Zahm, O., Constantine, P.~G., Prieur, C., and Marzouk, Y.~M.
\newblock Gradient-based dimension reduction of multivariate vector-valued
  functions.
\newblock \emph{SIAM Journal on Scientific Computing}, 42\penalty0
  (1):\penalty0 A534--A558, 2020.

\bibitem[Zhou et~al.(2020)Zhou, Shi, and Zhu]{zhou2020nonparametric}
Zhou, Y., Shi, J., and Zhu, J.
\newblock Nonparametric score estimators.
\newblock \emph{arXiv preprint arXiv:2005.10099}, 2020.

\end{thebibliography}
\bibliographystyle{icml2021}

\clearpage
\appendix
\section{Terms and Notations}
\label{appendix: Notations}
For the clarity of the paper, we give a summary of the commonly used notations in the main text and proof. \\
\textbf{Symbols:}\\
\begin{tabular}{p{1.75cm}p{5.75cm}}
$\bm{s}_p(\bm{x})$ & $\nabla_{\bm{x}}\log p(\bm{x})$\\
$s_p^r(\bm{x})$ & Projected score function $\nabla_{\bm{x}}\log p(\bm{x})^T\bm{r}$\\
$\mathcal{X}$ & A subset of $\mathbb{R}^D$ \\
$\mathcal{K}$ & A subset of $\mathbb{R}$.\\
$k_{rg_r}$ & kernel function $k:\mathcal{K}\times\mathcal{K}\rightarrow \mathcal{R}$\\
$\mathcal{H}_{rg_r}$ & Induced RKHS by the kernel $k_{rg_r}$.\\
$||\cdot||_{\mathcal{H}_{rg_r}}$ & RKHS norm of $\mathcal{H}_{rg_r}$\\
$\bm{g}_r$ & Input projection direction (e.g. $\bm{x}^T\bm{g}_r$) for corresponding $\bm{r}$.\\
$\bm{r}$ & Score projection direction (e.g. $s_p^r(\bm{x})=\bm{s}_p(\bm{x})^T\bm{r}$)\\
$S_{\text{max}_{g_r}}$ & \textit{maxSSD-g} (Eq.\ref{eq: maxSSD-g}).\\
$S_{{g_r}}$ & \textit{SSD-g}, i.e. $S_{\text{max}_{g_r}}$ (Eq.\ref{eq: maxSSD-g}) without $\sup_{\bm{g}_r}$. But with summation of $O_r$.\\
$S_{{rg_r}}$ & \textit{SSD-rg}, i.e. $S_{\text{max}_{g_r}}$ (Eq.\ref{eq: maxSSD-g}) without $\sup_{\bm{g}_r}$ and summation of $O_r$. Instead, we use specific $\bm{r}$.\\
$SK_{\text{max}_{g_r}}$ &\textit{maxSKSD-g}. The kernelized verison of $S_{\text{max}_{g_r}}$\\
$SK_{{g_r}}$ &\textit{SKSD-g}. The kernelized verison of $S_{{g_r}}$\\
$SK_{{rg_r}}$ &\textit{SKSD-rg}. The kernelized verison of $S_{{rg_r}}$\\
$\text{PSD}$ & Projected Stein discrepancy (Eq.\ref{eq: PSD})\\
$\text{PSD}_{r}$ & Projected Stein discrepancy (Eq.\ref{eq: PSD}) without summation $O_r$ and  use specific $\bm{r}$ instead.\\
$f_r^*$ & Optimal test function for $\text{PSD}$. $f_r^*(\bm{x})\propto s_p^r(\bm{x})-s_q^r(\bm{x})$\\
$h_{rg_r}^*$ & Optimal test function for $S_{g_r}$ with specific $\bm{r}$ and $\bm{g}_r$, defined in Eq.\ref{eq: optimal test function maxSSD-g}.\\
$^*$& This indicates the optimal test function (e.g. $f_r^*$)\\
$C_{\text{sup}}$ & Supremum of Poincar\'e constant defined in assumption 6. 
\end{tabular}
\subsection{``Sub-optimal" variants of SSD}
\wg{
For the ease of the analysis, we want to define the notations without the $\sup$ opeartor over the slice directions $\bm{r}$, $\bm{g}_r$. Here, we define \emph{SSD-g} ($S_{g_r}$) as the \emph{maxSSD-g} ($S_{\text{max}_{g_r}}$ in Eq.\ref{eq: maxSSD-g}) without the $\sup_{\bm{g}_r}$.
\begin{equation}
    \begin{split}
        S_{g_r} &= \sum_{\bm{r}\in O_r}\sup_{h_{rg_r}\in\mathcal{F}_q}\mathbb{E}_{q}[s_p^r(\bm{x})h_{rg_r}(\bm{x}^T\bm{g}_r)+\\
        &\bm{r}^T\bm{g}_r\nabla_{\bm{x}^T\bm{g}_r}h_{rg_r}(\bm{x}^T\bm{g}_r)]
    \end{split}
\end{equation}
Similarly, we define \emph{SSD-rg} ($S_{rg_r}$) as \emph{maxSSD-rg} ($S_{\text{max}_{rg_r}}$ in Eq.\ref{eq: maxSSD-rg}) without $\sup_{\bm{r},\bm{g}_r}$:
\begin{equation}
    \begin{split}
        S_{rg_r} &= \sup_{h_{rg_r}\in\mathcal{F}_q}\mathbb{E}_{q}[s_p^r(\bm{x})h_{rg_r}(\bm{x}^T\bm{g}_r)+\\
        &\bm{r}^T\bm{g}_r\nabla_{\bm{x}^T\bm{g}_r}h_{rg_r}(\bm{x}^T\bm{g}_r)]
    \end{split}
    \label{eq: SSD-rg}
\end{equation}
As for each of the above "optimal" discrepancies, it has the corresponding kernelized version. Therefore, we need to define their "un-optimal" version as well. We define \emph{SKSD-g} ($SK_{g_r}$) as \emph{maxSKSD-g} ($SK_{\text{max}_{g_r}}$ in Eq.\ref{eq: maxSKSD-g}) as 
\begin{equation}
    SK_{{g_r}}=\sum_{\bm{r}\in O_r}{{||\mathbb{E}_q[\xi_{p,r,g_r}(\bm{x})]||^2_{\mathcal{H}_{rg_r}}}}\,,
    \label{eq: SKSD-g}
\end{equation}
Similarly, we define \emph{SKSD-rg} ($SK_{rg_r}$) as \emph{maxSKSD-rg} ($SK_{\text{max}_{rg_r}}$ in Eq.\ref{eq: maxSKSD-rg}) as 
\begin{equation}
    SK_{rg_r} = ||\mathbb{E}_q[\xi_{p,r,g_r}(\bm{x})]||_{\mathcal{H}_{rg_r}}^2
    \label{eq: SKSD-rg}
\end{equation}
}

\section{Assumptions and Definitions}
\label{appendix: definition and assumption}
\begin{definition}[Inner product in Hilbert space]
We denote the algebraic space $\mathbb{R}^D$ refers to a parameter space of dimension $D$. The Borel sets of $\mathbb{R}^D$ is denoted as $\mathcal{B}(\mathbb{R}^D)$, and we let $\mu(x)$ be a probability measure on $\bm{x}$. We define
\begin{equation}
    \mathcal{H}_{\mu}=L^2(\mathbb{R}^D,\mathcal{B}(\mathbb{R}^D),\mu)
\end{equation}
as the Hilbert space which contains all the measurable functions $f:\mathbb{R}^D\rightarrow \mathbb{R}$, such that $||f||_{\mathcal{H}_{\mu}}\leq \infty$, where we define inner product $\langle\cdot,\cdot\rangle_{\mathcal{H}_{\mu}}$ to be 
\begin{equation}
    \langle f,g\rangle_{\mathcal{H}_{\mu}}=\int {f(\bm{x})g(\bm{x})d\mu(x)}
\end{equation}
for all $f,g\in \mathcal{H}_{\mu}$
\end{definition}

\begin{definition}{(\textbf{Stein Class} \citep{liu2016kernelized})}
Assume distribution $q$ has continuous and differentiable density $q(\bm{x})$. A function $f$ defined on the domain $\mathcal{X}\subseteq \mathbb{R}^D$, $f:\mathcal{X}\rightarrow \mathbb{R}$ is in the \textbf{{Stein class} of $q$} if {$f$ is smooth} and satisfies 
\begin{equation}
    \int_{\mathcal{X}}{\nabla_{x}(f(\bm{x})q(\bm{x}))d\bm{x}}=0
\end{equation}
\label{def: Stein class}
\end{definition}
We call a function $f(\bm{x})\in \mathcal{F}_q$ if $f$ belongs to the Stein class of $q$. We say vector-valued function $\bm{f}(\bm{x}):\mathcal{X}\subseteq \mathbb{R}^D\rightarrow\mathbb{R}^m\in\mathcal{F}_q$ if each component of $\bm{f}$ belongs to the Stein class of $q$.  
\begin{definition}[Stein Identity]
Assume $q$ is a smooth density satisfied assumption 1 , then we have
\begin{equation}
    \mathbb{E}_{q}\left[s_{q}(x) f(x)^{T}+\nabla f(x)\right]=0
\label{eq: Stein Identity}
\end{equation}
for any functions $f: \mathcal{X}\subseteq\mathbb{R}^{D} \rightarrow \mathbb{R}^{D}$ in Stein class of $q$.
\end{definition}

We can easily see that the above holds true for $\mathcal{X}=\mathbb{R}^D$ if 
\begin{equation}
    \lim_{||\bm{x}||\rightarrow\infty}{q(\bm{x})f(\bm{x})=0}
\end{equation}

\paragraph{Assumption 1}\label{assumption 1}(Properties of densities) Assume the two probability distributions $p$, $q$ has continuous differentiable density $p(\bm{x})$, $q(\bm{x})$ supported on $\mathcal{X}\subseteq \mathbb{R}^D$, such that the induced set $\mathcal{K}=\{y\in\mathbb{R}|y=\bm{x}^T\bm{g},||\bm{g}||^2=1, \bm{x}\in\mathcal{X}\}$ is \textit{locally compact Hausdorff} (LCH) for all possible $\bm{g}\in\mathbb{S}^{D-1}$.  If $\mathcal{X}=\mathbb{R}^D$, then the density $q$ satisfies: 
$\lim_{||\bm{x}||\rightarrow \infty}{q(\bm{x})= 0}$. If $\mathcal{X}\subset\mathbb{R}^D$ is compact, then $q(\bm{x})=0$ at boundary $\partial \mathcal{X}$. 

\paragraph{Assumption 2}\label{assumption 2}(Regularity of score functions) Denote the score function of $p(\bm{x})$ as $\bm{s}_p(\bm{x})=\nabla_{\bm{x}}\log p(\bm{x}) \in \mathbb{R}^D$ and score function of $q(\bm{x})$ accordingly. Assume the score functions are bounded continuous differentiable functions and satisfying
\begin{equation}
    \begin{split}
        &\int_{\mathcal{X}}{q(\bm{x})|(s_p(\bm{x})-s_q(\bm{x}))^T\bm{r}|d\bm{x}}<\infty\\
        &\int_{\mathcal{X}}{q(\bm{x})||(s_p(\bm{x})-s_q(\bm{x}))^T\bm{r}||^2d\bm{x}}<\infty
    \end{split}
    \label{eq: assumption 2}
\end{equation}
for all $\bm{r}$
where $\bm{r}\in \mathbb{S}^{D-1}$.

\paragraph{Assumption 3}{\label{assumption 3}}(Test functions) Assume the test function $h_{rg_r}:\mathcal{K}\subseteq\mathbb{R}\rightarrow \mathbb{R}$ is smooth and belongs to the Stein class of $q$. Specifically, if with assumption \hyperref[assumption 1]{1}, we only requires $h_{rg_r}$ to be a bounded continuous function. Similarly, we assume this also holds for PSD (eq.\ref{eq: PSD}) test function $f_r(\bm{x})$.

\paragraph{Assumption 4}{\label{assumption 4}}(Bounded Conditional Expectation) Define 
\begin{equation}
    h^*_{rg_r}(y_d)=\mathbb{E}_{q_{G_r}(\bm{y}_{-d}|y_d)}[(s_p^r(\bm{G}^{-1}_{r}\bm{y})-s_q^r(\bm{G}^{-1}_r\bm{y}))]
\end{equation}
as in proposition \ref{prop: optimality of maxSSD-g}. We assume $h^*_{rg_r}$ is uniformly bounded for all possible $\bm{g}_r\in\mathbb{S}^{D-1}$.

\paragraph{Assumption 5}{\label{assumption 5}}(universal kernel): We assume the kernel $k_{rg}:\mathcal{K}\times\mathcal{K}\rightarrow\mathbb{R}$ is bounded and $c_0-$universal.
\paragraph{Assumption 6}{\label{assumption 6}}(Real analytic translation invariant kernel): We assume the kernel is translation invariant $k(x,y)=\phi(x-y):\mathcal{K}\rightarrow \mathbb{R}$ and $\phi$ is a real analytic function. Additionally, we assume if $k(cx,cy)=k'(x,y)$ for a constant $c>0$ where $k'$ is also a $c_0-$universal kernel. For example, \textit{radial basis kernel function} (RBF) and \textit{inverse multiquadric} (IMQ) kernel satisfy these assumptions.

\paragraph{Assumption 7}{\label{assumption 7}}(Log-concave probabilities) Assume a probability distribution $q$ with density function such that $q(\bm{x})=\exp(-V(\bm{x}))$, where $V(\bm{x})$ is a convex function.
\paragraph{Assumption 8}{\label{assumption 8}}(Existence of supremum of Poincar\'e constant). For the Poincar\'e constant defined in lemma \ref{lemma: subspace poincare inequality}, the essential supremum exists $C_{ess,\bm{G}}=\text{ess sup}_{y_d}C_{y_d}<\infty$ and also the $C_{sup}=\sup_{\bm{G}}C_{ess,\bm{G}}<\infty$ exists over all possible orthogonal matrix $\bm{G}$.


\section{Detailed Background}
\label{appendix: Detailed Background}
\subsection{Stein Discrepancy}
\wg{Assume we have two differentiable probability density functions $q(\bm{x})$ and $p(\bm{x})$ where $\bm{x}\in\mathcal{X}\subset \mathbb{R}^D$. We further define a test function $\bm{f}:\mathcal{X}\rightarrow \mathbb{R}^D$ and a suitable test function family $\mathcal{F}_q$ called \emph{Stein's class of q}. Recall the Stein operator (Eq.\ref{eq: Stein operator}) is defined as 
\begin{equation}
    \mathcal{A}_p\bm{f}(\bm{x}) = \bm{s}_p(\bm{x})^T\bm{f}(\bm{x})+\nabla_{\bm{x}}^T]\bm{f}(\bm{x})
\end{equation}
The function family $\mathcal{F}_q$ is defined as 
\begin{equation}
    \mathcal{F}_{q}=\left\{\bm{f}:\mathcal{X}\rightarrow \mathbb{R}^D\;|\;\mathbb{E}_q[\mathcal{A}_q\bm{f}] = 0 \right\}
\end{equation}
This function space can be quite general. For example, if $\mathcal{X}=\mathbb{R}^D$, we only require $\bm{f}$ to be differentiable and vanishing at infinity. With all the notations, \emph{Stein discrepancy} is defined as follows:
\begin{equation}
    D_{SD}(q,p) = \sup_{\bm{f}\in\mathcal{F}_q}\mathbb{E}_q[\mathcal{A}_p\bm{f}(\bm{x})]
\end{equation}
which can be proved to be a valid discrepancy \citep{gorham2017measuring}. Stein discrepancy has been shown to be closely related to \emph{Fisher discrepancy} defined as 
\begin{equation}
    D_{F}(q,p) = \mathbb{E}_{q}{||\bm{s}_p(\bm{x})-\bm{s}_q(\bm{x})||^2_2}
\end{equation}
Indeed, \citet{hu2018stein} shows that the optimal test function for \emph{Stein discrepancy} has the form $\bm{f}^*(\bm{x})\propto \bm{s}_p(\bm{x})-\bm{s}_q(\bm{x})$. By substitution, we can show \emph{Stein discrepancy} is equivalent to \emph{Fisher divergence} up to a multiplicative constant.

Unfortunately, the score difference $\bm{s}_p(\bm{x})-\bm{s}_q(\bm{x})$ may be intractable in practice, making SD intractable as a consequence. Thus, \citet{liu2016kernelized,chwialkowski2016kernel} propose an variant of SD by restricting $\mathcal{F}_q$ to be a unit ball inside an RKHS $\mathcal{H}_k$ induced by a $c_0-$universal kernel $k:\mathcal{X}\times\mathcal{X}\rightarrow \mathbb{R}$. By using the reproducing properties, they propose \emph{kernelized Stein discrepancy} as 
\begin{equation}
\begin{split}
        D^2(q,p) &= ||\mathbb{E}_q[\bm{s}_p(\bm{x})k(\bm{x},\cdot)+\nabla_{\bm{x}}k(\bm{x},\cdot)]||^2_{\mathcal{H}_k}\\
        &=\mathbb{E}_{\bm{x},\bm{x'}\sim q}[u_p(\bm{x},\bm{x}')]
\end{split}
\end{equation}
where $u_p(\bm{x},\bm{x}')$ is 
\begin{equation}
    \begin{split}
        &u_p(\bm{x},\bm{x}') = \bm{s}_p(\bm{x})^Tk(\bm{x},\bm{x}')\bm{s}_p(\bm{x}')+\bm{s}_p(\bm{x})^T\nabla_{\bm{x}'}k(\bm{x},\bm{x}')\\
        +&\bm{s}_p(\bm{x}')^T\nabla_{\bm{x}}k(\bm{x},\bm{x}')+\nabla^2_{\bm{x},\bm{x}'}k(\bm{x},\bm{x}')
    \end{split}
\end{equation}
and $\bm{x}$, $\bm{x}'$ are i.i.d. samples from $q$.

Due to its tractability, it has been extensively used in statistical test e.g. GOF test \citet{liu2016kernelized,chwialkowski2016kernel,huggins2018random,jitkrittum2017linear}. However, recent work demonstrate KSD suffers from the curse-of-dimensionality problem \citet{gong2021sliced,huggins2018random,chwialkowski2016kernel}. One potential fix is to use another variant called \emph{sliced kernelized Stein discrepancy}.
}
\subsection{Sliced Kernelized Stein Discrepancy}
In this section, we give a more detailed introduction to sliced kernelized Stein discrepancy (SKSD). 
Recall the definition of Stein discrepancy:
\begin{equation}
    D_{SD}(q,p)=\sup_{\bm{f}\in\mathcal{F}_q}\mathbb{E}_q[\bm{s}_p^T(\bm{x})\bm{f}(\bm{x})+\nabla_{\bm{x}}^T\bm{f}(\bm{x})]
\end{equation}
In the original paper of \cite{gong2021sliced}, they argue that the curse of dimensionality comes from two sources: (i) the high dimensionality of the score function $\bm{s}_p:\mathcal{X}\subseteq\mathbb{R}^D\rightarrow \mathbb{R}^D$ and (ii) the test function input $\bm{x}\in\mathcal{X}\subseteq \mathbb{R}^D$. Therefore, authors proposed two slice directions $\bm{r}$, $\bm{g}$ to project $\bm{s}_p$ and $\bm{x}$ respectively. However, this projection is equivalent to throwing away most of the information possessed by $\bm{s}_p$ and $\bm{x}$. To tackle this problem, authors proposed the first member of the SSD family by considering over all possible directions of $\bm{r}$ and $\bm{g}$ (a distribution over $\bm{r}\sim p_r$, $\bm{g}\sim p_g$), called \textit{integrated sliced Stein discrepancy}:
\begin{equation}
\begin{split}
    S(q,p)&=\mathbb{E}_{p_r,p_g}\left[
    \sup_{h_{rg}\in\mathcal{F}_q}{\mathbb{E}_q[s^r_p(\bm{x})h_{rg}(\bm{x}^T\bm{g})+}\right.\\
    &\left.{\bm{r}^T\bm{g}\nabla_{\bm{x}^T\bm{g}}h_{rg}(\bm{x}^T\bm{g})]}
    \vphantom{\sup_{asd}}\right].
\end{split}
    \label{eq: integrated SSD}
\end{equation}
where $h_{rg_r}$ is the test function. 
Although it is theoretically valid (Theorem 1 in\cite{gong2021sliced}), its practical useage is limited by the intractability of the integral over $p_r$, $p_g$ and the optimal test function $h_{rg}$. Surprisingly, authors show that the integral over $\bm{r}$, $\bm{g}$ is not necessary for discrepancy validity. \wg{They achieved this in two steps.

The first step is to replace the expectation w.r.t. $\bm{r}$ by a finite summation over orthogonal basis. The author showed that this is a valid discrepancy, called \emph{orthogonal sliced Stein discrepancy} defined as
\begin{equation}
    \begin{split}
    S_O(q,p)&=\sum_{\bm{r}\in O_r}\mathbb{E}_{p_g}\left[
    \sup_{h_{rg}\in\mathcal{F}_q}{\mathbb{E}_q[s^r_p(\bm{x})h_{rg}(\bm{x}^T\bm{g})+}\right.\\
    &\left.{\bm{r}^T\bm{g}\nabla_{\bm{x}^T\bm{g}}h_{rg}(\bm{x}^T\bm{g})]}
    \vphantom{\sup_{asd}}\right].
\end{split}
    \label{eq: orthogonal SSD}
\end{equation} 
where $O_r$ is an orthogonal basis (e.g. one-hot vectors). 
The next step is to get rid of the expectation w.r.t. $\bm{g}$ by a supremum operator. This is called \emph{maxSSD-g}, which is defined as Eq.\ref{eq: maxSSD-g} in the main text. For a quick recall, we include \textit{maxSSD-g} in here:
\begin{equation}
    \begin{split}
    S_{\text{max}_{g_r}}(q,p)&=\sum_{\bm{r}\in O_r}{\sup_{\substack{h_{rg_r}\in\mathcal{F}_q\\\bm{g}_r\in\mathbb{S}^{D-1}}}{\mathbb{E}_q[{s}^r_p(\bm{x})h_{rg_r}(\bm{x}^T\bm{g}_r)+}}\\
    &{{\bm{r}^T\bm{g}_r\nabla_{\bm{x}^T\bm{g}_r}h_{rg_r}(\bm{x}^T\bm{g}_r)]}}
\end{split}
\end{equation}
}

Further, one can also use single optimal direction $\bm{r}$ to replace the summation over the orthogonal basis $O_r$, resulting in \textit{maxSSD-rg}($S_{\text{max}_{rg_r}}$):
\begin{equation}
\begin{split}
    S_{max_{rg_r}}(q,p)&=\sup_{h_{rg}\in\mathcal{F}_q,\bm{g}_r,\bm{r}\in\mathbb{S}^{D-1}}\mathbb{E}_q\left[s^r_p(\bm{x})h_{rg_r}(\bm{x}^T\bm{g}_r)+\right.\\
    &\left.\bm{r}^T\bm{g}_r\nabla_{\bm{x}^T\bm{g}_r}h_{rg_r}(\bm{x}^T\bm{g}_r)\right]
\end{split}
\label{eq: maxSSD-rg}
\end{equation}

Similar to KSD, authors addressed tractability issue of the optimal $h_{rg_r}$ by restricting the $\mathcal{F}_q$ to be a one-dimensional RKHS induced by a $c_0-$universal kernel $k_{rg}:\mathcal{K}\times\mathcal{K}\rightarrow \mathbb{R}$ where $\mathcal{K}\subseteq \mathbb{R}$. Thus, for each member of the above SSD family, we have a corresponding kernelized version. \wg{They are called \textit{integrated sliced kernelized Stein discrepancy}, \textit{orthogonal SKSD}, and \textit{max sliced kernelized Stein discrepancy} (including \textit{maxSKSD-g} and \textit{maxSKSD-rg}). In practice, \emph{maxSKSD-g} or \emph{maxSKSD-rg} is often preferred over the others due to its computational tractability, where their optimal slices for $\bm{r}$ and $\bm{g}_r$ are obtained by gradient-based optimization.} 

\wg{By reproducing properties of RKHS, one can define $\xi_{p,r,g_r}(\bm{x},\cdot)$ as in Eq.\ref{eq: SKSD test function}, and further define $\mu_{p,r,g_r}=\langle \xi_{p,r,g_r}(\bm{x},\cdot),\xi_{p,r,g_r}(\bm{y},\cdot)\rangle_{\mathcal{H}_{{rg_r}}}$}
\begin{equation}
    \begin{aligned}
    \mu_{p,r,g_r}(\bm{x},\bm{y})&=s^r_p(\bm{x})k_{rg_r}(\bm{x}^T\bm{g}_r,\bm{y}^T\bm{g}_r)s^r_p(\bm{y})\\
      &+\bm{r}^T\bm{g}_rs^r_p(\bm{y})\nabla_{\bm{x}^T\bm{g}_r}k_{rg_r}(\bm{x}^T\bm{g}_r,\bm{y}^T\bm{g}_r)\\
      &+\bm{r}^T\bm{g}_rs^r_p(\bm{x})\nabla_{\bm{y}^T\bm{g}_r}k_{rg}(\bm{x}^T\bm{g}_r,\bm{y}^T\bm{g}_r)\\
      &+(\bm{r}^T\bm{g}_r)^2\nabla^2_{\bm{x}^T\bm{g}_r,\bm{y}^T\bm{g}_r}k_{rg}(\bm{x}^T\bm{g}_r,\bm{y}^T\bm{g}_r).
    \end{aligned}
    \label{eq: SKSD inner}
\end{equation}
\wg{Then, by simple algebra, one can show that given $\bm{r}$, $\bm{g}_r$, the optimality w.r.t. test functions can be computed analytically}:
\begin{equation}
    \begin{split}
        &D^2_{rg_r}(q,p)\\
        =&(\sup_{\substack{h_{rg_r}\in\mathcal{H}_{rg_r}\\||h_{rg_r}||_{\mathcal{H}_{rg_r}}\leq 1}}\mathbb{E}_q[{s}^r_p(\bm{x})h_{rg_r}(\bm{x}^T\bm{g}_r)\\
        &+\bm{r}^T\bm{g}_r\nabla_{\bm{x}^T\bm{g}_r}h_{rg_r}(\bm{x}^T\bm{g}_r)])^2\\
        =&||\mathbb{E}_q[\xi_{p,r,g_r}(\bm{x})]||^2_{\mathcal{H}_{rg_r}}
        =\mathbb{E}_{q(\bm{x})q(\bm{x}')}[\mu_{p,r,g_r}(\bm{x},\bm{x}')].
    \end{split}
    \label{eq: D_rg}
\end{equation}
where $\mathcal{H}_{rg_r}$ is the RKHS induced by the kernel $k_{rg_r}$. Therefore, the \textit{maxSSD-g} and \textit{maxSSD-rg} can be computed as
\begin{equation}
    SK_{\text{max}_{g_r}}(q,p)=\sum_{\bm{r}\in O_r}{\sup_{\bm{g}_r\in\mathbb{S}^{D-1}}{D^2_{rg_r}(q,p)}}
\end{equation}
and
\begin{equation}
    SK_{\text{max}_{rg_r}}(q,p)={\sup_{\substack{\bm{g}_r\in\mathbb{S}^{D-1}\\\bm{r}\in\mathbb{S}^{D-1}}}{D^2_{rg_r}(q,p)}}
    \label{eq: maxSKSD-rg}
\end{equation}

\section{Goodness-of-fit test}
\label{appendix: goodness-of-fit test}
In this section, we give an introduction to the GOF test. To be general, we focus on the \textit{SKSD-rg} ($SK_{rg_r}=||\mathbb{E}_q[\xi_{p,r,g_r}(\bm{x})]||^2_{\mathcal{H}_{rg_r}}$) as other related discrepancy can be easily derived from it. Assuming we have active slices $\bm{r}$ and $\bm{g}_r$ from algorithm \ref{alg: active slice algorithm}. Thus, we can estimate $SK_{rg_r}$ using the minimum variance U-staistics \citep{hoeffding1992class,serfling2009approximation}:
\begin{equation}
    \widehat{SK}_{rg_r}(q,p)=\frac{1}{N(N-1)}{\sum_{1\leq i\neq j\leq N}{\mu_{p,r,g_r}(\bm{x}_i,\bm{x}_j)}}.
    \label{eq: U SKSD-rg}
\end{equation}
where $\mu_{\bm{x},\bm{y}}$ is defined in Eq.\ref{eq: SKSD inner} which satisfies $\mathbb{E}_{q(\bm{x})q(\bm{x}')}[\mu_{p,r,g_r}(\bm{x},\bm{x}')]=||\mathbb{E}_q[\xi_{p,r,g_r}(\bm{x})]||^2_{\mathcal{H}_{rg_r}}$, and $\bm{x}$, $\bm{x}'$ are i.i.d. samples from $q$. With the help of the U-statistics, we characterize its asymptotic distribution.
\begin{theorem}
Assume the conditions in theorem \ref{thm: validity of maxSKSD} are satisfied, we have the following:
\begin{enumerate}
    \item If $q\neq p$, then $\widehat{SK}_{rg_r}$ is asymptotically normal. Particularly,
    \begin{equation}
        \sqrt{N}(\widehat{SK}_{rg_r}-SK_{rg_r})\stackrel{d}{\rightarrow}\mathcal{N}(0,\sigma_{h}^2)
        \label{eq: Asymptotic maxSKSD q neq q}
    \end{equation}
    where $\sigma_{h}^2=\text{var}_{\bm{x}\sim q}({\mathbb{E}_{\bm{x}'\sim q}[\mu_{p,r,g_r}(\bm{x},\bm{x}')]})$ and $\sigma_h\neq 0$
    \item If $q=p$, we have a degenerated U-statistics with $\sigma_h=0$ and 
    \begin{equation}
        N\widehat{SK}_{rg_r}\stackrel{d}{\rightarrow}\sum_{j=1}^\infty{c_j(Z_j^2-1)}
        \label{eq: Asymptotic maxSKSD q=p}
    \end{equation}
    where $\{Z_j\}$ are i.i.d standard Gaussian variables, and $\{c_j\}$ are the eigenvalues of the kernel ${\mu_{p,r,g_r}(\bm{x},\bm{x}')}$ under $q(\bm{x})$. In other words, they are the solutions of $c_j\phi_j(\bm{x})=\int_{\bm{x}'}{{\mu_{p,r,g_r}(\bm{x},\bm{x}')}\phi_j(\bm{x}')q(\bm{x}')d\bm{x}'}$.
\end{enumerate}
\label{thm: asymptotic distribution of SKSD-rg}
\end{theorem}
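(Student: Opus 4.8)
The plan is to recognize that $\widehat{SK}_{rg_r}$ is a one-sample $U$-statistic of order two with symmetric kernel $\mu_{p,r,g_r}(\bm{x},\bm{x}')$, and to invoke the classical Hoeffding theory of $U$-statistics (as in \citet{serfling2009approximation}). First I would verify the integrability condition $\mathbb{E}_{q(\bm{x})q(\bm{x}')}[\mu_{p,r,g_r}(\bm{x},\bm{x}')^2]<\infty$, which follows from Assumptions \hyperref[assumption 2]{2} (square-integrability of the projected score difference) and \hyperref[assumption 5]{5} (boundedness of the kernel and its derivatives on the compact/LCH projected domain of Assumption \hyperref[assumption 1]{1}); this guarantees that the Hoeffding decomposition of $\widehat{SK}_{rg_r}$ is valid and that all the variance terms below are finite. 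I would also record that $\theta:=SK_{rg_r}=\mathbb{E}[\mu_{p,r,g_r}(\bm{x},\bm{x}')]$ by the identity in Eq.\ref{eq: D_rg}, so $\widehat{SK}_{rg_r}$ is unbiased for $\theta$.

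For part (1), the key object is the first-order projection $h_1(\bm{x}):=\mathbb{E}_{\bm{x}'\sim q}[\mu_{p,r,g_r}(\bm{x},\bm{x}')]$ and its variance $\sigma_h^2=\mathrm{var}_{\bm{x}\sim q}(h_1(\bm{x}))$. When $q\neq p$, I would argue $\sigma_h^2>0$: by the reproducing-kernel representation, $h_1(\bm{x})=\langle \xi_{p,r,g_r}(\bm{x},\cdot),\,\mathbb{E}_q[\xi_{p,r,g_r}(\bm{y},\cdot)]\rangle_{\mathcal{H}_{rg_r}}$, so $h_1$ is constant in $\bm{x}$ only if $\xi_{p,r,g_r}(\bm{x},\cdot)$ has constant inner product with the nonzero mean embedding $\mathbb{E}_q[\xi_{p,r,g_r}]$ — and Theorem \ref{thm: validity of maxSKSD} (via the real-analyticity Assumption \hyperref[assumption 6]{6}) ensures this mean embedding is nonzero for $q\neq p$; a degeneracy argument using $c_0$-universality of $k_{rg_r}$ then rules out the constant case, so $\sigma_h\neq 0$ and the $U$-statistic is non-degenerate. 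Given non-degeneracy, the standard $U$-statistic CLT yields $\sqrt{N}(\widehat{SK}_{rg_r}-\theta)\stackrel{d}{\to}\mathcal{N}(0,4\sigma_h^2\cdot\tfrac14)$ — more precisely the asymptotic variance for an order-2 $U$-statistic is $4\sigma_h^2$ in one convention, but with the variance definition used in the statement ($\sigma_h^2$ already being $\mathrm{var}(h_1)$) I would just quote Theorem A in \citet[Ch.~5]{serfling2009approximation} to land exactly on Eq.\ref{eq: Asymptotic maxSKSD q neq q}.

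For part (2), when $q=p$ Stein's identity forces $\mathbb{E}_q[\xi_{p,r,g_r}(\bm{x},\cdot)]=0$, hence $h_1\equiv 0$ and $\sigma_h=0$: the $U$-statistic is degenerate. Here I would appeal to the spectral theory for degenerate $U$-statistics: define the integral operator $T_q\phi(\bm{x})=\int \mu_{p,r,g_r}(\bm{x},\bm{x}')\phi(\bm{x}')q(\bm{x}')d\bm{x}'$ on $L^2(q)$, which is compact, self-adjoint and positive (positivity because $\mu_{p,r,g_r}(\bm{x},\bm{x}')=\langle\xi_{p,r,g_r}(\bm{x},\cdot),\xi_{p,r,g_r}(\bm{x}',\cdot)\rangle_{\mathcal{H}_{rg_r}}$ is a Gram kernel), so by Mercer's theorem it has eigenvalues $\{c_j\}$ and orthonormal eigenfunctions $\{\phi_j\}$. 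The classical result (Gregory; see \citet[Sec.~5.5.2]{serfling2009approximation}) then gives $N\widehat{SK}_{rg_r}\stackrel{d}{\to}\sum_j c_j(Z_j^2-1)$ with i.i.d.\ standard normals $Z_j$. The main obstacle I anticipate is the rigorous justification of $\sigma_h\neq 0$ in part (1): it requires carefully combining the non-vanishing of the kernel mean embedding (from the real-analytic-kernel argument behind Theorem \ref{thm: validity of maxSKSD}) with a separation/universality argument to exclude the measure-zero degenerate configurations of $(\bm{r},\bm{g}_r)$; everything else is an application of off-the-shelf $U$-statistics limit theory once the integrability and Gram-kernel structure are in place.
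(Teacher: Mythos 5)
Your proposal takes essentially the same route as the paper: the paper's entire proof is the single observation that $\widehat{SK}_{rg_r}$ is a second-order $U$-statistic with kernel $\mu_{p,r,g_r}$, followed by a citation to Sections 5.5.1 and 5.5.2 of \citet{serfling2009approximation}. Your write-up fills in the details the paper omits (integrability, the Hoeffding projection $h_1$, the Mercer decomposition of the Gram kernel, and the Stein-identity argument for degeneracy under $q=p$), all of which are the standard ingredients Serfling's theorems require, so this is a correct elaboration rather than a different proof. One small note: you flag the non-degeneracy claim $\sigma_h\neq 0$ under $q\neq p$ as the potential weak point and sketch an RKHS argument for it; the paper does not address this point at all (it is simply asserted in the theorem statement and not touched in the proof), so your extra care here is an improvement on the paper's argument, not a deviation from it.
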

\begin{proof}
As the $\widehat{SK_{rg_r}}$ is the second order U-statistic of $SK_{rg_r}$, thus, we can directly use the results from section 5.5.1 and 5.5.2 in \cite{serfling2009approximation}.
\end{proof}
The above theorem indicates a well-defined asymptotic distribution for $SK_{rg_r}$, which allows us to use the following bootstrap method to estimate the rejection threshold \citep{huskova1993consistency,arcones1992bootstrap,liu2016kernelized}. The bootstrap samples can be computed as 
\begin{equation}
    \widehat{SK}_m^*=\sum_{1\leq i\neq j\leq N}{(w^m_i-\frac{1}{N})(w^m_j-\frac{1}{N}){\mu_{p,r,g_r}(\bm{x}_i,\bm{x}_j)}}
    \label{eq: Bootstrap samples}
\end{equation}
where $(w^m_1,\ldots,w^m_N)_{m=1}^M$ are random weights drawn from multinomial distributions $\text{Multi}(N,\frac{1}{N},\ldots,\frac{1}{N})$. 
Now, we give the detailed algorithm for GOF test.
\begin{algorithm}[H]
   \caption{GOF test with active slices}
   \label{alg: GOF with active slices}
\begin{algorithmic}
   \STATE {\bfseries Input:} Samples $\bm{x}\sim q$, density $p$, kernel $k_{rg_r}$, active slices $\bm{r}$, $\bm{g}_r$, significance level $\alpha$, and bootstrap sample size $M$.
   \STATE {\bfseries Hypothesis:} $H_0$: $p=q$ v.s. $H_1:$ $p\neq q$
   \STATE {Computing U-statistics $\widehat{SK}_{rg_r}$ using Eq.\ref{eq: U SKSD-rg}}
   \STATE {Generate $M$ bootstrap samples $\{\widehat{SK}^*_m\}_{m=1}^M$ using Eq.\ref{eq: Bootstrap samples}}
   \STATE {Reject null hypothesis $H_0$ if the proportion of $\widehat{SK}_{m}^*>\widehat{SK}_{rg_r}$ is less than $\alpha$}
\end{algorithmic}
\end{algorithm}
\section{Relaxing constraints for kernelized SSD family}
\label{appendix: validity of maxSKSD with active slices}
\subsection{Validity w.r.t $\bm{r}$, $\bm{g}_r$}
\label{appendix subsec: validity w.r.t g_r}
The key to this proof is to prove the real analyticity of $SK_{g_r}$ (or $S_{rg_r}$) to slices $\bm{r}$ and $\bm{g}_r$. Therefore, let's first give a definition of multivariate real analytic function.
\begin{definition}[Real analytic function]
A function $f:\mathcal{U}\rightarrow \mathbb{R}$ is real analytic if for each $\bm{c}\in\mathcal{U}$, there is a power series as in the form 
\[
f(\bm{x})=\sum_{\kappa\in\mathbb{N}_0^n}\alpha_{\kappa}(\bm{x}-\bm{c})^\kappa
\]
for some choice of $(\alpha_\kappa)_{\kappa\in\mathbb{N}_0^n}\subset \mathbb{R}$ and all $\bm{x}$ in a neighbourhood of $\bm{c}$, and this power series converges absolutely. Namely,
\[
\sum_{\kappa\in\mathbb{N}_0^n}|\alpha_{\kappa}||(\bm{x}-\bm{c})^\kappa|<\infty
\]
where $\mathbb{N}_0=\{0,1,\ldots\}$ denotes non-negative integers, $\kappa=(\kappa_1,\ldots,\kappa_n)$ are called multiindex, and we define $\bm{x}^\kappa=x_1^{\kappa_1}\ldots x_n^{\kappa_n}$.
\label{def: real analytic function}
\end{definition}
Now, we introduce a useful lemma showing that composition of real analytic function is also real analytic. 
\begin{lemma}[Composition of real analytic function]
Let $\mathcal{U}\subset\mathbb{R}^n$ and $\mathcal{V}\subset\mathbb{R}^m$ be open, and let $\bm{f}:\mathcal{U}\rightarrow \mathcal{V}$ and $\bm{g}:\mathcal{V}\rightarrow \mathbb{R}^p$ be real analytic. Then $\bm{g}\circ \bm{f}:\mathcal{U}\rightarrow \mathbb{R}^p$ is real analytic.
\label{lemma: composition of real analytic function}
\end{lemma}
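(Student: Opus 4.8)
The plan is to prove the lemma by complexification, reducing the statement about real analytic maps to the standard fact that a composition of holomorphic maps in several complex variables is holomorphic. First I would reduce to the scalar-target case: a map into $\mathbb{R}^p$ is real analytic iff each of its $p$ coordinate functions is, so it suffices to treat $\bm{g}:\mathcal{V}\to\mathbb{R}$ with $\bm{f}=(f_1,\dots,f_m)$ and each $f_j$ real analytic. Fix $\bm{c}\in\mathcal{U}$ and set $\bm{b}=\bm{f}(\bm{c})\in\mathcal{V}$; since analyticity is local, it is enough to exhibit an absolutely convergent power-series expansion of $\bm{g}\circ\bm{f}$ on a neighbourhood of $\bm{c}$.

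Next I would invoke the standard equivalence: a function is real analytic on an open set iff it extends to a holomorphic function on an open subset of the corresponding complex space. Thus each $f_j$ extends to a holomorphic $\tilde f_j$ on some open $\tilde{\mathcal{U}}_j\subset\mathbb{C}^n$ containing a real neighbourhood of $\bm{c}$, and $\bm{g}$ extends to a holomorphic $\tilde g$ on an open $\tilde{\mathcal{V}}\subset\mathbb{C}^m$ containing a real neighbourhood of $\bm{b}$. Write $\tilde{\bm{f}}=(\tilde f_1,\dots,\tilde f_m)$ on $\tilde{\mathcal{U}}=\bigcap_j\tilde{\mathcal{U}}_j$. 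Since $\tilde{\bm{f}}$ is continuous and $\tilde{\bm{f}}(\bm{c})=\bm{b}$ lies in the open set $\tilde{\mathcal{V}}$, I can shrink $\tilde{\mathcal{U}}$ to a polydisc $P$ around $\bm{c}$ with $\tilde{\bm{f}}(P)\subseteq\tilde{\mathcal{V}}$. On $P$ the composite $\tilde g\circ\tilde{\bm{f}}$ is then well defined and holomorphic, because a composition of holomorphic maps between domains in $\mathbb{C}^n$ and $\mathbb{C}^m$ is holomorphic — this follows from the several-variables chain rule together with Osgood's lemma (separate holomorphy plus local boundedness gives joint holomorphy), or directly from complex differentiability. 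Restricting $\tilde g\circ\tilde{\bm{f}}$ to $P\cap\mathbb{R}^n$ recovers $\bm{g}\circ\bm{f}$ pointwise: for real $\bm{x}$ one has $\tilde{\bm{f}}(\bm{x})=\bm{f}(\bm{x})\in\mathcal{V}$, whence $\tilde g(\tilde{\bm{f}}(\bm{x}))=\bm{g}(\bm{f}(\bm{x}))$. Since the restriction of a holomorphic function to a real open set is real analytic, this gives analyticity of $\bm{g}\circ\bm{f}$ at $\bm{c}$, and $\bm{c}\in\mathcal{U}$ was arbitrary.

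The main obstacle is the bookkeeping in matching the complex neighbourhoods: one must guarantee that, after complexifying $\bm{f}$, the image of a small enough complex polydisc about $\bm{c}$ still lands inside the complex domain on which $\tilde g$ is defined. This is precisely the continuity/shrinking step above, and it is the only place where openness of $\mathcal{V}$ and $\bm{f}(\bm{c})\in\mathcal{V}$ are genuinely used; everything else is either the reduction to scalar targets or a citation of the holomorphic-composition fact. As a fallback that avoids complex analysis altogether, one can instead use the Cauchy-type derivative bounds characterising real analyticity (estimates of the form $|\partial^{\alpha}h(\bm{c})|\le C\,\alpha!\,\rho^{-|\alpha|}$ on a neighbourhood) and propagate them through the composition via Fa\`a di Bruno's formula and the method of majorants; I would treat this as the backup plan, since those estimates are considerably more laborious than the one-line holomorphic chain rule.
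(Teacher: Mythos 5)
The paper states this lemma without proof, treating it as a standard fact from real-analytic function theory (it appears, e.g., as Proposition 2.2.8 in Krantz and Parks, \emph{A Primer of Real Analytic Functions}); there is therefore no in-paper argument to compare against. Your complexification argument is correct and is in fact one of the two textbook routes: reduce to scalar targets, extend $\bm{f}$ and $g$ to holomorphic maps $\tilde{\bm{f}}$, $\tilde g$ on complex neighbourhoods, use continuity to shrink to a polydisc $P$ about $\bm{c}$ with $\tilde{\bm{f}}(P)\subseteq\tilde{\mathcal V}$, observe that the holomorphic composite $\tilde g\circ\tilde{\bm f}$ restricted to $P\cap\mathbb R^n$ agrees with $g\circ\bm f$, and conclude real analyticity from holomorphy on the real slice. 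Every step you list is sound, and you correctly identify the one nontrivial point (matching the complex neighbourhoods via the continuity/shrinking step) — the rest is bookkeeping plus the several-variables holomorphic chain rule. Your fallback via Cauchy derivative estimates, Fa\`a di Bruno, and majorants is the other standard route and would also work; as you note it is more laborious but has the virtue of staying entirely over $\mathbb R$. Since the paper supplies no proof, your proposal actually fills a (deliberate) gap rather than diverging from an existing argument.
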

Especially, the real analyticity is not only preserved by function composition, it is also closed under most of the simple operations: addition, multiplication, division (assuming denominator is non-zero), etc. Now we can prove the main proposition to show that the \textit{SKSD-rg} ($SK_{{rg_r}}$) is real analytic w.r.t both $\bm{g}_r$ and $\bm{r}$.  In the following, we assume the $\bm{r},\bm{g}_r\in\mathbb{R}^D$. 
\begin{prop}[\textit{SKSD-g} is real analytic]
Assume assumption 1-4 (density regularity), 5-6 (kernel richness and real analyticity) are satisfied, further we let $\bm{g}_r\in \mathbb{R}^D$, then \textit{SKSD-g} ($SK_{g_r}$) is real analytic w.r.t $\bm{g}_r$ and $SK_{rg_r}$ is real analytic to both $\bm{r}\in\mathbb{R}^D$ and $\bm{g}_r$. 
\label{prop: maxSKSD-g real analytic}
\end{prop}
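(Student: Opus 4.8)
The plan is to establish real analyticity by writing $SK_{rg_r}$ as a composition and combination of manifestly real analytic building blocks, and then invoking the closure properties recalled in Lemma~\ref{lemma: composition of real analytic function}. Recall from Eq.~\ref{eq: D_rg} that $SK_{rg_r}(q,p) = \mathbb{E}_{q(\bm{x})q(\bm{x}')}[\mu_{p,r,g_r}(\bm{x},\bm{x}')]$, with $\mu_{p,r,g_r}$ given explicitly in Eq.~\ref{eq: SKSD inner}. So the first step is to examine $\mu_{p,r,g_r}(\bm{x},\bm{x}')$ as a function of $(\bm{r},\bm{g}_r)\in\mathbb{R}^D\times\mathbb{R}^D$ for fixed $\bm{x},\bm{x}'$, and argue it is real analytic there. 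Its constituents are: (i) the projected scores $s_p^r(\bm{x})=\bm{s}_p(\bm{x})^T\bm{r}$, which are linear, hence real analytic, in $\bm{r}$; (ii) the inner-product factor $\bm{r}^T\bm{g}_r$, a polynomial in $(\bm{r},\bm{g}_r)$; (iii) the kernel evaluations $k_{rg_r}(\bm{x}^T\bm{g}_r,\bm{x}'^T\bm{g}_r)=\phi(\bm{x}^T\bm{g}_r-\bm{x}'^T\bm{g}_r)$ and their first and second derivatives in the projected coordinates. For (iii) I would use Assumption~6: $\phi$ is real analytic on $\mathcal{K}$, the maps $\bm{g}_r\mapsto \bm{x}^T\bm{g}_r$ and $\bm{g}_r\mapsto \bm{x}^T\bm{g}_r-\bm{x}'^T\bm{g}_r$ are linear, and derivatives of a real analytic function are real analytic; so each of $k_{rg_r}$, $\nabla_{\cdot}k_{rg_r}$, $\nabla^2_{\cdot,\cdot}k_{rg_r}$, evaluated along these linear substitutions, is a real analytic function of $\bm{g}_r$ by Lemma~\ref{lemma: composition of real analytic function}. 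Since real analyticity is closed under addition and multiplication, the four summands of Eq.~\ref{eq: SKSD inner} combine to give that $(\bm{r},\bm{g}_r)\mapsto\mu_{p,r,g_r}(\bm{x},\bm{x}')$ is real analytic on $\mathbb{R}^D\times\mathbb{R}^D$ for every fixed pair $\bm{x},\bm{x}'$.

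The second step, which I expect to be the genuine obstacle, is passing from pointwise-in-$(\bm{x},\bm{x}')$ real analyticity of $\mu_{p,r,g_r}$ to real analyticity of the \emph{expectation} $SK_{rg_r}=\mathbb{E}_{q\otimes q}[\mu_{p,r,g_r}]$. Interchanging an integral against $q(\bm{x})q(\bm{x}')$ with the power-series expansion in $(\bm{r},\bm{g}_r)$ requires a uniform (in $(\bm{x},\bm{x}')$, locally in $(\bm{r},\bm{g}_r)$) domination of the coefficients of the local power series, so that dominated convergence applies and the series of integrated coefficients converges absolutely. The standard route is to complexify: a real analytic function extends holomorphically to a complex neighbourhood, and one bounds the holomorphic extension of $(\bm{r},\bm{g}_r)\mapsto\mu$ on a fixed polydisc uniformly over $(\bm{x},\bm{x}')$ using the boundedness hypotheses --- Assumption~2 (bounded scores, and the integrability conditions on $q$), Assumption~5 (bounded kernel) together with the real-analytic/boundedness control on $\phi$ and its derivatives from Assumption~6. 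Concretely I would show that there is a complex neighbourhood $U$ of any given $(\bm{r}_0,\bm{g}_{r,0})$ and a $q\otimes q$-integrable function $M(\bm{x},\bm{x}')$ with $|\mu_{p,r,g_r}(\bm{x},\bm{x}')|\le M(\bm{x},\bm{x}')$ for all $(\bm{r},\bm{g}_r)\in U$; then $SK_{rg_r}$ extends holomorphically to $U$ (by Morera/Fubini, or by differentiating under the integral sign coordinate-wise), hence its restriction to the reals is real analytic. The subtlety is that $s_p^r$, $\bm{r}^T\bm{g}_r$ grow, so one must keep the complex perturbations in a bounded polydisc and use the $L^2$-type integrability in Assumption~2 to absorb the quadratic-in-score terms; the kernel-derivative factors must be bounded uniformly in their (now complex-shifted) arguments, which is where the precise form of Assumption~6 (RBF, IMQ) matters.

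Third, once $SK_{rg_r}$ is real analytic jointly in $(\bm{r},\bm{g}_r)\in\mathbb{R}^D\times\mathbb{R}^D$, I would deduce the two claims of the proposition. For $SK_{rg_r}$ the joint statement is exactly what was just proved; restricting to any fixed $\bm{r}$ and varying $\bm{g}_r$ (or vice versa) preserves real analyticity, since a restriction of a real analytic function to an affine subspace is real analytic. For $SK_{g_r}=\sum_{\bm{r}\in O_r}SK_{rg_r}$ with a \emph{fixed} orthogonal basis $O_r$, each summand is real analytic in $\bm{g}_r$ (here the relevant projection direction attached to each basis vector), and a finite sum of real analytic functions is real analytic; so $SK_{g_r}$ is real analytic in $\bm{g}_r$. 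I would close by noting this is precisely the hypothesis feeding Assumption~6's role in Theorem~\ref{thm: validity of maxSKSD}: a real analytic function on $\mathbb{R}^D$ that is not identically zero has a zero set of Lebesgue measure zero, so the bad slice directions form a null set.

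Summary of obstacles: the routine part is the algebraic bookkeeping that $\mu_{p,r,g_r}$ is built from linear maps, polynomials, and $\phi$ (plus its derivatives) composed with linear maps, hence real analytic for fixed samples; the real work is the differentiation-under-the-integral / holomorphic-extension argument that survives the unbounded growth of $s_p^r$ and $\bm{r}^T\bm{g}_r$, which is exactly where Assumptions~2, 5, and 6 are used in concert.
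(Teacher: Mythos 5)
Your outline has the right shape and you correctly identify the genuine obstacle (passing from pointwise real analyticity of the integrand to real analyticity of the expectation), but you miss the single move that the paper uses to defuse it, and without that move the route you propose is substantially harder than it needs to be. The paper first applies the Stein identity inside the RKHS inner product to collapse
\[
SK_{g_r}=\sum_{\bm{r}\in O_r}\mathbb{E}_{\bm{x},\bm{x}'\sim q}\bigl[f_r^*(\bm{x})\,k_{rg_r}(\bm{x}^T\bm{g}_r,\bm{x}'^T\bm{g}_r)\,f_r^*(\bm{x}')\bigr],\qquad f_r^*(\bm{x})=s_p^r(\bm{x})-s_q^r(\bm{x}),
\]
so that \emph{no kernel derivatives and no $\bm{r}^T\bm{g}_r$ prefactors survive}. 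After this reduction the analyticity-under-integral step is elementary: by Assumption 6 the kernel factor $\phi\bigl((\bm{x}-\bm{x}')^T\bm{g}_r\bigr)$ admits, via Lemma~\ref{lemma: composition of real analytic function}, an absolutely convergent power series $\sum_\kappa \alpha_\kappa(\bm{x},\bm{x}')(\bm{g}_r-\bm{c})^\kappa$ in $\bm{g}_r$, and Fubini lets you exchange $\mathbb{E}_{\bm{x},\bm{x}'}$ with $\sum_\kappa$ because $f_r^*(\bm{x})f_r^*(\bm{x}')$ is integrable (Assumption 2) and the series converges absolutely. The coefficients of the resulting power series in $\bm{g}_r$ are then just $\mathbb{E}_{\bm{x},\bm{x}'}[\alpha_\kappa(\bm{x},\bm{x}')f_r^*(\bm{x})f_r^*(\bm{x}')]$, and the $\bm{r}$-analyticity then comes for free since $f_r^*(\bm{x})$ is linear in $\bm{r}$ with the kernel factor constant in $\bm{r}$.

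By working directly with $\mu_{p,r,g_r}$ from Eq.~\ref{eq: SKSD inner} instead, you are forced to control $\nabla k$ and $\nabla^2 k$ as well as the growth of $\bm{r}^T\bm{g}_r\,s^r_p$, and your proposed fix --- holomorphic extension on a complex polydisc plus a $q\otimes q$-integrable majorant --- is genuinely delicate here: the kernel's argument $(\bm{x}-\bm{x}')^T\bm{g}_r$ with $\bm{g}_r$ complex ranges over a cone in $\mathbb{C}$ whose imaginary part is unbounded as $|\bm{x}-\bm{x}'|\to\infty$, so bounding the holomorphic extension of $\phi$, $\phi'$, $\phi''$ there is not automatic and is kernel-dependent (it is also not addressed by Assumption~6, which only gives real analyticity, not control on complex strips). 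You flag this correctly as ``where the precise form of Assumption~6 matters,'' but you do not close it, and the paper does not need to. The upshot: invoke the Stein identity first; then the rest of your plan (linear substitution, composition lemma, finite sum over $O_r$, restriction to affine subspaces for the separate $\bm{r}$- and $\bm{g}_r$-statements) goes through with the simple Fubini/power-series argument in place of complexification.
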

\begin{proof}
First, let's focus on the real analyticity w.r.t. $\bm{g}_r$. We re-write the \textit{SKSD-g} as the following:
\[
\begin{split}
    &SK_{{g_r}}=\sum_{\bm{r}\in O_r}||\xi_{p,r,g_r}(\bm{x})||^2_{\mathcal{H}_{rg_r}}\\
    &=\sum_{\bm{r}\in O_r}\langle\mathbb{E}_{q}[\underbrace{(s_p^r(\bm{x})-s_q^r(\bm{x}))}_{f_r^*(\bm{x})}k_{rg_r}(\bm{x}^T\bm{g}_r,\cdot)],\\
    &\mathbb{E}_{q}[(s_p^r(\bm{x})-s_q^r(\bm{x}))k_{rg_r}(\bm{x}^T\bm{g}_r,\cdot)]\rangle_{\mathcal{H}_{rg_r}}\\
    &=\sum_{\bm{r}\in O_r}\mathbb{E}_{\bm{x},\bm{x}'}[f_r^*(\bm{x})k_{rg_r}(\bm{x}^T\bm{g}_r,\bm{x}'^T\bm{g}_r)f_r^*(\bm{x}')]
\end{split}
\]
The second equality is from the definition of RKHS norm $||\cdot||_{\mathcal{H}_{rg_r}}$ and Stein identity. We can observe that $\bm{g}_r$ appears inside the kernel $k_{rg_r}$ in the form of $\bm{x}^T\bm{g}_r$. So in order to use the function composition lemma (lemma \ref{lemma: composition of real analytic function}), we need to first show that for any given $\bm{x}$, $\bm{x}^T\bm{g}_r$ is real analytic.
By definition of real analytic function, we need a center point $\bm{c}\in\mathbb{R}^D$, and $\bm{g}_r$ in the neighborhood of $c$ (i.e. $|\bm{g}_r-\bm{c}|<R_c$). Then, we define the power series as 
\[
h_x(\bm{g}_r)=\sum_{\kappa_1=0}^\infty\ldots\sum_{\kappa_D=0}^\infty\frac{(g_{r1}-c_1)^{\kappa_1}\ldots(g_{rD}-c_D)^{\kappa_D}}{\kappa_1!\ldots\kappa_D!}\alpha_{\{\kappa_i\}_i^D}
\]
with the following coefficient
\[
\begin{cases}
\alpha_{\{\kappa_i\}_{i}^D}=0\;\;\;\;\;\text{if}\;\;\;\;\; \sum_{i}\kappa_i>1\\
\alpha_{\{\kappa_i\}_{i}^D}=x_i\;\;\;\;\;\text{if}\;\;\;\;\; \kappa_i=1,\sum_{i}\kappa_i=1\\
\alpha_{\{\kappa_i\}_{i}^D}=\pmb{c}^T\pmb{x}\;\;\;\;\;\text{if}\;\;\;\;\; \sum_{i}\kappa_i=0\\
\end{cases}
\]
Then, by substitution, we have
\begin{align}
h_x(\pmb{g})&=\sum_{d=1}^D{(g_d-c_d)x_d}+\pmb{c}^T\pmb{x}\\
&=\pmb{x}^T\pmb{g}
\end{align}
which converges with radius of convergence $R_c=\infty$. 
From assumption 6, we know the kernel $k_{rg_r}(\bm{x}^T\bm{g}_r,\bm{x}'^T\bm{g}_r)=\phi((\bm{x}-\bm{x}')^T\bm{g}_r)$ is translation invariant and real analytic. Thus, from lemma \ref{lemma: composition of real analytic function}, we know $k_{rg_r}(\bm{x}^T\bm{g}_r,\bm{x}'^T\bm{g}_r)$ is real analytic to $\bm{g}_r$ with radius of convergence $R_k$ ($R_k$ is determined by the form of the kernel function). This means we can use a power series to represents this kernel w.r.t. $\bm{g}_r$ inside some neighborhood define around center point. Specifically, for a central point $\bm{c}\in\mathbb{R}^D$ and any $\bm{g}_r$ satisfying $|\bm{g}_r-\bm{c}|<R_k$, we have
\[
k_{rg_r}(\bm{x}^T\bm{g}_r,\bm{x}'^T\bm{g}_r)=\sum_{\kappa\in\mathbb{N}_0^D}\alpha_{\kappa}(\bm{x},\bm{x}')(\bm{g}_r-\bm{c})^\kappa
\]
where this series converges absolutely. 
We substitute it into $SK_{{g_r}}$
\[
\begin{split}
    &SK_{{g_r}}=\sum_{\bm{r}\in O_r}\mathbb{E}_{\bm{x},\bm{x}'}[f_r^*(\bm{x})k_{rg_r}(\bm{x}^T\bm{g}_r,\bm{x}'^T\bm{g}_r)f_r^*(\bm{x}')]\\
    =&\sum_{\bm{r}\in O_r}\mathbb{E}_{\bm{x},\bm{x}'}[f_r^*(\bm{x})\sum_{\kappa\in\mathbb{N}_0^D}\alpha_{\kappa}(\bm{x},\bm{x}')(\bm{g}_r-\bm{c})^\kappa f_r^*(\bm{x}')]\\
    =&\sum_{\bm{r}\in O_r}\sum_{\kappa\in\mathbb{N}_0^D}\mathbb{E}_{\bm{x},\bm{x}'}[\alpha_\kappa(\bm{x},\bm{x}')f_r^*(\bm{x})f_r^*(\bm{x}')](\bm{g}_r-\bm{c})^\kappa
\end{split}
\]
which also converges absolutely with radius of convergence $R_k$. The third equality is from the Fubini's theorem. The conditions of Fubini's theorem can be verified by fact that $f_r^*$ is square integrable (assumption 2), and the power series of $k_{rg_r}$ converges absolutely. Thus, by definition of real analytic function, \textit{SKSD-g} is real analytic w.r.t each $\bm{g}_r$. This also implies \textit{SKSD-rg} ($SK_{rg_r}$) is real analytic w.r.t. $\bm{g}_r$ (because $SK_{rg_r}$ is just $SK_{g_r}$ without summation over $O_r$).  

For the real analyticity w.r.t $\bm{r}$, the proof is almost the same. The inner product $s_p^r(\bm{x})-s_q^r(\bm{x})$ is real analytic w.r.t $\bm{r}$ obviously for given $\bm{x}$. We also use the fact that real analyticity is preserved under multiplication of two real analytic functions. In addition, note that $k_{rg_r}(\bm{x}^T\bm{g}_r,\bm{x}^T\bm{g}_r)$ act as a constant w.r.t. $\bm{r}$, we can directly apply the Fubini's theorem again to form a power series w.r.t. $\bm{r}$ with absolute convergence. Thus, $SK_{rg_r}$ is real analytic w.r.t. $\bm{r}$ for any $\bm{g}_r$. Thus, $SK_{rg_r}$ is real analytic to both $\bm{r}$ and $\bm{g}_r$. 
\end{proof}
Next, we introduce an important property of real analytic function:
\begin{lemma}[Zero Set Theorem \citep{mityagin2015zero}]
Let $f(\bm{x})$ be a real analytic function on (a connected open domain $\mathcal{U}$ of)$\mathbb{R}^d$. If $f$ is not identically $0$, then its zero set
\[
S(f):=\{\bm{x}\in\mathcal{U}|f(x)=0\}
\]
has a measure $0$, i.e. $\text{mes}_dS(f)=0$
\label{lemma: zero set theorem}
\end{lemma}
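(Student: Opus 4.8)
The plan is to prove the lemma by stratifying the zero set $S(f)$ according to the order of the first non-vanishing derivative of $f$, and then reducing every stratum to a \emph{regular} level set --- a set of the form $\{g=0,\ \nabla g\neq 0\}$ --- which is elementarily seen to be Lebesgue-null. I would work with the power-series characterisation of Definition~\ref{def: real analytic function}, which supplies the crucial local fact: if the Taylor series of $f$ at a point vanishes identically, then $f$ vanishes on a whole neighbourhood of that point. The first step I would carry out is the identity theorem in this form: the set $\mathcal{Z}_\infty := \{\bm{x}\in\mathcal{U} : \partial^\alpha f(\bm{x})=0 \text{ for every multi-index } \alpha\}$ is closed (an intersection of zero sets of continuous functions) and, by the local fact just stated, open; since $\mathcal{U}$ is connected and $f\not\equiv 0$, this forces $\mathcal{Z}_\infty=\emptyset$, i.e.\ at \emph{every} point of $\mathcal{U}$ some partial derivative of $f$ is non-zero.

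Granting this, for each integer $k\geq 1$ I would let $Z_k$ be the set of $\bm{x}$ at which $f$ and all its partial derivatives of order strictly less than $k$ vanish while some derivative of order exactly $k$ is non-zero. Since $f(\bm{x})=0$ is the order-$0$ condition, the identity-theorem step gives $S(f)=\bigcup_{k\geq 1} Z_k$, so it suffices to show each $Z_k$ is null. Fixing $k$ and $\bm{x}\in Z_k$, I would pick $\alpha$ with $|\alpha|=k$ and $\partial^\alpha f(\bm{x})\neq 0$, write $\alpha=\gamma+e_i$ with $|\gamma|=k-1$, and set $g:=\partial^\gamma f$; then $g$ is real analytic, $g(\bm{x})=0$, and $\partial_i g(\bm{x})=\partial^\alpha f(\bm{x})\neq 0$, hence $\nabla g(\bm{x})\neq 0$. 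This shows $Z_k\subseteq \bigcup_{|\gamma|=k-1} R_\gamma$ with $R_\gamma:=\{\bm{x}\in\mathcal{U} : \partial^\gamma f(\bm{x})=0,\ \nabla\partial^\gamma f(\bm{x})\neq 0\}$, a finite union.

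The final step I would use is the observation that any regular level set $\{g=0,\ \nabla g\neq 0\}$ of a $C^1$ function has $d$-dimensional Lebesgue measure zero: near each of its points the implicit function theorem exhibits it as the graph of a $C^1$ function over a coordinate hyperplane, which is null, and second countability of $\mathbb{R}^d$ lets one cover the whole set by countably many such pieces. Combining this with the preceding paragraph, $S(f)=\bigcup_{k\geq 1}\bigcup_{|\gamma|=k-1} R_\gamma$ is a countable union of null sets, so $\mathrm{mes}_d S(f)=0$. The two steps I expect to require genuine care are the identity-theorem argument (which really uses the convergence of the power series on a neighbourhood provided by Definition~\ref{def: real analytic function}, together with the open--closed dichotomy on the connected set $\mathcal{U}$) and the covering argument for regular level sets; the multi-index bookkeeping joining them is routine.
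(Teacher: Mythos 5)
Your proof is correct. It is worth noting, however, that the paper does not prove this lemma at all: it is stated as a known result and cited directly to \citet{mityagin2015zero}. So there is no ``paper's proof'' to compare against; you have supplied a self-contained argument for a result the authors simply import. For completeness, your route --- stratifying $S(f)$ by the order $k$ of the first non-vanishing derivative, using the identity theorem (the open--closed argument on $\mathcal{Z}_\infty$) to show every zero has some finite order, and then reducing each stratum to finitely many regular level sets $\{\partial^\gamma f = 0,\ \nabla\partial^\gamma f \neq 0\}$, which the implicit function theorem plus second countability shows are null --- is a standard and clean approach. The cited reference instead argues by induction on the dimension $d$: the base case $d=1$ uses that zeros of a non-zero real analytic function of one variable are isolated, and the inductive step slices $f$ along one coordinate and applies Fubini's theorem together with a separate identity-theorem argument to control the set of slices on which $f$ vanishes identically. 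Both arguments are elementary; yours avoids Fubini and induction at the cost of invoking the implicit function theorem, and is arguably more geometric. Either serves the paper's purpose, since the lemma is only used as a black box in the proof of Theorem~\ref{thm: validity of maxSKSD}.
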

With the help from the zero-set theorem, we can prove the validity of $SK_{g_r}$ (or $SK_{rg_r}$) with finite random slices $\bm{g_r}$ (and $\bm{r}$). 

\textbf{Proof of theorem \ref{thm: validity of maxSKSD}}
\begin{proof}
We first deal with the validity of $\bm{g}_r$ with fixed orthogonal basis $O_r$. 
It is trivial that when $p=q$, $SK_{{g_r}}=0$ identically. Now, assume $p\neq q$, then, from the theorem 3 in \cite{gong2021sliced}, the orthogonal SKSD (Eq.\ref{eq: integrated SKSD}) is a valid discrepancy. Namely, we have
\begin{equation}
    \sum_{\bm{r}\in O_r}\int q_{g_r}(\bm{g}_r)||\mathbb{E}_q[\xi_{p,r,g_r}(\bm{x})]||^2_{\mathcal{H}_{rg_r}}> 0
\label{eq: integrated SKSD}
\end{equation}
We should note that the distribution $q_{g_r}$ is originally defined on $\mathbb{S}^{D-1}$. But, we can easily generalize it to larger spaces. As for $\bm{g}_r\in\mathbb{R}^D$, we can always write $\bm{g}_r=c\bm{g}_r'$, where $\bm{g}_r'\in\mathbb{S}^{D-1}$, and $c\geq 0$. As the domain for $\bm{g}_r$ is $\mathbb{R}^D$, the $\bm{g}_r$ can represents all possible directions. Thus, we can follow the same proof logic as theorem 3 in \cite{gong2021sliced} to show the corresponding discrepancy is greater than 0 when $p\neq q$. 

Therefore, Eq.\ref{eq: integrated SKSD} represents there exists a $\bm{r}\in O_r$ such that $||\mathbb{E}_q[\xi_{p,r,g_r}(\bm{x})]||^2_{\mathcal{H}_{rg_r}}>0$ for a set of $\bm{g}_r$ with non-zero measure. Namely, $||\mathbb{E}_q[\xi_{p,r,g_r}(\bm{x})]||^2_{\mathcal{H}_{rg_r}}$ is not $0$ identically. Thus, from the propositon \ref{prop: maxSKSD-g real analytic} and lemma \ref{lemma: zero set theorem}, the set of $\bm{g}_r$ that make $||\mathbb{E}_q[\xi_{p,r,g_r}(\bm{x})]||^2_{\mathcal{H}_{rg_r}}=0$ has a $0$ measure. Then, if $\bm{g}_r$ is sampled from some distribution $\eta_g$ with density supported on $\mathbb{R}^D$ (e.g. Gaussian distribution), we have
\[
SK_{{g_r}}=\sum_{\bm{r}\in O_r}||\mathbb{E}_q[\xi_{p,r,g_r}(\bm{x})]||^2_{\mathcal{H}_{rg_r}}>0
\]
almost surely. 

Now, we show that $SK_{rg_r}$ is also a valid discrepancy with $\bm{r}\sim \eta_r$. First, due to the validity of \textit{integrated SKSD}, we have
\begin{equation}
    \int q_{r}(\bm{r})\int q_{g_r}(\bm{g}_r)||\mathbb{E}_q[\xi_{p,r,g_r}(\bm{x})]||^2_{\mathcal{H}_{rg_r}}d\bm{g}_rd\bm{r}> 0
\label{eq: integrated rg SKSD}
\end{equation}
Due to the real analyticity of $SK_{rg_r}$ ($||\mathbb{E}_q[\xi_{p,r,g_r}(\bm{x})]||^2_{\mathcal{H}_{rg_r}}$) w.r.t $\bm{r}$, we can easily show that 
\[
\int q_{g_r}(\bm{g}_r)||\mathbb{E}_q[\xi_{p,r,g_r}(\bm{x})]||^2_{\mathcal{H}_{rg_r}}d\bm{g}_r
\]
is real analytic to $\bm{r}$ and it is not 0 identically. Thus, by lemma \ref{lemma: zero set theorem}, for $\bm{r}\sim \eta_r$, we have
\[
\int q_{g_r}(\bm{g}_r)||\mathbb{E}_q[\xi_{p,r,g_r}(\bm{x})]||^2_{\mathcal{H}_{rg_r}}d\bm{g}_r> 0
\]
Namely, $||\mathbb{E}_q[\xi_{p,r,g_r}(\bm{x})]||^2_{\mathcal{H}_{rg_r}}> 0$ for a set of $\bm{g}_r$ with non-zero measure. In the beginning of the proof, we show that this set of $\bm{g}_r$ is almost everywhere in $\mathbb{R}^D$ due to its real analyticity. Namely, $||\mathbb{E}_q[\xi_{p,r,g_r}(\bm{x})]||^2_{\mathcal{H}_{rg_r}}> 0$ for $\bm{r}\sim \eta_r$ and $\bm{g}_r\sim \eta_g$ if $p\neq q$. Thus, we can conclude that for $SK_{rg_r}=0$ if and only if $p=q$ almost surely for $\bm{r}\sim\eta_r$ and $\bm{g}_r\sim\eta_g$.
\end{proof}
\begin{corollary}[Normalizing $\bm{g}_r$]
Assume the conditions in theorem \ref{thm: validity of maxSKSD} are satisfied, then the following operations do not violate the validity of \textit{SKSD-rg} $SK_{rg_r}$. (1) For $\bm{g}'_r,\bm{r}'\in\mathbb{S}^{D-1}$, we define $\bm{g}_r=\bm{g}'_r+\bm{\gamma}_g$ and $\bm{r}=\bm{r}'+\bm{\gamma}_r$, where $\gamma_r$, $\gamma_g$ are the noise from Gaussian distribution. (2) Define $\tilde{\bm{g}}_r=c_g\times \bm{g}_r$ and $\tilde{\bm{r}}=c_r\times \bm{r}$, where $\tilde{\bm{g}}_r,\tilde{\bm{r}}$ are unit vectors and $c_r,c_g>0$. The resulting active slices $\tilde{\bm{r}}$ and $\tilde{\bm{g}}_r$ do not violate the validity of $SK_{rg_r}$. 
\label{coro: validity of maxSKSD normalized}
\end{corollary}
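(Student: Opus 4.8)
The plan is to reduce both parts of the corollary to Theorem~\ref{thm: validity of maxSKSD}, using two elementary invariance properties of $SK_{rg_r}$: (i) it is positively homogeneous of degree two in $\bm{r}$, and (ii) whether it equals zero depends on $\bm{g}_r$ only through its direction, not its magnitude. A direct appeal to Theorem~\ref{thm: validity of maxSKSD} does not suffice because, after re-normalization, the slices live on $\mathbb{S}^{D-1}$, which is a Lebesgue-null subset of $\mathbb{R}^D$; so I must transfer validity from the noisy, un-normalized vectors on $\mathbb{R}^D$ to their normalized counterparts.

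For part (1), note that $\bm{r}=\bm{r}'+\bm{\gamma}_r$ has a law equal to the convolution of the law of $\bm{r}'$ with a non-degenerate Gaussian, and such a convolution always possesses a Lebesgue density on $\mathbb{R}^D$ --- regardless of whether $\bm{r}'$ is deterministic (e.g.\ an output of Algorithm~\ref{alg: active slice algorithm}) or random; likewise for $\bm{g}_r=\bm{g}'_r+\bm{\gamma}_g$. Hence $\bm{r}\sim\eta_r$ and $\bm{g}_r\sim\eta_g$ with $\eta_r,\eta_g$ admitting densities, and Theorem~\ref{thm: validity of maxSKSD} gives $SK_{rg_r}(q,p)=0$ iff $p=q$, almost surely.

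For part (2), I would start from the identity $SK_{rg_r}=\mathbb{E}_{\bm{x},\bm{x}'\sim q}[f_r^*(\bm{x})\,k_{rg_r}(\bm{x}^T\bm{g}_r,\bm{x}'^T\bm{g}_r)\,f_r^*(\bm{x}')]$ with $f_r^*(\bm{x})=(\bm{s}_p(\bm{x})-\bm{s}_q(\bm{x}))^T\bm{r}$, already established in the proof of Proposition~\ref{prop: maxSKSD-g real analytic}. Because $f_{\tilde r}^*=c_r f_r^*$ is linear in $\bm{r}$, replacing $\bm{r}$ by $\tilde{\bm{r}}=c_r\bm{r}$ with $c_r>0$ simply multiplies $SK_{rg_r}$ by $c_r^2$, so its vanishing is unaffected. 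For the rescaling $\tilde{\bm{g}}_r=c_g\bm{g}_r$ with $c_g>0$, the kernel arguments get scaled by $c_g$; by Assumption~6 the map $(u,v)\mapsto k_{rg_r}(c_g u,c_g v)$ is again a $c_0$-universal kernel $k'_{rg_r}$, so $SK_{rg_r}$ evaluated at $\tilde{\bm{g}}_r$ equals $\|\mathbb{E}_q[f_r^*(\bm{x})k'_{rg_r}(\bm{x}^T\bm{g}_r,\cdot)]\|^2$. Now $f_r^*(\bm{x})q(\bm{x})\,d\bm{x}$ is a finite signed measure by Assumption~2, and its pushforward $\nu$ under $\bm{x}\mapsto\bm{x}^T\bm{g}_r$ is a finite signed measure on the LCH set $\mathcal{K}$ (Assumption~1); $c_0$-universality means the kernel mean embedding is injective on such measures, so this quantity vanishes iff $\nu=0$. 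The very same argument with the original kernel $k_{rg_r}$ shows $SK_{rg_r}$ vanishes iff $\nu=0$; since $\nu$ does not depend on the kernel, the two conditions coincide. Combining this with part (1) --- and noting that $c_r=\|\bm{r}\|^{-1}>0$ and $c_g=\|\bm{g}_r\|^{-1}>0$ almost surely, since a density-distributed vector is nonzero a.s.\ --- yields that $SK_{rg_r}$ with the normalized slices $(\tilde{\bm{r}},\tilde{\bm{g}}_r)$ is zero iff $p=q$, almost surely.

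The step I expect to require the most care is the $\bm{g}_r$-rescaling: Assumption~6 only tells us that scaling the argument produces \emph{some} $c_0$-universal kernel $k'$, so one has to argue that the event $\{SK_{rg_r}=0\}$ does not actually depend on which $c_0$-universal kernel is used --- which is exactly the injectivity of the embedding on the finite signed measures $\nu$. The remaining ingredients (degree-two homogeneity in $\bm{r}$, the fact that a convolution with a Gaussian has a density, and the citation of Theorem~\ref{thm: validity of maxSKSD}) are routine.
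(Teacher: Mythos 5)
Your proposal is correct, and for part (1) it is exactly the paper's argument (Gaussian smoothing yields laws with densities on $\mathbb{R}^D$, so Theorem~\ref{thm: validity of maxSKSD} applies). For part (2) you start the same way as the paper --- use the identity $SK_{rg_r}=\mathbb{E}_{\bm{x},\bm{x}'}[f_r^*(\bm{x})k_{rg_r}(\bm{x}^T\bm{g}_r,\bm{x}'^T\bm{g}_r)f_r^*(\bm{x}')]$, note degree-two homogeneity in $\bm{r}$, and invoke Assumption~6 to convert the $c_g$-rescaling of the kernel argument into a new $c_0$-universal kernel $k'_{rg_r}$ --- but you go one step further than the paper. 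The paper stops at the observation that $SK_{rg_r}$ with $(\bm{r},\bm{g}_r)$ and kernel $k_{rg_r}$ equals (up to a positive constant) the SKSD-rg defined with the \emph{new} kernel $k'_{rg_r}$ at the normalized slices $(\tilde{\bm{r}},\tilde{\bm{g}}_r)$, and declares the latter valid; strictly speaking this establishes validity for $k'_{rg_r}$ rather than for the original $k_{rg_r}$ at the normalized slices. Your extra observation --- that the vanishing of a kernel mean embedding of a fixed finite signed measure $\nu$ (the pushforward of $f_r^*(\bm{x})q(\bm{x})\,d\bm{x}$ along $\bm{x}\mapsto\bm{x}^T\bm{g}_r$) is invariant across $c_0$-universal kernels because $c_0$-universality is equivalent to injectivity of the embedding on finite signed Radon measures \citep{sriperumbudur2011universality} --- cleanly decouples the ``is it zero'' question from the kernel and so transfers validity to $k_{rg_r}$ itself. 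That is a genuine refinement: it closes a small gap in the paper and in fact makes the scaling clause of Assumption~6 almost superfluous for this corollary (you could instead invoke $c_0$-universality of $k_{rg_r}$ directly on the rescaled domain $\tilde{\mathcal{K}}$). You also correctly flag the need for $\bm{r},\bm{g}_r\neq\bm{0}$ a.s.\ so that $c_r,c_g$ are finite, which the paper leaves implicit.
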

\begin{proof}
From the theorem \ref{thm: validity of maxSKSD} with ${\bm{g}}_r,{\bm{r}}$, when $p\neq q$, we have 
\[
\begin{split}
    &SK_{rg_r}=||\mathbb{E}_q[\xi_{p,r,g_r}(\bm{x})]||^2_{\mathcal{H}_{rg_r}}\\
    =&\mathbb{E}_{\bm{x},\bm{x}'}[f_r^*(\bm{x})k_{rg_r}(\bm{x}^T\bm{g}_r,\bm{x}'^T\bm{g}_r)f_r^*(\bm{x}')]\\
    =&\mathbb{E}_{\bm{x},\bm{x}'}[c^2_rf_{\tilde{r}}^*(\bm{x})k_{rg_r}(c\bm{x}^T\tilde{\bm{g}}_r,c\bm{x}'^T\tilde{\bm{g}}_r)f_{\tilde{r}}^*(\bm{x}')]>0
\end{split}
\]
From the assumption 6 that $k_{rg_r}(c\bm{x}^T\tilde{\bm{g}}_r,c\bm{x}'^T\tilde{\bm{g}}_r)=k'_{rg_r}(\bm{x}^T\tilde{\bm{g}}_r,\bm{x}'^T\tilde{\bm{g}}_r)$. So this is equivalent to the \textit{SKSD-rg} defined with a new $c_0-$universal kernel $k'_{rg_r}$ and $\tilde{\bm{g}}_r,\tilde{\bm{r}}\in\mathbb{S}^{D-1}$. Thus, the corresponding \textit{maxSKSD-rg} with $\tilde{\bm{g}}_r,\tilde{\bm{r}}\in\mathbb{S}^{D-1}$ is a valid discrepancy almost surely.
\end{proof}
\subsection{Relationship beetween SSD and SKSD} 
\label{appendix subsec: relationship SSD and SKSD}

\textbf{Proof of proposition \ref{prop: optimality of maxSSD-g}}
\begin{proof}
We consider the \textit{SSD-rg} ($S_{rg_r}$) without the optimal test function:
\begin{equation}
    \mathbb{E}_q[s_p^r(\bm{x})h_{rg_r}(\bm{x}^T\bm{g}_r)+\bm{r}^T\bm{g}_r\nabla_{\bm{x}^T\bm{g}_r}h_{rg_r}(\bm{x}^T\bm{g}_r)]
    \label{eq: inner maxSSD-g}
\end{equation}
From the Stein identity (Eq.\ref{eq: Stein Identity}), we can let $\boldsymbol{f}(\boldsymbol{x})=\left[r_{1} h_{rg_r}\left(\boldsymbol{x}^{T} \boldsymbol{g}_r\right), r_{2} h_{rg_r}\left(\boldsymbol{x}^{T} \boldsymbol{g}_r\right), \ldots, r_{D} h_{rg_r}\left(\boldsymbol{x}^{T} \boldsymbol{g}_r\right)\right]^{T}$ and then take the trace. Thus, we have
\[
\mathbb{E}_q[s_q^r(\bm{x})h_{rg_r}(\bm{x}^T\bm{g}_r)]=\mathbb{E}_q[\bm{r}^T\bm{g}_r\nabla_{\bm{x}^T\bm{g}_r}h_{rg_r}(\bm{x}^T\bm{g}_r)]
\]
Substitute it into Eq.\ref{eq: inner maxSSD-g} and change the variable to $\bm{y}=\bm{G}_r\bm{x}$, we have
\[
\begin{split}
    &\mathbb{E}_q[(s_p^r(\bm{x})-s_q^r(\bm{x}))h_{rg_r}(\bm{x}^T\bm{g}_r)]\\
    =&\int q_{G_r}(y_d,\bm{y}_{-d})\underbrace{(s_p^r(\bm{G}_r^{-1}\bm{y})-s_q^r(\bm{G}_r^{-1}\bm{y}))}_{f_r^*(\bm{G}_r^{-1}\bm{y})}h_{rg_r}(y_d)d\bm{y}\\
    =&\int q_{G_r}(y_d)\underbrace{\int q_{G_r}(\bm{y}_{-d}|y_d)f_{r}^*(\bm{G}_r^{-1}\bm{y})d\bm{y}_{-d}}_{h_{rg_r}^*(y_d)}h_{rg_r}(y_d)dy_d\\
    \leq&\sqrt{\mathbb{E}_{q_{G_r}(y_d)}[h_{rg_r}^*(y_d)^2]}\sqrt{\mathbb{E}_{q_{G_r}(y_d)}[h_{rg_r}(y_d)^2]}
\end{split}
\]
where the last inequality is from Cauchy-Schwarz inequality, where the equality holds when 
\[
\begin{split}
&h_{rg_r}(y_d)\propto h_{rg_r}^*(y_d)\\
    &=\mathbb{E}_{q_ {G_r}\left(\boldsymbol{y}_{-d} \mid y_{d}\right)}\left[\left(s_{p}^{r}\left(\boldsymbol{G}_r^{-1} \boldsymbol{y}\right)-s_{q}^{r}\left(\boldsymbol{G}_r^{-1} \boldsymbol{y}\right)\right)\right]
\end{split}
\]
where $y_d=\bm{x}^T\bm{g}_r$. 
\end{proof}

\textbf{Proof of theorem \ref{thm: maxSKSD approximates maxSSD}}
\begin{proof}
Let's first re-write of $S_{{rg_r}}$ and $SK_{{rg_r}}$.
\[
\begin{split}
    &S_{{rg_r}}=\sup_{h_{rg_r}\in\mathcal{F}_q}\mathbb{E}_q[(s_p^r(\bm{y})-s_q^r(\bm{x}))h_{rg_r}(\bm{x}^T\bm{g}_r)]\\
    =&\mathbb{E}_{q_{G_r}(y_d)}[\underbrace{\int q_{G_r}(\bm{y}_{-d}|y_d)(s_p^r(\bm{G}^{-1}_r\bm{y})-s_q^r(\bm{G}^{-1}_r\bm{y}))d\bm{y}_{-d}}_{h^*_{rg_r}(y_d)}\\
    \times&h^*_{rg_r}(y_d)]\\
    =&\mathbb{E}_{q_{G_r}(y_d)}[h^{*}_{rg_r}(y_d)^2]
\end{split}
\]
where the second equality is from proposition \ref{prop: optimality of maxSSD-g}. 
\[
\begin{split}
    SK_{{rg_r}}=\langle \mathbb{E}_q[\xi_{p,r,g_r}(\bm{x})],\mathbb{E}_q[\xi_{p,r,g_r}(\bm{x})]\rangle_{\mathcal{H}_k}
\end{split}
\]
where $\xi_{p,r,g_r}(\bm{x},\cdot)$ is defined in Eq.\ref{eq: SKSD test function}, and $\langle\cdot,\cdot\rangle_{\mathcal{H}_{rg_r}}$ is the RKHS inner product induced by kernel $k_{rg_r}$. 
By simple algebraic manipulation and Stein identity (Eq.\ref{eq: Stein Identity}), we have
\[
\begin{split}
    &\mathbb{E}_q[\xi_{p,r,g_r}(\bm{x},\cdot)]\\
    =&\mathbb{E}_q[(s_p^r(\bm{x})-s_q^r(\bm{x}))k_{rg_r}(\bm{x}^T\bm{g}_r,\cdot)]\\
    =&\mathbb{E}_{q_{G_r}(y_d)}[\underbrace{\int q_{G_r}(\bm{y}_{-d}|y_d)(s_p^r(\bm{G}^{-1}_r\bm{y})-s_q^r(\bm{G}^{-1}_r\bm{y}))d\bm{y}_{-d}}_{h^*_{rg_r}(y_d)}\\
    \times& k_{rg_r}(y_d,\cdot)]\\
    =&\mathbb{E}_{q_{G_r}(y_d)}[h^*_{rg_r}(y_d)k_{rg_r}(y_d,\cdot)]
\end{split}
\]
Thus, we have
\[
\begin{split}
    &SK_{{rg_r}}\\
    =&\mathbb{E}_{y_d,y_d'\sim q_{G_r}(y_d)}[h^*_{rg_r}(y_d)k_{rg_r}(y_d,y_d')h^*_{rg_r}(y_d')]\\
    \leq&\sqrt{\mathbb{E}_{y_d,y'_d}[k_{rg_r}(y_d,y_d')^2]}\sqrt{\mathbb{E}_{y_d}[h^*_{rg_r}(y_d)^2]}\sqrt{\mathbb{E}_{y_d'}[h^*_{rg_r}(y_d')^2]}\\
    =&MS^*_{{rg_r}}
\end{split}
\]
where constant $M$ is from the bounded kernel assumption, and the inequality is from Cauchy-Schwarz inequality.
Without the loss of generality, we can set $M=1$. For other value of $M>0$, one can always set the optimal test function ($h_{rg_r}^*$) for \textit{SSD-rg} with coefficient $M$. The the new \textit{SSD-g} will be $M$ multiplied by the original \textit{SSD-rg} with $M=1$.

Thus, \textit{SSD-rg} is an upper bound for \textit{SKSD-rg}. 
From the assumption 1, we know that the induced set $\mathcal{K}=\{y\in\mathbb{R}|y=\bm{x}^T\bm{g},||g||=1,\bm{x}\in\mathcal{X}\}$ is LCH, and the kernel $k_{rg_r}:\mathcal{K}\times\mathcal{K}\rightarrow \mathbb{R}$ is $c_0-$universal. Then, from \cite{sriperumbudur2011universality}, $c_0-$universal implies $L_p-$universal. Namely, the induced RKHS $\mathcal{H}_{rg_r}$ is dense in $L^p(\mathcal{K};\mu)$ with all Borel probability measure $\mu$ w.r.t. \textit{p-norm}, defined as 
\[
||f||_p=\left(\int |f(\bm{x})|^pd\mu(\bm{x})\right)^\frac{1}{p}
\]
Now, from the assumption 4, we know $h_{rg_r}^*(y_d)$ is bounded for all possible $\bm{g}_r$, we have
\[
\int q_{G_r}(y_d)|h_{rg_r}^*(y_d)|^2dy_d<\infty
\]
This means $h_{rg_r}^*\in L^2(\mathcal{K},\mu_{G_r})$, where $\mu_{G_r}$ is the probability measure with density $q_{G_r}(y_d)$

From the $L_p-$universality, there exists a function $\widetilde{h^*_{rg_r}}\in\mathcal{H}_{rg_r}$, such that for any given $\epsilon>0$,
\[
||h^*_{rg_r}-\widetilde{h^*_{rg_r}}||_{2}<\epsilon
\]
Let's define $\widetilde{SK_{{rg_r}}}$ is the \textit{SKSD-rg} with the specific kernelized test function $\widetilde{h^*_{rg_r}}$, and from the optimality of \textit{SKSD-rg}, we have
\[
SK_{{rg_r}}\geq\widetilde{SK_{{rg_r}}}
\]
Therefore, we have
\[
\begin{split}
    0&\leq S_{{rg_r}}-SK_{{rg_r}}\\
    &\leq S_{{rg_r}}-\widetilde{SK_{{rg_r}}}\\
    &=\mathbb{E}_q[(s_p^r(\bm{x})-s_q^r(\bm{x}))(h^*_{rg_r}(\bm{x}^T\bm{g}_r)-\widetilde{h^*_{rg_r}}(\bm{x}^T\bm{g}_r))]\\
    &\leq\underbrace{\sqrt{\mathbb{E}_{q}[(s_p^r(\bm{x})-s_q^r(\bm{x}))^2]}}_{C_r}\\
    &\times \sqrt{\mathbb{E}_q[(h^*_{rg_r}(\bm{x}^T\bm{g}_r)-\widetilde{h^*_{rg_r}}(\bm{x}^T\bm{g}_r))^2]}\\
    &=C_r\sqrt{\int q_{G_r}(y_d,\bm{y}_{-d})(h^*_{rg_r}(y_d)-\widetilde{h^*_{rg_r}}(y_d))^2d\bm{y}}\\
    &=C_r||h^*_{rg_r}-\widetilde{h^*_{rg_r}}||_2<C_r\epsilon
\end{split}
\]
From assumption 2, we know $s_p^r(\bm{x})-s_q^r(\bm{x})$ is square integrable for all possible $\bm{r}$. Therefore, let's define $C=\max_{\bm{r}\in\mathbb{S}^{D-1}}C_r$, then,
\[
0\leq S_{{rg_r}}-SK_{{rg_r}}<C\epsilon
\]
\end{proof}

\section{Theory related to active slice $\bm{g}$}
\label{Appendix: theory related to g}
\subsection{Optimal test function for PSD}
\label{appendix subsec: optimal test function for PSD}
\begin{prop}[Optimality of PSD]
Assume the assumption $1-3$ (density regularity) are satisfied, then the optimal test function for \textit{PSD} given $O_r$ is proportional to the projected score difference, i.e.
\begin{equation}
    f_{r}^{*}(\boldsymbol{x}) \propto\left(s_{p}^{r}(\boldsymbol{x})-s_{q}^{r}(\boldsymbol{x})\right)
\end{equation}
Thus, 
\begin{equation}
    \text{PSD}(q,p;O_r)=\sum_{\bm{r}\in O_r}\mathbb{E}_{q}[(s_p^r(\bm{x})-s_q^r(\bm{x}))^2]
    \label{eq: optimal PSD}
\end{equation}
if the coefficient of $f^*_r$ to be 1. 
\label{prop: optimality of PSD}
\end{prop}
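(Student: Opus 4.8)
The plan is to collapse the supremum over $\mathcal{F}_q$ in each summand of Eq.~\ref{eq: PSD} to a one‑dimensional Cauchy--Schwarz problem, mirroring the classical argument that the optimal Stein test function is the score difference. Fix $\bm{r}\in O_r$ and a scalar test function $f_r\in\mathcal{F}_q$. The first step is to eliminate the divergence term $\mathbb{E}_q[\bm{r}^T\nabla_{\bm{x}}f_r(\bm{x})]$ using the Stein identity (Eq.~\ref{eq: Stein Identity}). Concretely, I would apply the identity to the vector field $\bm{f}(\bm{x})=f_r(\bm{x})\,\bm{r}$, which lies in the Stein class of $q$: $\bm{r}$ is constant, $f_r$ is bounded and continuous by \hyperref[assumption 3]{Assumption~3}, and under \hyperref[assumption 1]{Assumption~1} the density $q$ vanishes at infinity (or on $\partial\mathcal{X}$), so $q(\bm{x})\bm{f}(\bm{x})\to 0$ and the boundary term in integration by parts drops. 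Since $\nabla_{\bm{x}}^T\bm{f}(\bm{x})=\bm{r}^T\nabla_{\bm{x}}f_r(\bm{x})$ and $\bm{s}_q(\bm{x})^T\bm{f}(\bm{x})=s_q^r(\bm{x})f_r(\bm{x})$, the identity yields $\mathbb{E}_q[s_q^r(\bm{x})f_r(\bm{x})+\bm{r}^T\nabla_{\bm{x}}f_r(\bm{x})]=0$.

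Substituting this cancellation into the summand of Eq.~\ref{eq: PSD} gives
\[
\mathbb{E}_q[s_p^r(\bm{x})f_r(\bm{x})+\bm{r}^T\nabla_{\bm{x}}f_r(\bm{x})]=\mathbb{E}_q[(s_p^r(\bm{x})-s_q^r(\bm{x}))f_r(\bm{x})],
\]
which is linear in $f_r$. Bounding it by Cauchy--Schwarz in $L^2(q)$,
\[
\mathbb{E}_q[(s_p^r(\bm{x})-s_q^r(\bm{x}))f_r(\bm{x})]\le \sqrt{\mathbb{E}_q[(s_p^r(\bm{x})-s_q^r(\bm{x}))^2]}\,\sqrt{\mathbb{E}_q[f_r(\bm{x})^2]},
\]
with equality exactly when $f_r\propto s_p^r-s_q^r$. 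By \hyperref[assumption 2]{Assumption~2} the map $\bm{x}\mapsto s_p^r(\bm{x})-s_q^r(\bm{x})$ is bounded, continuous and square‑integrable under $q$, hence an admissible element of $\mathcal{F}_q$ under \hyperref[assumption 3]{Assumption~3} (combined with \hyperref[assumption 1]{Assumption~1}, which makes boundedness and continuity sufficient for Stein‑class membership). Therefore the supremizing direction is $f_r^*\propto s_p^r-s_q^r$; fixing the proportionality constant to $1$, i.e. $f_r^*=s_p^r-s_q^r$, and evaluating the collapsed expression gives $\mathbb{E}_q[(s_p^r(\bm{x})-s_q^r(\bm{x}))^2]$ for that summand. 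Summing over $\bm{r}\in O_r$ produces Eq.~\ref{eq: optimal PSD}.

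The one genuinely delicate point is the treatment of the \emph{scale} of $f_r$: after the Stein‑identity cancellation the objective is homogeneous of degree one in $f_r$, so the statement is really about the optimal \emph{direction}, and the displayed value $\mathbb{E}_q[(s_p^r-s_q^r)^2]$ is the value attained under the stated normalization (``coefficient $1$''), consistent with how the optimal Stein test function $\bm{f}^*\propto\bm{s}_p-\bm{s}_q$ is reported in the background. I would make this normalization explicit rather than claim an unconstrained supremum, and then note (as the paper does) that the resulting expression coincides with the Fisher divergence when $O_r$ is an orthonormal basis. The only other thing to check carefully is that the Stein identity genuinely applies to $\bm{f}=f_r\bm{r}$ in both regimes of \hyperref[assumption 1]{Assumption~1} --- $\mathcal{X}=\mathbb{R}^D$ with $q\to 0$, and $\mathcal{X}$ compact with $q=0$ on $\partial\mathcal{X}$ --- which is routine given the boundedness of $f_r$ from \hyperref[assumption 3]{Assumption~3}.
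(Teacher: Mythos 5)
Your proof is correct and follows essentially the same route as the paper's: apply the Stein identity to cancel the divergence term, then use Cauchy--Schwarz in $L^2(q)$ to identify $f_r^*\propto s_p^r-s_q^r$. Your version is a bit more careful than the paper's in spelling out the vector-field construction $\bm{f}=f_r\bm{r}$, the Stein-class membership, and the degree-one homogeneity (hence the normalization), but these are elaborations of the same argument, not a different approach.
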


\begin{proof}
From the Stein identity (Eq.\ref{eq: Stein Identity}), we can re-write the inner part of the supremum of Eq.\ref{eq: PSD} as 
\[
\begin{split}
&\mathbb{E}_q[s_p^r(\bm{x})f_{r}(\bm{x})+\bm{r}^T\nabla_{\bm{x}}f_r(\bm{x})]\\
=&\mathbb{E}_q[(s_p^r(\bm{x})-s_q^r(\bm{x}))f_{r}(\bm{x})]
\end{split}
\]
Then, we can upper bound the PSD (Eq.\ref{eq: PSD}) as the following
\[
\begin{split}
    &\sum_{\bm{r}\in O_r}\mathbb{E}_q[(s^r_p(\bm{x})-s^r_q(\bm{x}))f_r(\bm{x})]\\
    \leq&\sum_{\bm{r}\in O_r}\sqrt{\mathbb{E}_q[(s_p^r(\bm{x})-s_q^r(\bm{x}))^2]}\sqrt{\mathbb{E}_q[(f_r(\bm{x}))^2]}
\end{split}
\]
by Cauchy-Schwarz inequality. It is well-known that the equality holds when $f_r(\bm{x})\propto (s_p^r(\bm{x})-s_q^r(\bm{x}))$
\end{proof}
\subsection{Proof of Theorem \ref{thm: controlled approximation}}
\begin{proof}
The key to this proof is to notice that $h_{rg_r}^*$ is the conditional mean of $f_{r}^*$ w.r.t. the transformed distribution $q_{G_r}$. By using the similar terminology of proposition \ref{prop: optimality of maxSSD-g}, and let $s_p^r=s_p^r(\bm{x})$ for abbreviation. Then,
\[
\begin{split}
    &\mathbb{E}_q[(s_p^r-s_q^r)f_{r}^*(\bm{x})]-\mathbb{E}_q[(s_p^r-s_q^r)h_{rg_r}^*(\bm{x}^T\bm{g}_r)]\\
    =&\int q(\bm{x})[(s_p^r-s_q^r)^2-(s_p^r-s_q^r)h_{rg_r}^*(\bm{x}^T\bm{g}_r)]d\bm{x}\\
    =&\int q_{G_r}(y_d)[\int q_{G_r}(\bm{y}_{-d}|y_d)(s_p^r(\bm{G}_r^{-1}\bm{y})-s_q^r(\bm{G}_r^{-1}\bm{y}))^2d\bm{y}_{-d}\\
    -&\underbrace{\int q_{G_r}(s_p^r(\bm{G}_r^{-1}\bm{y})-s_q^r(\bm{G}_r^{-1}\bm{y}))d\bm{y}_{-d}}_{h_{rg_r}^*(y_d)}h_{rg_r}^*(y_d)]dy_d\\
    =&\int q_{G_r}(y_d)[\int q_{G_r}(\bm{y}_{-d}|y_d)(f_{r}^*(\bm{G}_{r}^{-1}\bm{y})-h_{rg_r}^*(y_d))^2]d\bm{y}\\
    =&\mathbb{E}_q[(f_{r}^*(\bm{x})-h_{rg_r}^*(\bm{x}^T\bm{g}_r))^2]\geq 0
\end{split}
\]
where the $3^{\text{rd}}$ equality is due to the fact that $h_{rg_r}^*$ is the conditional mean of $f_{r}^*$. Thus, 
\[
\begin{split}
    &PSD-S_{g_r}\\
    =&\sum_{\bm{r}\in O_r}\mathbb{E}_q[(s_p^r-s_q^r)f_{r}^*(\bm{x})]-\mathbb{E}_q[(s_p^r-s_q^r)h_{rg_r}^*(\bm{x}^T\bm{g}_r)]\\
    =&\sum_{\bm{r}\in O_r}\mathbb{E}_q[(f_r^*(\bm{x})-h_{rg_r}^*(\bm{x}^T\bm{g}_r))]\geq 0
\end{split}
\]
\end{proof}
\subsection{Proof of Theorem \ref{thm: Upper bound controlled approximation}}
Before we give the details, we introduce the main inequality and its variant for the proof.
\begin{lemma}[Poincar\'e Inequality]
For a probabilistic distribution $p$ that satisfies assumption $7$, for all locally Lipschitz function $f(\bm{x}): \mathcal{X}\subseteq\mathbb{R}^{D} \rightarrow \mathbb{R},$ we have the following inequality
$$
\operatorname{Var}_{p}(f(\boldsymbol{x})) \leq C_{p} \int p(\boldsymbol{x})|| \nabla_{\boldsymbol{x}} f(\boldsymbol{x}) \|^{2} d \boldsymbol{x}
$$
where $C_{p}$ is called Poincaré constant that is only related to $p$.
\label{lemma: Poincare Inequality}
\end{lemma}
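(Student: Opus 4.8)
The plan is to recognise this as the classical fact that every log-concave probability measure admits a spectral gap, and to assemble the argument from standard ingredients rather than prove it from scratch. Assumption~\hyperref[assumption 7]{7} writes $p(\bm{x})=\exp(-V(\bm{x}))$ with $V$ convex, so $p$ is log-concave (on $\mathbb{R}^{D}$, or on a convex $\mathcal{X}$ with $p$ vanishing at $\partial\mathcal{X}$ as in Assumption~\hyperref[assumption 1]{1}). First I would reduce to smooth, compactly supported $f$ by mollification and truncation, noting that both sides of the inequality behave continuously under such approximation (using square-integrability, cf.~Assumption~\hyperref[assumption 2]{2}); extending back to locally Lipschitz $f$ and differentiating under the integral are then routine.

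For the case where $V$ is additionally \emph{strongly} convex, say $\nabla^{2}V(\bm{x})\succeq\lambda\bm{I}$ with $\lambda>0$, I would invoke the Brascamp--Lieb inequality,
\[
\operatorname{Var}_{p}\!\bigl(f(\bm{x})\bigr)\;\le\;\int p(\bm{x})\,\nabla_{\bm{x}}f(\bm{x})^{T}\bigl(\nabla^{2}V(\bm{x})\bigr)^{-1}\nabla_{\bm{x}}f(\bm{x})\,d\bm{x}\;\le\;\frac{1}{\lambda}\int p(\bm{x})\,\|\nabla_{\bm{x}}f(\bm{x})\|^{2}\,d\bm{x},
\]
which gives the Poincar\'e inequality with $C_{p}\le 1/\lambda$, a quantity depending on $p$ alone. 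For the general, merely log-concave case I would appeal to Bobkov's theorem that any log-concave probability measure on $\mathbb{R}^{D}$ has a strictly positive Cheeger isoperimetric constant, and then convert this into a positive spectral gap via Cheeger's inequality; this yields a finite $C_{p}$ determined by $p$ only, which is exactly the claim.

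The main obstacle is this general log-concave step: unlike the strongly convex case it has no short elementary derivation, so one must rely on the isoperimetric theory of log-concave measures (Bobkov for existence of $C_{p}$, and the KLS line of work if one wants quantitative control of $C_{p}$). Everything else --- the density argument reducing to smooth test functions and the bookkeeping around Assumptions~\hyperref[assumption 2]{2} and~\hyperref[assumption 4]{4} --- is standard.
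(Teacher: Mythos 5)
The paper does not prove this lemma: the Poincar\'e inequality for log-concave measures is stated as a known classical fact, accompanied only by the remark that log-concavity is a sufficient (not necessary) condition, and is then applied directly in Lemma~\ref{lemma: subspace poincare inequality}. There is therefore no in-paper argument to compare your proposal against. Your reconstruction is a correct sketch of the standard route: Brascamp--Lieb gives $C_p\le 1/\lambda$ when $V$ is $\lambda$-strongly convex, and for general log-concave $p$ on $\mathbb{R}^D$ one combines Bobkov's theorem (a strictly positive Cheeger isoperimetric constant for any log-concave probability measure) with Cheeger's inequality to obtain a finite spectral gap depending only on $p$; as you note, this last step is genuinely non-elementary and cannot be replaced by a short direct computation. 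The mollification/truncation reduction and the domain caveat (either $\mathcal{X}=\mathbb{R}^D$ with decay at infinity, or convex $\mathcal{X}$ with $p$ vanishing on $\partial\mathcal{X}$, as provided by Assumption~1) are indeed routine. One minor correction to your bookkeeping: Assumption~2 controls integrability of the projected score difference, not of $\nabla f$ for an arbitrary locally Lipschitz $f$, so the lemma should be read with the implicit convention that the right-hand side is finite; this is automatic for the test function $f=f_r^*$ actually used downstream, and the inequality is vacuous otherwise.
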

One should note that the assumption of log concavity of $p$ is a sufficient condition for Poincar\'e inequality, which means it may be applied to a broader class of distributions. But it is beyond the scope of this work. 

Due to the form of optimal test functions of \textit{SSD-g}, we need to deal with the transformed distribution $q_{G_r}$ and its conditional expectations (see Eq.\ref{eq: optimal test function maxSSD-g}). Unfortunately, the original form of Poincar\'e inequality cannot be applied. In the following, we introduce its variant called \textit{subspace} Poincar\'e inequality \cite{constantine2014active,zahm2020gradient,parente2020generalized} to deal with the conditional expectation. But before that, we need to make sure the transformed distribution and its conditional density still satisfy the conditions of Poincar\'e inequality, i.e. log concavity.
\begin{lemma}[Preservation of log concavity]
Assume distribution $q(\boldsymbol{x})=$ $\exp (-V(\boldsymbol{x}))$ is log-concave. With arbitrary orthogonal matrix $\boldsymbol{G}$ and corresponding transformed distribution $q_{G},$ the conditional distribution $q_{G}\left(\boldsymbol{y}_{-d} \mid y_{d}\right)$ is also log-concave for all $d=1, \ldots, D .$
\label{lemma: preservation of log concavity}
\end{lemma}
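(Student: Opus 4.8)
The plan is to exploit the affine-invariance of convexity directly, so that no deep result (such as Pr\'ekopa--Leindler) is needed. First I would write the transformed density explicitly: since $\bm{G}$ is orthogonal, $|\det \bm{G}| = 1$, so the change of variables $\bm{y} = \bm{G}\bm{x}$ gives $q_G(\bm{y}) = q(\bm{G}^{-1}\bm{y}) = \exp(-V(\bm{G}^{-1}\bm{y}))$. The map $\bm{y} \mapsto \bm{G}^{-1}\bm{y}$ is linear, hence affine, so $\bm{y} \mapsto V(\bm{G}^{-1}\bm{y})$ is convex as the composition of the convex function $V$ with an affine map; therefore $q_G$ is itself log-concave.

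Next I would pass to the conditional. Fix $d$ and a value $y_d$ in the support of the marginal $q_G(y_d) = \int q_G(\bm{y})\, d\bm{y}_{-d} > 0$. By definition $q_G(\bm{y}_{-d} \mid y_d) = q_G(\bm{y}) / q_G(y_d)$, and the denominator is a strictly positive quantity that does not depend on $\bm{y}_{-d}$. Hence, as a function of $\bm{y}_{-d}$ alone, $-\log q_G(\bm{y}_{-d}\mid y_d) = V(\bm{G}^{-1}\bm{y}) + \log q_G(y_d)$. The first term is the restriction of the convex function $\bm{y}\mapsto V(\bm{G}^{-1}\bm{y})$ to the affine subspace $\{\bm{y} : y_d \text{ fixed}\}$, which is again convex, and the second term is constant in $\bm{y}_{-d}$. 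Thus $-\log q_G(\bm{y}_{-d}\mid y_d)$ is convex in $\bm{y}_{-d}$, i.e. the conditional distribution is log-concave; since $d$ and $y_d$ were arbitrary, this holds for all $d = 1, \dots, D$.

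This argument is essentially routine: the key reason it is easy is that we manipulate \emph{conditionals} rather than \emph{marginals}, so convexity is simply restricted to an affine slice rather than integrated out. The only points needing a little care are (i) confirming that the Jacobian factor is trivial, which is exactly where orthogonality of $\bm{G}$ enters, and (ii) ensuring the support considerations are handled so that $q_G(\bm{y}_{-d}\mid y_d)$ is genuinely defined for $q_G(y_d)$-almost every $y_d$; both are immediate from Assumption~7 and the properties of orthogonal transformations. I would therefore expect the write-up to be short, with no genuine obstacle.
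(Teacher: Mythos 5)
Your argument is correct and follows the same outline as the paper's: change variables using orthogonality (so the Jacobian is trivial), observe that $-\log q_G(\bm{y}_{-d}\mid y_d) = V(\bm{G}^{-1}\bm{y}) + \log q_G(y_d)$ with the second term constant in $\bm{y}_{-d}$, and conclude convexity in $\bm{y}_{-d}$. The one substantive difference is in how the final convexity is verified. The paper differentiates twice and checks that the Hessian $\bm{G}_{\backslash d}\, V''(\bm{G}^{-1}\bm{y})\, \bm{G}_{\backslash d}^T$ is positive semi-definite, which implicitly assumes $V$ is twice differentiable — a regularity condition not actually present in Assumption~7, which only requires $V$ convex. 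You instead invoke the stability of convexity under precomposition with an affine map and under restriction to an affine slice, which needs no smoothness at all. Your version is therefore slightly more general and avoids the Hessian bookkeeping; the paper's version is more explicit and makes visible the block $\bm{G}_{\backslash d} V'' \bm{G}_{\backslash d}^T$, which foreshadows the sensitivity matrix $\bm{H}_r$ used downstream. Either is a complete proof of the lemma as stated.
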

\begin{proof}
Assume we have $\boldsymbol{y}=\boldsymbol{G} \boldsymbol{x}$. Thus, by change of variable formula, $q_{G}(\boldsymbol{y})=$ $q(\boldsymbol{x})=q\left(\boldsymbol{G}^{-1} \boldsymbol{y}\right)=\exp \left(-V\left(\boldsymbol{G}^{-1} \boldsymbol{y}\right)\right) .$ Thus, the log conditional distribution
$$
\log q_{G}\left(\boldsymbol{y}_{-d} \mid y_{d}\right)=-V\left(\boldsymbol{G}^{-1} \boldsymbol{y}\right)-\log q_{G}\left(y_{d}\right)
$$
We inspect its Hessian w.r.t $\boldsymbol{y}_{-d}$
$$
\begin{aligned}
& \nabla_{\boldsymbol{y}_{-d}}^{2}\left(V\left(\boldsymbol{G}^{-1} \boldsymbol{y}\right)+\log q_{G}\left(y_{d}\right)\right) \\
=& \nabla_{\boldsymbol{y}_{-d}}^{2}\left(V\left(\boldsymbol{G}^{-1} \boldsymbol{y}\right)\right) \\
=& \nabla_{\boldsymbol{y}_{-d}}\left(\boldsymbol{G}_{\backslash d} V^{\prime}\left(\boldsymbol{G}^{-1} \boldsymbol{y}\right)\right) \\
=& \boldsymbol{G}_{\backslash d} V^{\prime \prime}\left(\boldsymbol{G}^{-1} \boldsymbol{y}\right) \boldsymbol{G}_{\backslash d}^{T}
\end{aligned}
$$
where $\boldsymbol{G}_{\backslash d}=\left[\boldsymbol{g}_{1}, \ldots, \boldsymbol{g}_{d-1}, \boldsymbol{g}_{d+1}, \ldots, \boldsymbol{g}_{D}\right]^{T}$ and $V^{\prime}\left(\boldsymbol{G}^{-1} \boldsymbol{y}\right)=\nabla_{\boldsymbol{G}^{-1} \boldsymbol{y}} V\left(\boldsymbol{G}^{-1} \boldsymbol{y}\right)$.
We already know that $V(\cdot)$ is a convex function. Thus, for all $\boldsymbol{u} \in \mathbb{R}^{D}$, $\boldsymbol{u}^{T} V^{\prime \prime}(\boldsymbol{x}) \boldsymbol{u} \geq 0,$ therefore,
$$
\boldsymbol{u}^{T} \boldsymbol{G}_{\backslash d} V^{\prime \prime}\left(\boldsymbol{G}^{-1} \boldsymbol{y}\right) \boldsymbol{G}_{\backslash d}^{T} \boldsymbol{u}=\boldsymbol{l}^{T} V^{\prime \prime}\left(\boldsymbol{G}^{-1} \boldsymbol{y}\right) \boldsymbol{l} \geq 0
$$
where $\boldsymbol{l}=\boldsymbol{G}_{\backslash d}^{T} \boldsymbol{u} .$
\end{proof}
Now, we can introduce the subspace Poincar\'e inequality
\begin{lemma}[Poincar\'e inequality for conditional expectation]
Assume the assumption 2,4 (density regularity), 7 (Poincar\'e inequality condition) are satisfied, with arbitrary orthogonal matrix $\boldsymbol{G}, \boldsymbol{y}=\boldsymbol{G} \boldsymbol{x}$ and $y_{d}=\boldsymbol{x}^{T} \boldsymbol{g},$ we have the following inequality
\[
\begin{split}
&\int q_{G}\left(\boldsymbol{y}_{-d} \mid y_{d}\right)\left[f_{r}^{*}\left(\boldsymbol{G}^{-1} \boldsymbol{y}\right)-h_{rg_r}^{*}\left(y_{d}\right)\right]^{2} d \boldsymbol{y}_{-d} \\
\leq& C_{y d} \mathbb{E}_{q_{G}\left(\boldsymbol{y}_{-d} \mid y_{d}\right)}\left[\left\|\boldsymbol{G}_{\backslash d} \nabla f_{r}^{*}\right\|^{2}\right] \end{split}
\]
where $C_{y_{d}}$ is the Poincaré constant, $\boldsymbol{G}_{\backslash d}=\left[\boldsymbol{a}_{1}, \ldots, \boldsymbol{a}_{d-1}, \boldsymbol{a}_{d+1}, \ldots, \boldsymbol{a}_{D}\right]^{T}$ is the orthogonal matrix $\bm{G}$ excluding $\bm{a}_d=\bm{g}$ and $f_{r}^*$, $h_{rg_r}^*$ are the optimal test functions defined in proposition \ref{prop: optimality of PSD}, \ref{prop: optimality of maxSSD-g} respectively with coefficient 1. 
\label{lemma: subspace poincare inequality}
\end{lemma}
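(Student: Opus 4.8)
The plan is to recognise the claimed inequality as nothing more than the ordinary Poincaré inequality (Lemma~\ref{lemma: Poincare Inequality}) applied to the conditional distribution $q_{G}(\bm{y}_{-d}\mid y_d)$ with a suitable test function. First I would fix $y_d$ and regard $q_{G}(\cdot\mid y_d)$ as a probability density on $\mathbb{R}^{D-1}$. By Lemma~\ref{lemma: preservation of log concavity} this conditional density is log-concave, so it satisfies the Poincaré inequality with some constant $C_{y_d}$ depending on $y_d$ (and implicitly on $\bm{G}$): for every locally Lipschitz $\psi:\mathbb{R}^{D-1}\rightarrow\mathbb{R}$ one has $\operatorname{Var}_{q_G(\cdot\mid y_d)}(\psi)\leq C_{y_d}\,\mathbb{E}_{q_G(\cdot\mid y_d)}[\|\nabla_{\bm{y}_{-d}}\psi\|^2]$.

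Next I would instantiate this at $\psi(\bm{y}_{-d}):=f_r^*(\bm{G}^{-1}\bm{y})$, where $\bm{y}$ is the vector whose $d$-th coordinate is the fixed $y_d$ and whose remaining coordinates form $\bm{y}_{-d}$. Two facts make this work. First, by Assumption~\ref{assumption 2} the score difference $f_r^*=s_p^r-s_q^r$ is bounded and continuously differentiable, hence locally Lipschitz, so $\psi$ is an admissible test function. Second, by the characterisation in Proposition~\ref{prop: optimality of maxSSD-g} (and Assumption~\ref{assumption 4}, which guarantees $h_{rg_r}^*$ is well defined and bounded), the conditional mean $\mathbb{E}_{q_G(\cdot\mid y_d)}[\psi]$ equals exactly $h_{rg_r}^*(y_d)$. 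Consequently $\operatorname{Var}_{q_G(\cdot\mid y_d)}(\psi)$ coincides with the left-hand side $\int q_{G}(\bm{y}_{-d}\mid y_d)\,[f_r^*(\bm{G}^{-1}\bm{y})-h_{rg_r}^*(y_d)]^2\,d\bm{y}_{-d}$.

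It then remains only to rewrite the gradient term. Since $\bm{G}$ is orthogonal, $\bm{x}=\bm{G}^{-1}\bm{y}=\bm{G}^{T}\bm{y}=\sum_i y_i\bm{a}_i$, so $\partial\bm{x}/\partial y_j=\bm{a}_j$ for $j\neq d$, and the chain rule gives $\partial\psi/\partial y_j=\bm{a}_j^{T}\nabla_{\bm{x}}f_r^*(\bm{x})$. Stacking these partials over $j\neq d$ yields $\nabla_{\bm{y}_{-d}}\psi=\bm{G}_{\backslash d}\,\nabla_{\bm{x}}f_r^*(\bm{x})$, hence $\|\nabla_{\bm{y}_{-d}}\psi\|^2=\|\bm{G}_{\backslash d}\nabla f_r^*\|^2$. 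Substituting this into the Poincaré inequality above gives exactly the stated bound, with the constant the conditional Poincaré constant $C_{y_d}$.

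All computations here are routine; the only points requiring care are bookkeeping the orthogonal change of variables (the Jacobian is $1$, so $q_{G}$ really is the pushforward density and its conditionals are the relevant measures) and confirming that the regularity hypotheses invoked (Assumption~\ref{assumption 2} for local Lipschitzness of $f_r^*$, Assumption~\ref{assumption 4} for the conditional mean, and Assumption~\ref{assumption 7} together with Lemma~\ref{lemma: preservation of log concavity} for log-concavity of the conditional) are precisely what is needed to apply Lemma~\ref{lemma: Poincare Inequality}. I do not anticipate a genuine obstacle: the lemma is essentially a repackaging of the subspace Poincaré inequality from the active subspace literature, adapted to the rotated coordinates $\bm{y}=\bm{G}\bm{x}$.
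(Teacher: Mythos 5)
Your proposal is correct and follows essentially the same route as the paper's own proof: invoke Lemma~\ref{lemma: preservation of log concavity} to get log-concavity of the conditional, apply the ordinary Poincar\'e inequality (Lemma~\ref{lemma: Poincare Inequality}) to $\psi(\bm{y}_{-d})=f_r^*(\bm{G}^{-1}\bm{y})$ after identifying $h_{rg_r}^*(y_d)$ as its conditional mean, and finish with the chain-rule identity $\nabla_{\bm{y}_{-d}}\psi=\bm{G}_{\backslash d}\nabla_{\bm{x}} f_r^*$. You spell out the chain-rule bookkeeping slightly more explicitly than the paper does, but there is no substantive difference.
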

\begin{proof}
From lemma \ref{lemma: preservation of log concavity}, we know $q_{G}\left(\boldsymbol{y}_{-d} \mid y_{d}\right)$ is a log-concave distribution. Therefore, it satisfies the Poincaré inequality (lemma.\ref{lemma: Poincare Inequality}). We have
\[
\begin{aligned}
& \int q_{G}\left(\boldsymbol{y}_{-d} \mid y_{d}\right)\left[f_{r}^{*}\left(\boldsymbol{G}^{-1} \boldsymbol{y}\right)-h^*_{rg_r}(y_d)\right]^{2} d \boldsymbol{y}_{-d} \\
=& Var_{q_G(\bm{y}_{-d}|y_d)}(f_{r}^*(\bm{G}^{-1}\bm{y})) \\
\leq & C_{y_{d}} \int q_{G}\left(\boldsymbol{y}_{-d} \mid y_{d}\right)\left\|\nabla_{\boldsymbol{y}-d} f_{r}^{*}\left(\boldsymbol{G}^{-1} \boldsymbol{y}\right)\right\|^{2} d \boldsymbol{y}_{-d} \\
=& C_{y_{d}} \int q_{G}\left(\boldsymbol{y}_{-d} \mid y_{d}\right)\left\|\boldsymbol{G}_{\backslash d} \nabla_{\boldsymbol{G}^{-1} \boldsymbol{y}} f_{r}^{*}\left(\boldsymbol{G}^{-1} \boldsymbol{y}\right)\right\|^{2} d \boldsymbol{y}_{-d}
\end{aligned}
\]
The first equality comes from the fact that $h^*_{rg_r}(y_d)$ is actually a conditional mean of $f_{r}^*(\bm{G}^{-1}\bm{y})$, and the inequality comes from the direct application of Poincar\'e inequality on $q_{G}(\bm{y}_{-d}|y_d)$ and $f_{r}^*(\bm{G}^{-1}\bm{y})$.
\end{proof}
With the above tools, it is now easy to prove theorem \ref{thm: Upper bound controlled approximation}.

\textbf{Theorem \ref{thm: Upper bound controlled approximation}}
\begin{proof}
We can re-write the inner part of controlled approximation (Eq.\ref{eq: controlled approximation}) in the following:
\[
\begin{aligned}
& \int q_{G_r}\left(y_{d}, \boldsymbol{y}_{-d}\right)\left[f_{r}^{*}\left(\boldsymbol{G}_r^{-1} \boldsymbol{y}\right)-h_{rg_r}^{*}\left(y_{d}\right)\right]^{2} d \boldsymbol{y} \\
=&\int q_{G_r}(y_d)\mathbb{E}_{q_{G_r}(\bm{y}_{-d}|y_d)}[(f_r^*(\bm{G}_r^{-1}\bm{y})-h_{rg_r}(y_d))^2]d\bm{y}\\
\leq & \int q_{G_r}\left(y_{d}, \boldsymbol{y}_{-d}\right) C_{y_{d}}\left\|\boldsymbol{G}_{r\backslash d} \nabla f_{r}^{*}\right\|^{2} d \boldsymbol{y} \\
\leq & C_{sup} \int q_{G_r}\left(y_{d}, \boldsymbol{y}_{-d}\right)\left\|\boldsymbol{G}_{r\backslash d} \nabla f_{r}^{*}\right\|^{2} d \boldsymbol{y} \\
=& C_{s u p} \int q(\boldsymbol{x}) \operatorname{tr}\left[\left(\boldsymbol{G}_{r\backslash d} \nabla f_{r}^{*}\right)\left(\boldsymbol{G}_{r\backslash d} \nabla f_{r}^{*}\right)^{T}\right] d \boldsymbol{x} \\
=& C_{s u p} \operatorname{tr}\left[\boldsymbol{G}_{r\backslash d} \boldsymbol{H}_r \boldsymbol{G}_{r\backslash d}^{T}\right]
\end{aligned}
\]
where the first inequality is directly from lemma \ref{lemma: subspace poincare inequality} and the second inequality is from the definition of $C_{sup}$. 

To minimize this upper bound, we can directly use the theorem 2.1 \cite{sameh2000trace} by setting $B=I$ and $\boldsymbol{X}=\boldsymbol{G}_{r\backslash d}^{T}$. Therefore, we only need to check if $\boldsymbol{G}_{r\backslash d} \boldsymbol{G}_{r\backslash d}^{T}=\boldsymbol{I}$. This is trivial
as $\bm{G}_r$ is an orthogonal matrix. Thus, the proof is complete.
\end{proof}

\section{Theory related to active slice $\bm{r}$}
\label{Appendix: theory related to r}
\subsection{Proof of proposition \ref{prop: lower bound active slice r}}
First, from the theorem \ref{thm: controlled approximation}, we have
\[
\text{PSD}_{r}\geq S_{{r,g}}
\]
Thus, we can establish the following lower bound
\[
\begin{split}
    &S_{{r_1,g_{r_1}}}-S_{{r_2,g_{r_2}}}\\
    \geq&S_{{r_1,g_{r_1}}}-\text{PSD}_{r_2}\\
    =&\underbrace{S_{{r_1,g_{r_1}}}-\text{PSD}_{r_1}}_{\text{controlled approximation}}+\text{PSD}_{r_1}-\text{PSD}_{r_2}
\end{split}
\]
Thus, from theorem \ref{thm: Upper bound controlled approximation}, we can obtain
\[
\begin{split}
    &S_{{r_1,g_{r_1}}}-\text{PSD}_{r_1}\\
    =&-\mathbb{E}_{q}\left[(f_{r_1}^*(\bm{x})-h^*_{r_1g_{r_1}}(\bm{x}^T\bm{g}_{r_1})^2)\right]\\
    \geq& -C_{\text{sup}}\text{tr}(\bm{G}_{r_1\backslash d}\bm{H}_{r_1}\bm{G}_{r_1\backslash d}^T)\\
    =&-C_{\text{sup}}\text{tr}(\bm{H})+\underbrace{\bm{g}_{r_1}^T\bm{H}_{r_1}\bm{g}_{r_1}}_{\geq 0}\\
    \geq& -C_{\text{sup}}\text{tr}(\bm{H}_{r_1})
\end{split}
\]
where the first inequality is from the upper bound of controlled approximation (theorem \ref{thm: Upper bound controlled approximation}) and $\bm{g}_{r_1}^T\bm{H}_{r_1}\bm{g}_{r_1}\geq 0$ is due to the positive semi-definiteness of $\bm{H}_{r_1}$. Assume we have an orthogonal basis $O_{r_1}$ that contains $\bm{r}_1$, thus, for each $\bm{r}\in O_{r_1}$, we have $\text{tr}(\bm{H}_{r})\geq 0$. Then, we can show
\[
\begin{split}
    \text{tr}(\bm{H}_{r_1})&\leq\sum_{\bm{r}\in O_{r_1}}\text{tr}(\bm{H}_{r})\\
    &=\sum_{\bm{r}\in O_{r_1}}\text{tr}(\mathbb{E}_{q}[\nabla_{\bm{x}}\bm{f}^*(\bm{x})\bm{r}\bm{r}^T\nabla_{\bm{x}}\bm{f}^*(\bm{x})^T])\\
    &=\text{tr}(\mathbb{E}_q[\nabla_{\bm{x}}\bm{f}^*(\bm{x})\sum_{\bm{r}\in O_{r_1}}{\bm{r}\bm{r}^T}\nabla_{\bm{x}}\bm{f}^*(\bm{x})^T])\\
    &=\text{tr}(\mathbb{E}_{q}[\nabla_{\bm{x}}\bm{f}^*(\bm{x})\nabla_{\bm{x}}\bm{f}^*(\bm{x})^T])\\
    &=\sum_{i=1}^D\omega_i=\Omega
\end{split}
\]
where $\{\omega_i\}_{i}^D$ are the eigenvalues of $\mathbb{E}_{q}[\nabla_{\bm{x}}\bm{f}^*(\bm{x})\nabla_{\bm{x}}\bm{f}^*(\bm{x})^T]$, $\bm{f}^*(\bm{x})=\bm{s}_p(\bm{x})-\bm{s}_q(\bm{x})$ and $\sum_{\bm{r}\in O_{r_1}}\bm{r}\bm{r}^T=\bm{I}$ since $\bm{r}\in O_{r_1}$ are orthogonal to each other.

Thus, we can substitute it back, we have
\[
S_{{r_1,g_{r_1}}}-S_{{r_2,g_{r_2}}}\geq \text{PSD}_{r_1}-\text{PSD}_{r_2}-C_{\text{sup}}\Omega
\]
\subsection{Proof of theorem \ref{thm: active slice r}}
\begin{proof}
From proposition \ref{prop: optimality of PSD} we know $f_{r}^{*}(\bm{x}) =\left(s_{p}^{r}(x)-s_{q}^{r}(x)\right),$ thus, we can substitute into \textit{PSD} (Eq.\ref{eq: PSD}), we get
$$
\text{PSD}_{r}=\max _{\boldsymbol{r} \in \mathbb{S}^{D}-1} \mathbb{E}_{q}\left[\left(\left(\boldsymbol{s}_{p}(\boldsymbol{x})-\boldsymbol{s}_{q}(\boldsymbol{x})\right)^{T} \boldsymbol{r}\right)^{2}\right]
$$
To maximize it, we consider the following constraint optimization problem.
$$
\max _{\bm{r}} \mathbb{E}_{q}\left[\left(\left(\bm{s}_{p}(x)-\bm{s}_{q}(x)\right)^{T} \bm{r}\right)^{2}\right] \quad \text { s.t. } \quad\|\bm{r}\|^{2}=1
$$
We take the derivative of the corresponding Lagrange multiplier w.r.t. $\boldsymbol{r}$,
$$
\begin{aligned}
& \mathbb{E}_{q}\left[\nabla_{\bm{r}}\left(\left(\boldsymbol{s}_{p}(\boldsymbol{x})-\boldsymbol{s}_{q}(\boldsymbol{x})\right)^{T} \boldsymbol{r}\right)^{2}\right]-2 \lambda \boldsymbol{r}=0 \\
\Rightarrow & \mathbb{E}_{q}\left[\left(\boldsymbol{s}_{p}(\boldsymbol{x})-\boldsymbol{s}_{q}(\boldsymbol{x})\right)^{T} \boldsymbol{r}\left(\boldsymbol{s}_{p}(\boldsymbol{x})-\boldsymbol{s}_{q}(\boldsymbol{x})\right)\right]=\lambda \boldsymbol{r} \\
\Rightarrow & \underbrace{\mathbb{E}_{q}\left[\left(\boldsymbol{s}_{p}(\boldsymbol{x})-\boldsymbol{s}_{q}(\boldsymbol{x})\right)\left(\boldsymbol{s}_{p}(\boldsymbol{x})-\boldsymbol{s}_{q}(\boldsymbol{x})\right)^{T}\right]}_{\boldsymbol{S}=\mathbb{E}_q[\bm{f}^*(\bm{x})\bm{f}^*(\bm{x})^T]} \boldsymbol{r}=\lambda \boldsymbol{r}\\
\Rightarrow&\bm{S}\bm{r}=\lambda\bm{r}
\end{aligned}
$$
This exactly the problem of finding eigenpair for matrix $\bm{S}$. Let's assume $\bm{r}=\bm{v}$ which is the eigenvector of $\bm{S}$ with corresponding eigenvalue $\lambda$. Substituting it back to \textit{PSD}, we have
\[
\begin{split}
&\mathbb{E}_{q}\left[\left(\left(\boldsymbol{s}_{p}(\boldsymbol{x})-\boldsymbol{s}_{q}(\boldsymbol{x})\right)^{T} \boldsymbol{r}\right)^{2}\right] \\
=&\mathbb{E}_{q}\left[\left(\boldsymbol{s}_{p}(\boldsymbol{x})-\boldsymbol{s}_{q}(\boldsymbol{x})\right)^{T} \boldsymbol{r}\left(\boldsymbol{s}_{p}(\boldsymbol{x})-\boldsymbol{s}_{q}(\boldsymbol{x})\right)\right]^{T} \boldsymbol{r} \\
=&\bm{r}^T\bm{S}\bm{r} \\
=&\lambda \boldsymbol{v}^{T} \boldsymbol{v}=\lambda
\end{split}
\]
Thus, to obtain the active slice $\bm{r}$, we only need to find the eigenvector of $\bm{S}$ with the largest eigenvalue. 
\end{proof}
\subsection{Greedy algorithm is eigen-decomposition}
\label{subsec: greedy algorithm}
\begin{corollary}[Greedy algorithm is eigen-decomposition]
Assume the conditions in theorem \ref{thm: active slice r} are satisfied, then finding the orthogonal basis $O_r$ from the greedy algorithm is equivalent to the eigen-decomposition of $\bm{S}$. 
\label{coro: greedy algorithm eigendecomposition}
\end{corollary}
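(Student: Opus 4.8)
The plan is to reduce everything to the Rayleigh-quotient characterisation already established in the proof of Theorem~\ref{thm: active slice r}. Recall from Proposition~\ref{prop: optimality of PSD} that the optimal test function satisfies $f_r^*(\bm{x}) = s_p^r(\bm{x}) - s_q^r(\bm{x}) = \bm{f}^*(\bm{x})^T\bm{r}$, so $\text{PSD}_r = \mathbb{E}_q[(\bm{f}^*(\bm{x})^T\bm{r})^2] = \bm{r}^T \bm{S}\bm{r}$ with $\bm{S} = \mathbb{E}_q[\bm{f}^*(\bm{x})\bm{f}^*(\bm{x})^T]$. The matrix $\bm{S}$ is symmetric and positive semi-definite, being an average of outer products, hence admits an orthonormal eigenbasis $\bm{v}_1,\dots,\bm{v}_D$ with eigenvalues $\lambda_1\geq\dots\geq\lambda_D\geq 0$. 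In these terms the greedy procedure reads: pick $\bm{r}_1 \in \arg\max_{\|\bm{r}\|=1}\bm{r}^T\bm{S}\bm{r}$, and for $i\geq 2$ pick $\bm{r}_i \in \arg\max\{\bm{r}^T\bm{S}\bm{r} : \|\bm{r}\|=1,\ \bm{r}\perp \bm{r}_1,\dots,\bm{r}_{i-1}\}$.

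Next I would prove by induction on $i$ that one may take $\bm{r}_i = \bm{v}_i$ with maximal value $\lambda_i$ at step $i$. The base case $i=1$ is exactly Theorem~\ref{thm: active slice r}. For the inductive step, assume $\{\bm{r}_1,\dots,\bm{r}_{i-1}\}$ spans $\text{span}\{\bm{v}_1,\dots,\bm{v}_{i-1}\}$. Orthogonality to this span forces any feasible $\bm{r}$ to lie in $W_i := \text{span}\{\bm{v}_i,\dots,\bm{v}_D\}$; writing $\bm{r} = \sum_{j\geq i} c_j\bm{v}_j$ with $\sum_{j\geq i} c_j^2 = 1$ gives $\bm{r}^T\bm{S}\bm{r} = \sum_{j\geq i}\lambda_j c_j^2 \leq \lambda_i$, with equality iff $\bm{r}$ lies in the $\lambda_i$-eigenspace inside $W_i$. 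Hence $\bm{v}_i$ is a maximiser achieving value $\lambda_i$, which closes the induction. Therefore the orthogonal basis $O_r = \{\bm{r}_1,\dots,\bm{r}_D\}$ returned by the greedy algorithm is an orthonormal eigenbasis of $\bm{S}$ ordered by decreasing eigenvalue, i.e.\ precisely the eigen-decomposition of $\bm{S}$.

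The only delicate point — rather than a genuine obstacle — is non-uniqueness: when $\bm{S}$ has repeated eigenvalues, the greedy maximiser at a given step is any unit vector in a higher-dimensional eigenspace intersected with $W_i$, so the corollary is to be read as ``every output of the greedy algorithm is a valid eigen-decomposition of $\bm{S}$, and conversely'' rather than as a literal one-to-one correspondence; I would state this caveat explicitly. Otherwise the argument is just the standard Courant--Fischer deflation, and it uses no new analytic input beyond the already-invoked identity $\text{PSD}_r = \bm{r}^T\bm{S}\bm{r}$ from Proposition~\ref{prop: optimality of PSD} and the spectral theorem for the symmetric positive semi-definite matrix $\bm{S}$.
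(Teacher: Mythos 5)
Your proof is correct and takes a genuinely different route from the paper's. The paper argues via Lagrange multipliers: it writes the step-$i$ greedy subproblem as a constrained optimization, takes the stationarity condition $\bm{S}\bm{r}' = \mu\bm{r}' + \gamma\bm{r}$, and shows the extra multiplier $\gamma$ vanishes by pairing with the already-chosen eigenvector $\bm{r}$ and using symmetry of $\bm{S}$, so the stationary points are just eigenvectors and the problem collapses to the single-$\bm{r}$ case of Theorem~\ref{thm: active slice r}. You instead diagonalize $\bm{S}$ up front via the spectral theorem and run the standard Courant--Fischer deflation induction on the Rayleigh quotient $\bm{r}^T\bm{S}\bm{r}$. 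Your approach is the more self-contained of the two: the Lagrange argument by itself only characterizes stationary points and quietly relies on the reader to accept that the largest-eigenvalue one is the maximizer, whereas the Rayleigh-quotient bound $\sum_{j\geq i}\lambda_j c_j^2 \leq \lambda_i$ proves optimality directly. The paper's version is shorter and stays closer to the proof style already used for Theorem~\ref{thm: active slice r}.

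One small repair to your induction: the hypothesis ``$\{\bm{r}_1,\dots,\bm{r}_{i-1}\}$ spans $\operatorname{span}\{\bm{v}_1,\dots,\bm{v}_{i-1}\}$'' can fail when an eigenvalue is repeated across the boundary index $i-1$ (e.g.\ if $\lambda_1 > \lambda_2 = \lambda_3$, the greedy $\bm{r}_2$ may be $(\bm{v}_2+\bm{v}_3)/\sqrt{2}$, so $\operatorname{span}\{\bm{r}_1,\bm{r}_2\} \neq \operatorname{span}\{\bm{v}_1,\bm{v}_2\}$). The robust formulation is: maintain that $\bm{r}_1,\dots,\bm{r}_{i-1}$ are orthonormal eigenvectors of $\bm{S}$ whose eigenvalues are $\lambda_1,\dots,\lambda_{i-1}$ counted with multiplicity. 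Then $\operatorname{span}\{\bm{r}_1,\dots,\bm{r}_{i-1}\}^{\perp}$ is $\bm{S}$-invariant, $\bm{S}$ restricted to it is symmetric with spectrum $\{\lambda_i,\dots,\lambda_D\}$, and the argument closes as you wrote it. You correctly flag this degeneracy as a ``delicate point,'' and with that rephrasing the corollary reads cleanly as ``every greedy output is an orthonormal eigenbasis of $\bm{S}$ ordered by non-increasing eigenvalue.'' (The paper's Lagrange proof has the same soft spot when eigenvalues repeat, so this is not a regression relative to the reference.)
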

\begin{proof}
Assume we have obtained the active slice $\bm{r}$ from theorem \ref{thm: active slice r}, thus, we have $\bm{S}\bm{r}=\lambda\bm{r}$.
The greedy algorithm for $\bm{r}'$ can be translated into the following constrained optimization 
\[
\begin{split}
    &\max_{\bm{r}'}\quad \mathbb{E}_{q}\left[\left(\left(\bm{s}_{p}(x)-\bm{s}_{q}(x)\right)^{T} \bm{r}'\right)^{2}\right]\\
    s.t.\quad& ||\bm{r}'||^2=1\\
    &\bm{r}^T\bm{r}'=0
\end{split}
\]
By using Lagrange multipliers ($\mu$, $\gamma$), and then take derivative w.r.t. $\bm{r}'$,
\[
\bm{S}\bm{r}'=\mu\bm{r}'+\gamma \bm{r}
\]
Then taking the inner product with $\bm{r}$ in both side, and notice $\bm{S}$ is a symmetric matrix, we obtain
\[
\begin{split}
\gamma&=\langle\bm{S}\bm{r}',\bm{r}\rangle\\
&=\langle\bm{r}',\bm{S}^T\bm{r}\rangle\\
&=\langle\bm{r}',\lambda\bm{r}\rangle=0
\end{split}
\]
Therefore, the constrained optimization is the same as the one in theorem \ref{thm: active slice r}, which is to find a eigenvector of $\bm{S}$ that is different from $\bm{r}$. Repeat the above procedure, the final resulting $O_r$ is a group of eigenvectors of $\bm{S}$. 
\end{proof}

\section{Experiment Details}
\label{appendix: Experiment Details}
\begin{figure}
    \centering
    \includegraphics[scale=0.19]{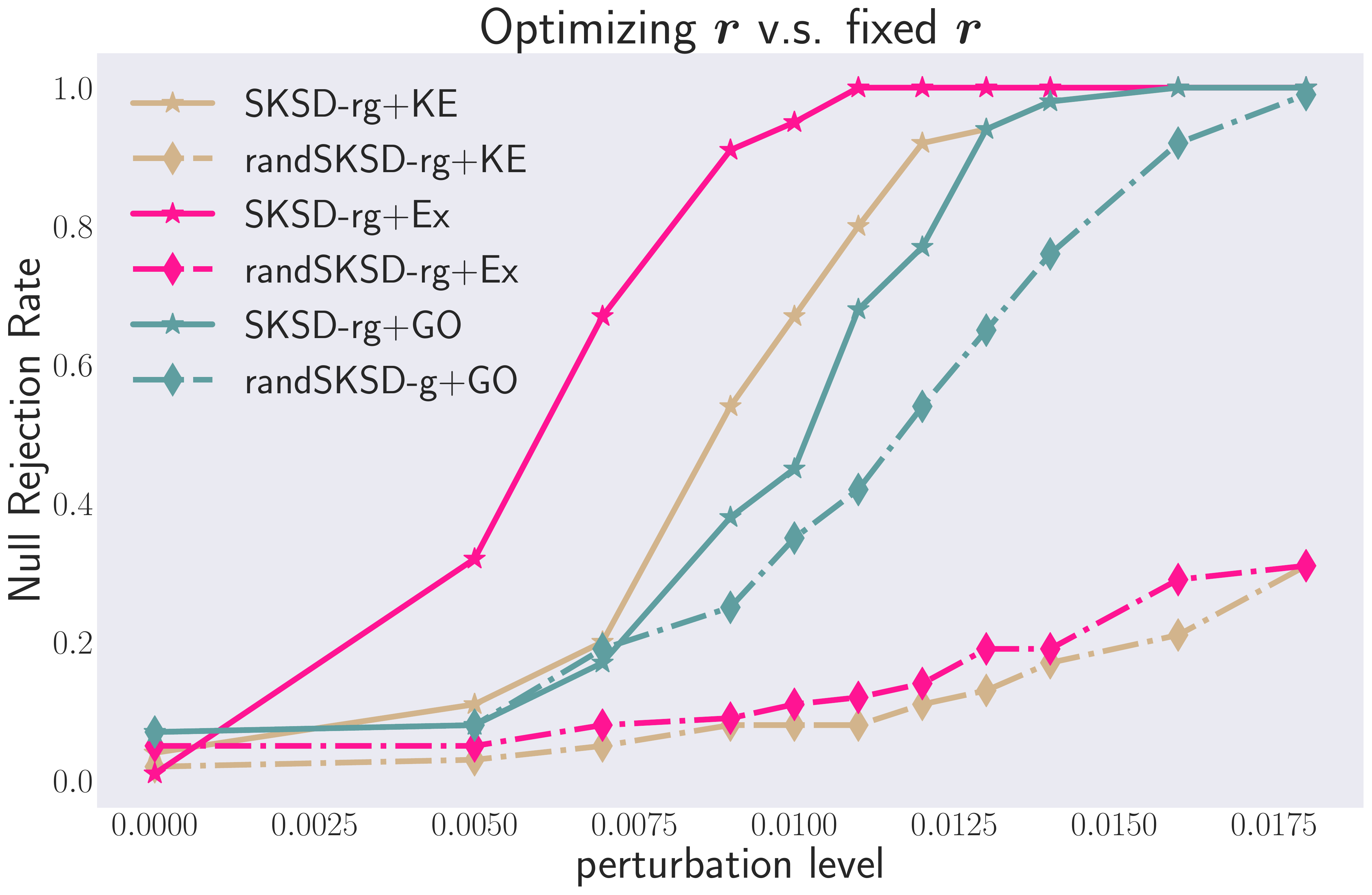}
  \caption{The test power difference with good $\bm{r}$ and fixed $\bm{r}$.}
    \label{fig: optimzie R}
\end{figure}
\begin{figure*}[t]
    \centering
    \includegraphics[scale=0.14]{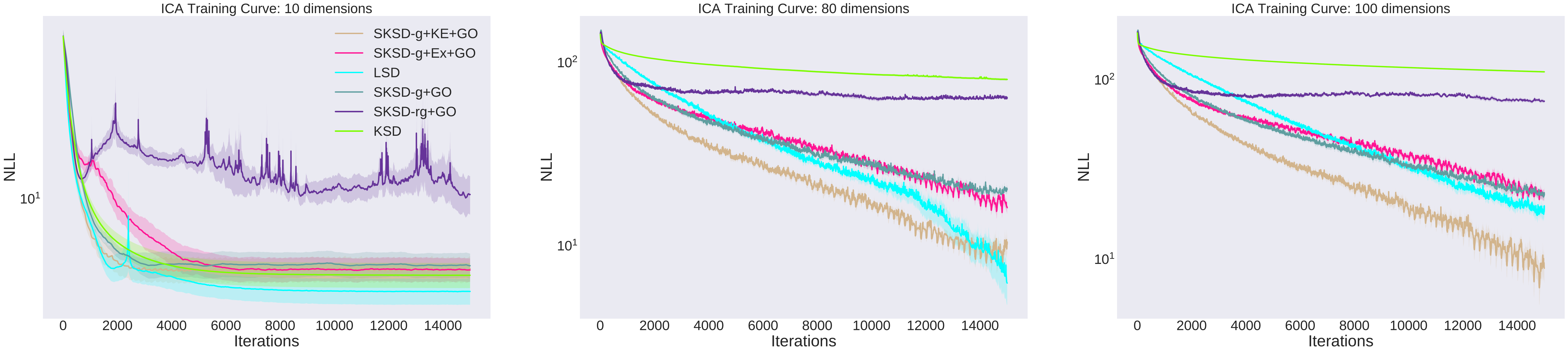}
    \caption{Training curve of ICA model with test NLL for different dimensions. }
    \label{fig: Remaining ICA curve}
\end{figure*}
\begin{figure}
    \centering
    \includegraphics[scale=0.19]{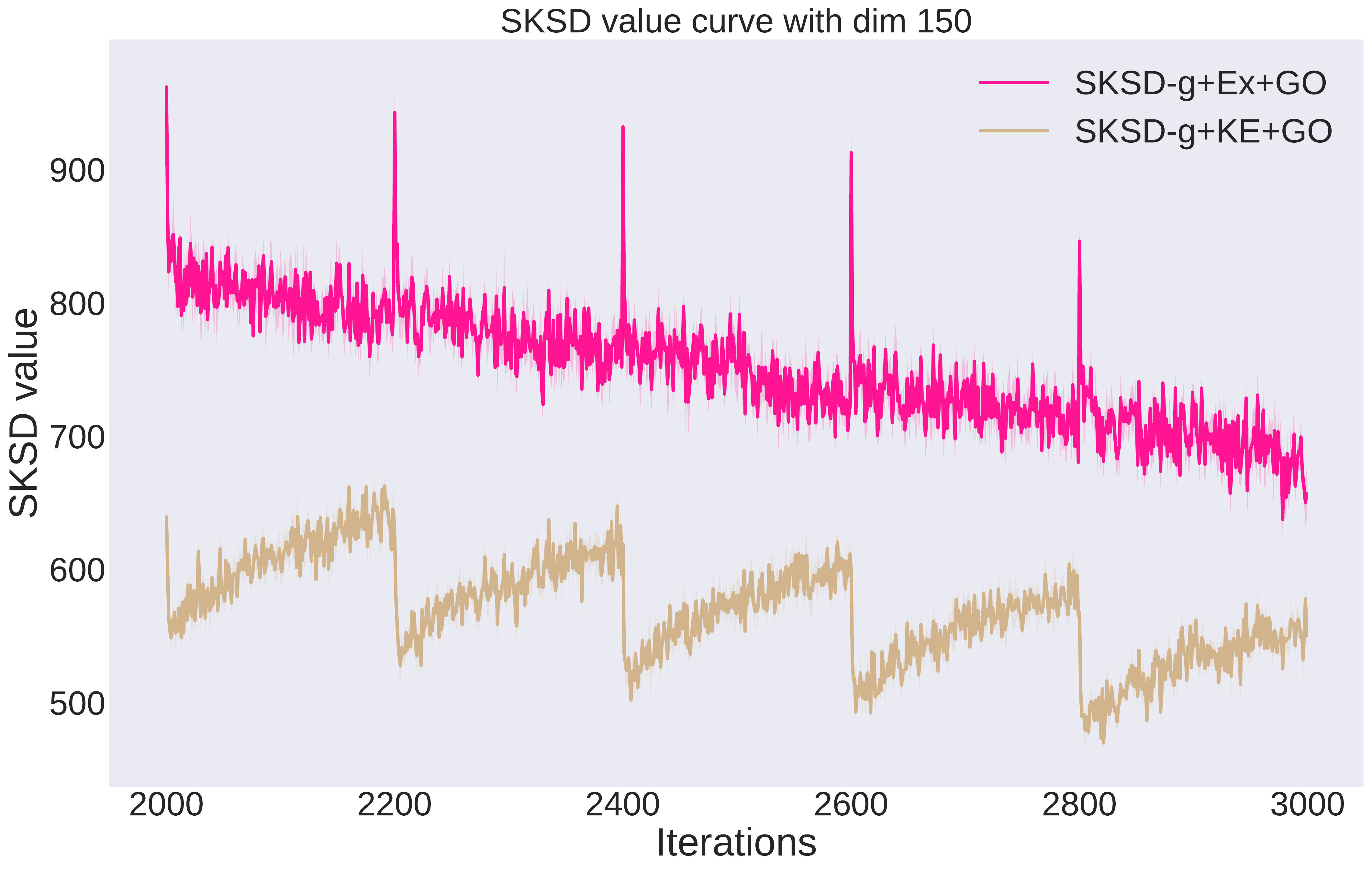}
    \caption{SKSD value curve with 150 dimensional ICA during iteration 5000 to 6000}
    \label{fig:ICA Value Curve}
\end{figure}
For all experiments in this paper, we use RBF kernel with median heuristics. 
\subsection{Benchmark GOF test}
\label{appendix subsec: benchmark GOF test}
For gradient based optimization, we use Adam \citep{kingma2014adam} with learning rate $0.001$ and $\beta=(0.9,0.99)$. We use random initialization for SKSD-g+GO by drawing $\bm{g}_r$ from a Gaussian distribution before normalizing them to unit vectors. For kernel smooth and gradient estimator, we use RBF kernel with median heuristics. Although the algorithm \ref{alg: active slice algorithm} states that small Gaussian noise are needed for active slices, in practice, we found that active slices still have the satisfactory performance without the noise. 

The significance level for GOF test $\alpha=0.05$, and the dimensions of the benchmark problems grow from $2$ to 100. We use $1000$ bootstrap samples to estimate the threshold and run $100$ trials for each benchmark problems.
\subsection{RBM GOF test}
\label{appendix subsec: RBM experiment}
We set significance level $\alpha=0.05$ and use $1000$ bootstrap samples to compute the threshold. For methods that require training (\textit{SKSD} based method), we need to collect some training samples. Following the same settings as \cite{gong2021sliced}, to avoid over-fitting to small training set, we collect the pseudo-samples during the early burn-in stage. Note that these pseudo-samples should not be used for testing, as they are not drawn from the $q$. We collect $2000$ samples. For gradient based optimization, we use the same optimizer as benchmark GOF test with the same hyper-parameters. The batch size is 100. For initialization of \textit{SKSD+GO}, we found that if the slices are initialized randomly, the gradient optimization fails to find meaningful slices within a reasonable amount of time, therefore, we have initialize the $\bm{r}$ and $\bm{g}_r$ as one-hot vectors and set $\bm{r}=\bm{g}_r$. For pruning ablation study, if the pruning level is set to $50$, we initialize $\bm{r}$ and $\bm{g}_r$ to be the identity matrix. The default number of gradient optimization for \textit{SKSD+GO} is $50$. For active slice method, we directly use the active slices without any further optimizations. We run 100 trials for GOF test with 1000 test samples per trial.

\cite{gong2021sliced} reports \textit{SKSD-rg+GO} has near optimal test power at perturbation level $0.01$. The performance difference is because they train the \textit{SKSD-rg} with $200$ batch sizes per burn-in step. Namely, the training set size are $200\times 2000=400000$, which is 200 times larger than ours. They also run $2000$ iterations, which is equivalent to $100$ epochs in our settings.

Figure \ref{fig: optimzie R} shows the test power difference with optimized $\bm{r}$ and fixed $\bm{r}$. The legend with \emph{rand} annotation implies we randomly initialized $\bm{r}$ as one-hot vectors and fix them while updating $\bm{g}_r$ using \textit{GO} or active slice. Without \emph{rand}, it means both $\bm{r}$ and $\bm{g}_r$ are optimized. We only use 3 $\bm{r}$ for active slice method and 50 for gradient-based counterpart. For active slice method with pruning (\textit{randSKSD-g+Ex} or \textit{randSKSD-g+KE}), despite we show that any finite random slices define a valid discrepancy, it is clear that the performance is quite poor with random initialized $\bm{r}$'s. It indicates that using active slices of $\bm{g}_r$ alone cannot compensate the poor discriminating power of the random $\bm{r}$'s. Although \textit{SKSD-rg+GO} demonstrates an advantage compared to \textit{randSKSD-g+GO}, the performance boost is less clear compared to active slices method. This is because we do not use any pruning for \textit{randSKSD-g+GO}, and adopt orthogonal basis $O_r=\bm{I}$. Despite the orthogonal basis may not capture the important directions, they can provide reasonable discriminating power due to their orthogonality from each other. In summary, using good directions for $\bm{r}$ is advantageous compared to fixed $\bm{r}$. 
\subsection{Model learning: Training ICA}
\label{appendix subsec: model learning ICA}
We use Adam optimizer for the model and slice directions with learning rate $0.001$ and $\beta=(0.9,0.99)$. We totally run $15000$ iterations. The batch size is 100. We evaluate our method in dimension 10, 80, 100 and 150. For more stable comparisons, we initialized the weight matrix $\bm{W}$ until its conditional number is smaller than its dimensions. For active slice method, we use randomly sampled $3000$ data from training set to estimate the score difference and the matrices used for eigen-decomposition. 

For \textit{SKSD-rg+GO}, we initialize the $\bm{r}$ to be a group of one-hot vectors to form identity matrix and $\bm{g}_r=\bm{r}$. We use an adversarial training procedure that updates both $\bm{r}$ and $\bm{g}_r$ using Adam once per iteration before we update the model. For \textit{SKSD-g+GO}, we fix the orthogonal basis $O_r$ to be the identity matrix and only update $\bm{g}_r$. Each results are the average of 5 runs of training. 

As for the reason why \textit{SKSD-g+Ex+GO} performs worse than \textit{+KE+GO}, we suspect that \textit{+Ex} only focus on directions with high discriminating power. However, high discriminating power is not necessarily good for model learning. It may focus on very small area that is different from the target but ignore the larger area with small difference. Because our algorithm for finding basis is greedy, this means it can ignore the generally good directions if they are not orthogonal to the directions with high discriminating power. 

From figure \ref{fig:ICA Value Curve}, we can observe there is a spike of \textit{SKSD-g+Ex+GO} value at every 200 iterations due to the new active slices found at the beginning of each training epoch. However, the value drops significantly fast to the one before new active slices. This indicates the \textit{Ex} indeed finds directions with large discriminating power but they do not represents good directions for learning due to the fast drop of SKSD values. On the other hand, the directions provided by KE does not give the highest discriminating power, but it can find generally good directions of $\bm{g}_r$ using \textit{GO} refinement steps within a few iterations. This means the directions found by \textit{KE} indeed represents good directions for learning as the model cannot decrease this value quickly. We guess this is due to the smooth estimation of KE, where very small areas with high discriminating power are smoothed out. 

Figure \ref{fig: Remaining ICA curve} shows the ICA training curve of other dimensions. We can observe the convergence speed of LSD deteriorates as the dimension increases due to the poor test function in early training stage, whereas \textit{SKSD-g+KE+GO} maintains the fastest convergence in high dimensions.

\section{Perturbation of eigenvectors}
\label{Appendix: Perturb}

The active slice method (algorithm \ref{alg: active slice algorithm}) is mainly based on the eigenvalue-decomposition of matrix $\pmb{H}$, where 
\begin{equation*}
    \pmb{H}=\int q(\pmb{x})\nabla_{\pmb{x}}f_r^*(\pmb{x})\nabla_{\pmb{x}}f_r^*(\pmb{x})^Td\pmb{x}
\end{equation*}

Obtaining the analytic form of $\pmb{H}$ involves complicated integration, so Monte Carlo estimation is often used for approximation. We denote it as $\hat{\pmb{H}}$, with M being the number of samples:
\begin{equation}
    \hat{\pmb{H}} = \frac{1}{M} \sum_{i=1}^M [\nabla_{\pmb{x}_i}f_r^*(\pmb{x}_i)\nabla_{\pmb{x}_i}f_r^*(\pmb{x}_i)^T]
\end{equation}

Let $\pmb{g}$ be the top eigenvector of $\pmb{H}$ and $\hat{\pmb{g}}$ be the top eigenvector of $\hat{\pmb{H}}$. Let $\lambda_1$, $\lambda_2$ be the top two eigenvalues of $\pmb{H}$. Assuming the error matrix $\pmb{E}=\hat{\pmb{H}} - \pmb{H}$ is deterministic, \citep{yu2015useful} proved that 
\begin{equation}
    ||\pmb{g}\pmb{g}^T(\pmb{I}-\hat{\pmb{g}}\hat{\pmb{g}}^T)||_F \leq \frac{2||\pmb{E}||_{op}}{\lambda_1 - \lambda_2}
\label{eq: eigenvector bound}
\end{equation}
where we define the operator norm for a given $n \times n$ matrix $\pmb{A}$ as
\begin{equation*}
    ||\pmb{A}||_{op} = \sup \{||\pmb{A}\pmb{x}|| : x \in \mathbb{R}^n \ \text{with} \ ||\pmb{x}|| = 1 \}
\end{equation*}
We also have (with proof below)
\begin{equation}
    \min_{\epsilon \in \{ -1, 1 \}} ||\pmb{g} - \epsilon \hat{\pmb{g}} ||_2 \leq \sqrt{2} ||\pmb{g}\pmb{g}^T(\pmb{I}-\hat{\pmb{g}}\hat{\pmb{g}}^T)||_F
\label{eq: sin dist bound}
\end{equation}

Inequality \ref{eq: eigenvector bound} and \ref{eq: sin dist bound} imply that,
\begin{equation}
    \min_{\epsilon \in \{ -1, 1 \}} ||\pmb{g} - \epsilon \hat{\pmb{g}} ||_2 \leq 2^{3/2} \frac{||\hat{\pmb{H}} - \pmb{H}||_{op}}{\lambda_1 - \lambda_2}
    \label{eq: eigenvector distance}
\end{equation}

\subsection{Proof of inequality \ref{eq: sin dist bound}}
\begin{prop}
    Let $\bm{S}$ and $\bm{U}$ be two matrices with orthonormal columns and equal rank $r$. Let $\bm{\Pi}_S$ ($resp.$ $\bm{\Pi}_U$) indicates the projection matrix to the column space of $\bm{S}$ ($resp.$ $\bm{U}$). Then
    \begin{equation}
        \min_{O \in \mathbb{R}^{r\times r} \operatorname{orthogonal}} ||\bm{S}-\bm{U}\bm{O}||_F \leq \sqrt{2} ||\bm{\Pi}_S(\bm{I}-\bm{\Pi}_U)||_F
    \end{equation}
\end{prop}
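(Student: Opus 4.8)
The plan is to reduce both sides of the inequality to functions of the principal angles between the column spaces of $\bm{S}$ and $\bm{U}$. First I would form the singular value decomposition $\bm{U}^{T}\bm{S}=\bm{W}_{1}\bm{D}\bm{W}_{2}^{T}$ with $\bm{W}_{1},\bm{W}_{2}\in\mathbb{R}^{r\times r}$ orthogonal and $\bm{D}=\operatorname{diag}(\cos\theta_{1},\dots,\cos\theta_{r})$. The key preliminary observation is that this parametrization is legitimate: since $\bm{S}$ and $\bm{U}$ have orthonormal columns, $\|\bm{U}^{T}\bm{S}\|_{op}\le\|\bm{U}\|_{op}\|\bm{S}\|_{op}=1$, so the singular values of $\bm{U}^{T}\bm{S}$ lie in $[0,1]$ and can be written as cosines of angles $\theta_{i}\in[0,\pi/2]$ (the principal angles).

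Next I would evaluate the right-hand side. Using $\bm{\Pi}_{S}=\bm{S}\bm{S}^{T}$, $\bm{\Pi}_{U}=\bm{U}\bm{U}^{T}$, the symmetry and idempotency of projection matrices, cyclic invariance of the trace, and $\bm{S}^{T}\bm{S}=\bm{U}^{T}\bm{U}=\bm{I}_{r}$, a short computation gives $\|\bm{\Pi}_{S}(\bm{I}-\bm{\Pi}_{U})\|_{F}^{2}=\operatorname{tr}\!\big(\bm{\Pi}_{S}(\bm{I}-\bm{\Pi}_{U})\big)=r-\|\bm{U}^{T}\bm{S}\|_{F}^{2}=\sum_{i=1}^{r}\sin^{2}\theta_{i}$, the last equality because the squared singular values of $\bm{U}^{T}\bm{S}$ sum to $\sum_i\cos^2\theta_i$.

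Then I would solve the orthogonal Procrustes problem on the left. Expanding the Frobenius norm and using $\bm{S}^{T}\bm{S}=\bm{U}^{T}\bm{U}=\bm{I}_{r}$ yields $\|\bm{S}-\bm{U}\bm{O}\|_{F}^{2}=2r-2\operatorname{tr}(\bm{O}^{T}\bm{U}^{T}\bm{S})$. Substituting the SVD, $\operatorname{tr}(\bm{O}^{T}\bm{U}^{T}\bm{S})=\operatorname{tr}(\bm{Z}\bm{D})$ with $\bm{Z}=\bm{W}_{2}^{T}\bm{O}^{T}\bm{W}_{1}$ orthogonal; since $|Z_{ii}|\le 1$, this trace is at most $\sum_{i}\cos\theta_{i}$, attained at $\bm{Z}=\bm{I}$, i.e.\ $\bm{O}=\bm{W}_{1}\bm{W}_{2}^{T}$. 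Hence $\min_{\bm{O}}\|\bm{S}-\bm{U}\bm{O}\|_{F}^{2}=2\sum_{i=1}^{r}(1-\cos\theta_{i})$. Finally I would compare term by term: because $\cos\theta_{i}\ge 0$ we have $1-\cos\theta_{i}\le(1-\cos\theta_{i})(1+\cos\theta_{i})=\sin^{2}\theta_{i}$, so $\min_{\bm{O}}\|\bm{S}-\bm{U}\bm{O}\|_{F}^{2}\le 2\sum_{i}\sin^{2}\theta_{i}=2\|\bm{\Pi}_{S}(\bm{I}-\bm{\Pi}_{U})\|_{F}^{2}$, and taking square roots proves the proposition. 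The only step requiring care — the ``main obstacle'' — is the nonnegativity of the $\cos\theta_{i}$, which is precisely what makes the term-by-term bound $1-\cos\theta_{i}\le\sin^{2}\theta_{i}$ valid; everything else is routine linear algebra.

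Inequality \ref{eq: sin dist bound} then follows by applying the proposition with $\bm{S}=\bm{g}$, $\bm{U}=\hat{\bm{g}}$ and $r=1$, noting that the $1\times 1$ orthogonal matrices are precisely $\pm 1$ and that $\bm{\Pi}_{S}=\bm{g}\bm{g}^{T}$, $\bm{\Pi}_{U}=\hat{\bm{g}}\hat{\bm{g}}^{T}$, which turns the statement into $\min_{\epsilon\in\{-1,1\}}\|\bm{g}-\epsilon\hat{\bm{g}}\|_{2}\le\sqrt{2}\,\|\bm{g}\bm{g}^{T}(\bm{I}-\hat{\bm{g}}\hat{\bm{g}}^{T})\|_{F}$.
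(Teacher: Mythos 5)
Your argument is correct and follows essentially the same route as the paper: both take the SVD of $\bm{S}^{T}\bm{U}$ (equivalently $\bm{U}^{T}\bm{S}$), express both sides in terms of its singular values $\sigma_{i}\in[0,1]$, and finish with the elementary bound $1-\sigma_{i}\le 1-\sigma_{i}^{2}$, which is exactly your term-by-term inequality $1-\cos\theta_{i}\le\sin^{2}\theta_{i}$. Your principal-angle phrasing is only cosmetic, and incidentally your sign is the correct one --- the paper's intermediate claim ``$\operatorname{Tr}(\bm{\Sigma})\le\operatorname{Tr}(\bm{\Sigma}^{2})$'' is a typo for $\operatorname{Tr}(\bm{\Sigma})\ge\operatorname{Tr}(\bm{\Sigma}^{2})$, which is what the subsequent chain of inequalities actually requires.
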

When $r=1$, we denote $\pmb{O}$ as $\epsilon$. Following the definition of orthogonal matrix, we have $\epsilon^T \epsilon = \epsilon^2 = 1$, hence $\epsilon \in \{ -1, 1 \}$. Substituting $\bm{S} = \pmb{g}$ and $\pmb{U} = \hat{\pmb{g}}$, we get inequality \ref{eq: sin dist bound}.
\begin{proof}
Let $\bm{W} \bm{\Sigma} \bm{V}^T$ be a singular value decomposition of $\bm{S}^T \bm{U}$, and use $\bm{O}=\bm{V}\bm{W}^T$.Now,
\begin{align*}
    ||\bm{S}-\bm{U}\bm{O}||^2_F &= \operatorname{Tr} ((\bm{S}-\bm{U}\bm{O})^T (\bm{S}-\bm{U}\bm{O})) \\
    &= ||\bm{S}||^2_F + ||\bm{U}||^2_F - 2\operatorname{Tr} (\bm{O}\bm{S}^T\bm{U}) \\
    &= 2r - 2\operatorname{Tr}(\bm{\Sigma})
\end{align*}
where r is the rank of $\bm{S}$ and $\bm{U}$. On the other hand, by Pythagora's theorem
\begin{align*}
    ||\bm{\Pi}_S(\bm{I}-\bm{\Pi}_U)||_F^2 &= ||\bm{\Pi}_S||_F^2 - ||\bm{\Pi}_S\bm{\Pi}_U||_F^2 \\
    &= r - ||\bm{\Pi}_S\bm{\Pi}_U||_F^2 \\
    &= r - ||\bm{SS}^T \bm{UU}^T||_F^2 \\
    &= r - \operatorname{Tr}(\bm{\Sigma}^2)
\end{align*}
We claim that the entries of $\bm{\Sigma}$ are bounded above by 1, such that $\operatorname{Tr}(\bm{\Sigma}) \leq \operatorname{Tr} (\bm{\Sigma}^2)$, then
\begin{align*}
    \min_{O \in \mathbb{R}^{r\times r} \operatorname{orthogonal}} ||\bm{S}-\bm{U}\bm{O}||_F^2 
    &\leq 2r-2\operatorname{Tr}(\bm{\Sigma}) \\
    &\leq 2r - 2\operatorname{Tr}(\bm{\Sigma}^2) \\
    &= 2||\bm{\Pi}_S(\bm{I}-\bm{\Pi}_U)||_F^2
\end{align*}
Taking the square root of both sides yields the desired inequality. To prove the claim, let $\bm{\omega} = [\bm{S}, \bm{S}']$ and $\Tilde{\bm{U}} = [\bm{U}, \bm{U}']$ be orthogonal matrices. Then $\bm{S}^T \bm{U}$ is a diagonal block in $\bm{\omega}^T \Tilde{\bm{U}}$. It follows that $\max_i \Sigma_{i,i} = ||\bm{S}^T \bm{U}||_{op} \leq ||\bm{\omega}^T \Tilde{\bm{U}}||_{op} = 1$
\end{proof}
From Eq.\ref{eq: eigenvector distance}, we can see if the top two eigenvalues are similar, then their corresponding eigenvectors can be arbitrary different. In terms of our active slice algorithm, it means if the most discriminating directions for two distributions $q$, $p$ have similar "magnitude of difference", our algorithm may fail under Monte-carlo approximation. On the other hand, if the eigenvalues are different, Eq.\ref{eq: eigenvector distance} guarantees that eigenvectors from $\hat{\bm{H}}$ are not far-away from the truth. 



\end{document}